\pdfoutput=1

\documentclass[sigconf]{}



\AtBeginDocument{%
 \providecommand\BibTeX{{%
 \normalfont B\kern-0.5em{\scshape i\kern-0.25em b}\kern-0.8em\TeX}}}


\acmYear{2021}\copyrightyear{2021}
\setcopyright{acmcopyright}
\acmConference[MobiHoc '21]{The Twenty-second International Symposium on Theory, Algorithmic Foundations, and Protocol Design for Mobile Networks and Mobile Computing}{July 26--29, 2021}{Shanghai, China}
\acmBooktitle{The Twenty-second International Symposium on Theory, Algorithmic Foundations, and Protocol Design for Mobile Networks and Mobile Computing (MobiHoc '21), July 26--29, 2021, Shanghai, China}
\acmPrice{15.00}
\acmDOI{10.1145/3466772.3467038}
\acmISBN{978-1-4503-8558-9/21/07}



\usepackage{bm}
\usepackage[ruled,linesnumbered]{algorithm2e}
\usepackage{thmtools, thm-restate}
\usepackage{enumerate}
\usepackage{multirow}
\usepackage{subfigure}
\usepackage[switch]{lineno}
\usepackage{amsmath}

\DeclareMathOperator*{\argmin}{arg\,min}
\usepackage{url}

\newtheorem{assumption}{Assumption}

\newtheorem{definition}{Definition}
\newtheorem{lemma}{Lemma}
\newtheorem{theorem}{Theorem}
\newtheorem{corollary}{Corollary}

\newcommand{\etal}{\textit{et al}.}


\begin{document}
\frenchspacing
\allowdisplaybreaks[4]
\title{Inexact-ADMM Based Federated Meta-Learning for Fast and Continual Edge Learning}


\author{Sheng Yue}
\affiliation{%
	\institution{Central South University}
	\country{}
}
\affiliation{
    \institution{Arizona State University}
    \country{}
}
\email{sheng.yue@csu.edu.cn}

\author{Ju Ren}
\authornote{Corresponding author.}
\affiliation{%
	\institution{Central South University}
	\country{}
}
\email{renju@csu.edu.cn}

\author{Jiang Xin}
\affiliation{%
	\institution{Central South University}
	\country{}
}
\email{xinjiang@csu.edu.cn}

\author{Sen Lin}
\affiliation{%
	\institution{Arizona State University}
	\country{}
}
\email{sen.lin.1@asu.edu}

\author{Junshan Zhang}
\affiliation{%
	\institution{Arizona State University}
	\country{}
}
\email{junshan.zhang@asu.edu}



\begin{abstract}
 In order to meet the requirements for performance, safety, and latency in many IoT applications, intelligent decisions must be made right here right now at the network edge. However, the constrained resources and limited local data amount pose significant challenges to the development of edge AI. To overcome these challenges, we explore continual edge learning capable of leveraging the knowledge transfer from previous tasks. Aiming to achieve fast and continual edge learning, we propose a platform-aided federated meta-learning architecture where edge nodes collaboratively learn a meta-model, aided by the knowledge transfer from prior tasks. The edge learning problem is cast as a regularized optimization problem, where the valuable knowledge learned from previous tasks is extracted as regularization. Then, we devise an  ADMM based federated meta-learning algorithm, namely ADMM-FedMeta, where ADMM offers a natural mechanism to decompose the original problem into many subproblems which can be solved in parallel across edge nodes and the platform. Further,  a variant of inexact-ADMM method is employed where the subproblems are `solved' via linear approximation as well as Hessian estimation to reduce the computational cost per round to $\mathcal{O}(n)$. We provide a comprehensive analysis of ADMM-FedMeta,  in terms of the convergence properties, the rapid adaptation performance, and the forgetting effect of prior knowledge transfer, for the general non-convex case. Extensive experimental studies demonstrate the effectiveness and efficiency of ADMM-FedMeta and showcase that it substantially outperforms the existing baselines.

\end{abstract}

\begin{CCSXML}
<ccs2012>
   <concept>
       <concept_id>10003752.10010070.10010071.10010082</concept_id>
       <concept_desc>Theory of computation~Multi-agent learning</concept_desc>
       <concept_significance>500</concept_significance>
       </concept>
   <concept>
       <concept_id>10003033.10003099</concept_id>
       <concept_desc>Networks~Network services</concept_desc>
       <concept_significance>300</concept_significance>
       </concept>
 </ccs2012>
\end{CCSXML}

\ccsdesc[500]{Theory of computation~Multi-agent learning}
\ccsdesc[300]{Networks~Network services}

\keywords{edge intelligence, continual learning, federated meta-learning, regularization, ADMM}



\maketitle

\section{Introduction}
The past few years have witnessed an explosive growth of Internet of Things (IoT) devices. In many of these IoT applications, decisions must be made in  real time to meet the requirements for safety, accuracy, and performance \cite{zhang2020data}. A general consensus is that the conventional cloud-based approach would not work well in these applications, calling for edge intelligence or edge AI \cite{zhou2019edge,park2019wireless}. Built on a synergy of edge computing and AI, edge intelligence is expected to push the frontier of model training and inference processes to the network edge in the physical proximity of IoT devices and data sources. Nevertheless, it is highly nontrivial for a single edge node to achieve real-time edge intelligence since AI model training usually requires extensive computing resources and a large number of data samples. To tackle these challenges, we resort to continual learning  capable of leveraging the knowledge transfer from previous tasks in the cloud  or by other edge nodes. Simply put, continual learning (CL)  is a machine learning paradigm  that is designed to sequentially learn
from data samples corresponding to different tasks \cite{parisi2019continual}. Rather than learning the new model from scratch, CL aims to design algorithms leveraging knowledge transfer from pre-trained models to the new learning task, assuming that the
training data of previous tasks are  unavailable for the newly coming task (this is the case for edge learning).

To facilitate edge learning, collaborative learning has recently been proposed to leverage the model knowledge distillation, including cloud-edge collaboration and edge-edge collaboration. More specifically, a distributionally robust optimization based edge learning framework has been introduced  to build a cloud-edge synergy between the pre-trained model in the cloud and the local data samples at the edge  \cite{zhang2020data}.  Along a different avenue, 
building on the recent advances in meta-learning \cite{finn2017model,nichol2018first} and federated learning \cite{mcmahan2017communication}, a significant body of work has been devoted to federated meta-learning  \cite{jiang2019improving,chen2018federated,zhengfederated,lin2020collaborative}  and personalized federated learning \cite{fallah2020personalized,wu2020personalized}, under a common theme of fostering edge-edge collaboration. In particular,  federated meta-learning aims to learn a good model initialization (meta-model) across source edge nodes, such that the model of the new task  can be learned via fast adaptation
from the meta-initialization with only a few data samples at the target edge node.

Most of the existing works on  federated meta-learning  focus on the learning from a given set of tasks, each with its  training data, but have not addressed the well-known \emph{catastrophic forgetting} issue   in  continual learning \cite{french1999catastrophic} \cite{parisi2019continual}. Further, the performance of the fast adaptation depends on the similarity among tasks \cite{lin2020collaborative,fallah2020personalized}. As a result,  the meta-model obtained via federated meta-learning may not work well when the tasks on the target node are dissimilar to those at the source nodes. It is also worth noting that most of  the existing federated meta-learning algorithms are  gradient-based, which may suffer from some limitations such as vanishing gradients and sensitivity to poor conditioning \cite{wang2019admm}. It has been observed in practice that these gradient-based algorithms often exhibit slow convergence in training the meta-model, especially on complex tasks \cite{fallah2020personalized,jiang2019improving,chen2018federated},  resulting in low communication efficiency and high computational cost.

To tackle the issues noted above, in this paper, we study continual edge learning via federated meta-learning with regularization. Notably, 
regularization-based methods have been widely used in  continual learning \cite{kirkpatrick2017overcoming,zenke2017continual,schwarz2018progress} and transfer learning \cite{takada2020transfer,si2009bregman}. Inspired by theoretical neuroscience models via imposing constraints on the update of the neural weights \cite{barros2018expectation}, regularization approaches can help to alleviate catastrophic forgetting \cite{parisi2019continual}. Meanwhile, valuable knowledge learned from previous tasks can be extracted as regularization to improve the training speed and the performance of the new task (so-called ``positive forward transfer'' \cite{lopez2017gradient}). Accordingly, we cast the federated meta-learning problem as a regularized stochastic optimization problem, using Bregman divergence \cite{bregman1967relaxation}
to define the regularization. Further, to reduce the computational cost  and  to facilitate collaborative learning, we employ the alternative direction method of multipliers (ADMM) technique to decompose the problem into a set of subproblems that can be solved in parallel across edge nodes and the platform. In particular, by ``decoupling the regularizer'' from the computation at local edge nodes,  it suffices to run the  regularization only in the platform for global aggregation. Observe that the conventional ADMM technique requires the exact solutions to a set of (possibly non-convex) subproblems during each iteration, incurring a possibly high computational cost. To overcome this challenge, we develop a variant of the inexact-ADMM algorithm for the regularized federated meta-learning problem, namely ADMM-FedMeta, where we use linear approximation in each subproblem\footnote{As shown in the proof of convergence of ADMM-FedMeta, it is unnecessary to obtain the exact solutions in each iteration, and this is the underlying rationale of the inexact-ADMM.}, as well as Hessian estimation, and then transform it into a quadratic form that can be solved with a closed-form solution, thus achieving computational complexity of $\mathcal{O}(n)$ per round, with $n$ being the model dimension. 



We note that  the error induced by linear approximation and Hessian estimation, complicates the proof of the convergence of the proposed algorithm, and the  existing   results \cite{wang2019global,barber2020convergence,hong2016convergence} cannot be applied directly, simply because   the \emph{sufficient descent} condition of the Lagrangian function is violated. In this paper, we develop a new technical path to resolve this issue and establish the convergence guarantee for the general non-convex case. Further, we rigorously show that our method can mitigate the catastrophic forgetting and alleviate the performance degradation due to the dissimilarity between the source nodes and the target node.
Besides, different from the previous approaches \cite{lin2020collaborative,fallah2020personalized}, our algorithm can converge under mild conditions, i.e., without regular \emph{similarity} assumptions on the training nodes. Therefore, it can be applied to unbalanced and heterogeneous local datasets, unleashing the potential in dealing with the inherent challenges in federated learning. 

The main contributions of this work are summarized as follows:
\begin{itemize}
\item Aiming to facilitate fast and continual edge learning, we propose a  platform-aided federated-meta learning architecture where edge nodes join forces to learn a meta-model with the knowledge transfer from previous tasks. We cast the edge learning problem as a regularized optimization problem, in which the transferred knowledge is in the form of regularization using Bregman divergence. We  devise an inexact-ADMM based algorithm, called ADMM-FedMeta, where the ADMM technique is employed to decompose the problem into a set of subproblems that can be solved in parallel across edge nodes and the platform, and also it suffices to run the  regularization only in the platform for global aggregation. Further,  a variant of the inexact-ADMM method is devised  where 
the subproblems are `solved' via linear approximation as well as Hessian estimation to reduce the computational cost of per round to $\mathcal{O}(n)$, achieving lower computational complexity compared to most of the existing methods.

\item We carry out a comprehensive  analysis of the proposed algorithm for the general non-convex case, where we establish the convergence and characterize the  performance of  fast adaptation using local samples at the target node. We also quantify the forgetting effect of model knowledge transferred from previous tasks for a special case. Besides, we show that ADMM-FedMeta can mitigate performance degradation incurred by the dissimilarity between the source nodes and the target node.

\item We evaluate the performance of the proposed algorithm on different models and benchmark datasets. Our extensive experimental results showcase that ADMM-FedMeta  outperforms existing state-of-the-art approaches, in terms of convergence speed, adaptation performance, and the capability of learning without forgetting, especially with small sample sizes.

\end{itemize}

\section{Related work}
In this section, we briefly review the related work in the following three areas.

\textbf{Meta-Learning.} Meta-learning has emerged as a promising solution for few-shot learning. 
Ravi {\etal} \cite{ravi2016optimization} propose an LSTM-based meta-learning model to learn an optimization algorithm for training neural networks.
Different from \cite{ravi2016optimization},
a gradient-based Model Agnostic Meta-Learning (MAML) algorithm is proposed in \cite{finn2017model},
which aims at learning a model initialization, based on which using a few gradient descent updates can achieve satisfactory performance on a new task. 
To reduce the computational complexity, Nichol {\etal} \cite{nichol2018first} introduce a first-order meta-learning algorithm called Reptile, which does not require the computation of the second-order derivatives. Multiple follow-up works extend MAML from different perspectives, e.g., \cite{raghu2019rapid,collins2020distribution,song2019maml,finn2018probabilistic}. 
Along a different line,
Fallah {\etal} \cite{fallah2020convergence} establish the convergence of one-step MAML for non-convex loss functions and then proposes a Hessian-free MAML to reduce the computational cost with theoretical guarantees. The convergence for multi-step MAML is studied in \cite{ji2020multi}. Wang {\etal} \cite{wang2020global} further characterize the gap between the stationary point and the global optimum of MAML in a general non-convex setting. 

\textbf{Federated Meta-Learning.} Very recently, the integration of federated learning and MAML has garnered much attention. Chen {\etal} \cite{chen2018federated} propose a federated meta-learning framework called FedMeta based on FedAvg \cite{mcmahan2017communication} and MAML-type algorithms, which improves the performance and convergence speed of FedAvg. Jiang {\etal} \cite{jiang2019improving} analyze the connections between FedAvg and MAML, and proposes a federated meta-learning algorithm called personalized FedAvg. 
Lin {\etal} \cite{lin2020collaborative} analyze the convergence properties and computational complexity of federated meta-learning for a strongly convex setting. Another recent work \cite{fallah2020personalized} proposes a federated meta-learning algorithm called Per-FedAvg and provides the convergence guarantee for the general non-convex setting. However, these studies focus on collaborative learning on a given set of tasks without exploring the valuable knowledge transfer from the previous tasks \cite{parisi2019continual}. 

\textbf{ADMM.} 
A number of existing works \cite{wang2014convergence,hong2016convergence,magnusson2015convergence,wang2019global} analyze the convergence of ADMM for the case where the solution to each subproblem is computed exactly. Wang {\etal} \cite{wang2018convergence} extend the ADMM method from two-block to multi-block form. Besides, there are also a few works \cite{mukkamala2020convex,jiang2019structured,lanza2017nonconvex,barber2020convergence} studying the performance of ADMM in an inexact and non-convex setting, by linearizing the subproblems that are difficult to solve exactly. It is worth noting that linear approximation is insufficient for the meta-learning problem which generally requires higher-order information.

\section{Continual Edge Learning via Federated Meta-Learning With Regularization }

We consider a  platform-aided federated meta-learning architecture for edge learning  (as illustrated in Figure \ref{figure:architecture}), where a set $\mathcal{I}$ of source edge nodes joint force to learn a meta-model, aided by the valuable knowledge learned  from previous tasks in the cloud. Specifically, the  knowledge transfer is in the form of  regularization  using Bregman divergence on the prior model.

\begin{figure}
\centering
\includegraphics[width=0.975\columnwidth]{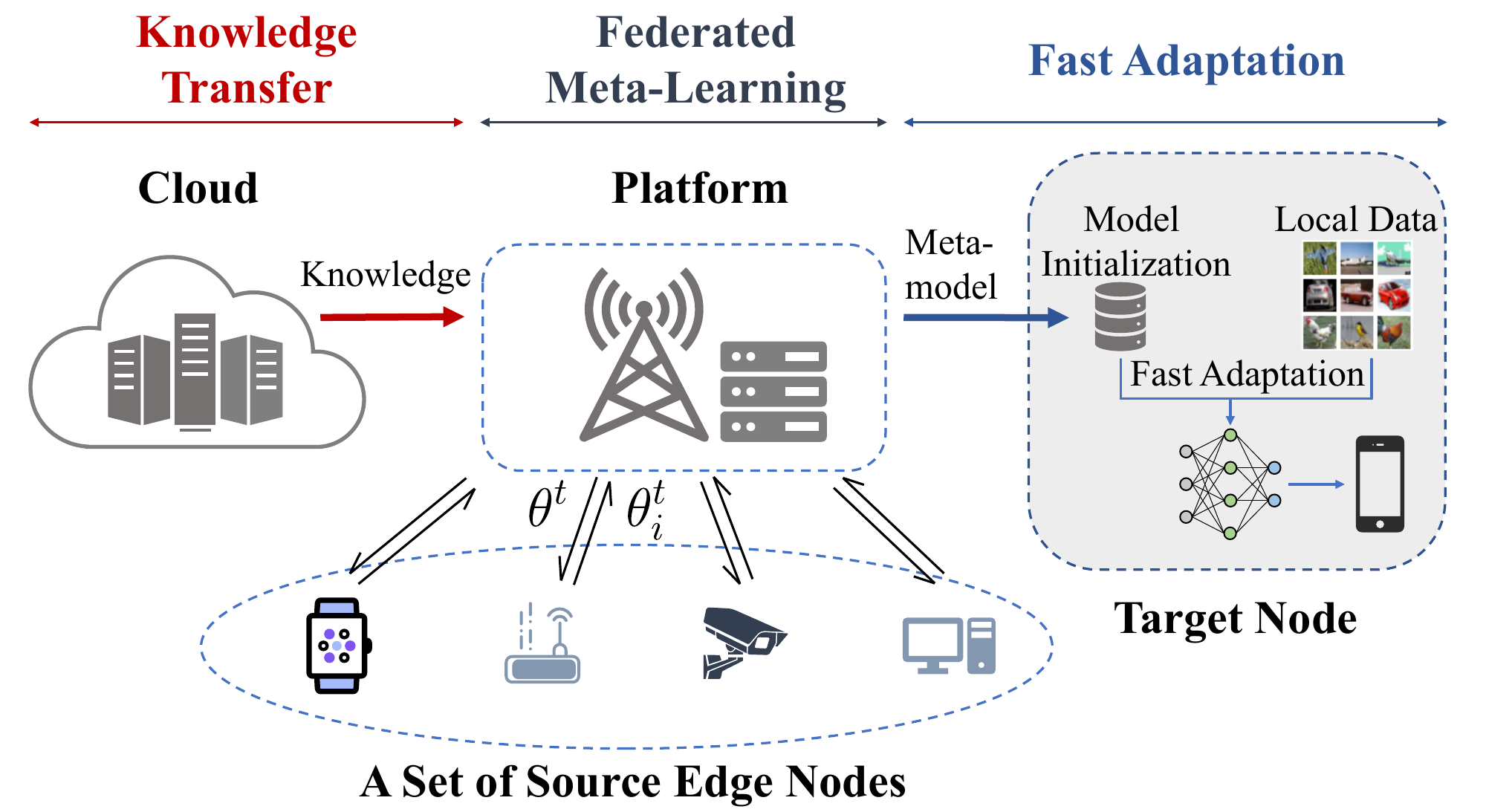} 
\vspace{-6pt}
\caption{Illustration of the platform-aided federated meta-learning architecture with knowledge transfer.}
\vspace{-12pt}
\label{figure:architecture}
\end{figure} 


\subsection{Problem Formulation}

For ease of exposition, we consider a general supervised learning setting where each edge node $i\in\mathcal{I}\cup\{m\}$ has a labeled dataset $\mathcal{D}_i=\big\{(\mathbf{x}^j_i,\mathbf{y}^j_i)\big\}^{D_i}_{j=1}$ with total $D_i$ samples. Here $(\mathbf{x}^j_i,\mathbf{y}^j_i)\in\mathcal{X}_i\times\mathcal{Y}_i$ is a sample point with input $\mathbf{x}^j_i$ and true label $\mathbf{y}^j_i$, and follows an unknown underlying distribution $P_i$. For a model parameter $\phi_i\in\mathbb{R}^n$, the empirical loss function for a dataset $\mathcal{D}_i$ is defined as $L_i(\phi_i,\mathcal{D}_i)\triangleq(1/D_i)\sum^{D_i}_{j=1}l_i\big(\phi_i,(\mathbf{x}^j_i,\mathbf{y}^j_i)\big)$, where $l_i(\cdot,\cdot)$ is a general differentiable non-convex loss function. 

Motivated by the recent success of regularization approaches in transfer learning and continual learning \cite{parisi2019continual}, we  use regularization for extracting  the valuable knowledge from the prior model to facilitate  fast edge training and to alleviate catastrophic forgetting.  More specially, for a model parameter $\theta\in\mathbb{R}^n$, we denote $\theta_p\in\mathbb{R}^n$ as the prior model parameter, and  use the Bregman divergence $D_h(\theta,\theta_p)$ \cite{bregman1967relaxation} as the regularization, given by:
\begin{align}
    D_h(\theta,\theta_p)\triangleq h(\theta)-h(\theta_p)-\langle\nabla h\big(\theta_p\big),\theta-\theta_p\rangle,
\end{align}
for some continuously-differentiable strictly convex function $h(\cdot)$. It is worth noting that Bregman divergence is a dissimilarity measure between two objects (e.g., vectors, matrices, distributions, etc.). It encompasses a rich class of divergence metrics, including squared Euclidean distance, squared Mahalanobis distance, Kullback-Leibler (KL) divergence, and Itakura-Saito (IS) distance, which are widely used in machine learning to encode the dissimilarity from different perspectives, and is particularly useful for the regularization approaches \cite{si2009bregman,yu2013kl,jung2016less}. Note that while for ease of exposition, in this paper we consider  the regularizer  on the model parameters using  Bregman divergence,  the same methodology can be applied to  generalize the  regularization to be in terms of  Bregman divergence between two functions of $\theta$ and $\theta_p$, respectively (see Assumption \ref{Lsmooth}).  

Following the same line as in MAML \cite{finn2017model}, we divide the dataset $\mathcal{D}_i$ for the source edge node $i\in\mathcal{I}$ into two disjoint sets, i.e., the support set $\mathcal{D}^{s}_i$ and the query set $\mathcal{D}^{q}_i$.
Based on the prior model, we can formulate the federated meta-learning with knowledge transfer among the source edge nodes as the following regularized optimization problem:
\begin{align}
    \label{prim_prob}
        \min_\theta&\quad\sum_{i\in\mathcal{I}}w_i L_i\big(\phi_i(\theta),\mathcal{D}^{q}_i\big)+\lambda D_h(\theta,\theta_p)\\
        \label{eq:phi}
        \text{s.t.}&\quad
        \begin{array}{lr}
        \phi_i(\theta)=\theta-\alpha\nabla L_i(\theta,\mathcal{D}^{s}_i),~i\in\mathcal{I}
        \end{array}
\end{align}
where $w_i\triangleq D_i/\sum_{i\in\mathcal{I}}D_i$, 
$\alpha$ is the learning rate, and $\lambda$ is a penalty parameter that can be used to balance the trade-off between the loss and the regularization. In this formulation, we aim to find a good meta-model such that slight updating, i.e., one-step gradient descent, results in substantial performance improvement for any task across the edge nodes. That is, the source edge nodes collaboratively learn how to learn fast with a few data samples. Further, by penalizing changes in the model via regularization, the learned model from  \eqref{prim_prob}-\eqref{eq:phi} is confined to stay `close' to the prior model for enabling collaborative edge learning without forgetting prior knowledge, thus the learned meta-model can widely adapt to different types of tasks.


In the fast adaptation stage, the platform transfers the learned meta-model $\theta$ to the target node (denoted by $m$) after solving the regularized federated meta-learning problem  \eqref{prim_prob}-\eqref{eq:phi}. Based on $\theta$, the target node $m$ can use its local data set $\mathcal{D}^{s}_m$ to quickly compute a new model $\phi_m$ by performing one-step stochastic gradient descent, i.e.,
\begin{align}
    \label{fast_adaptation}
    \phi_m=\theta-\alpha\nabla L_m(\theta,\mathcal{D}^{s}_m).
\end{align}
Note that the target node also can execute a few steps of stochastic gradient descent updates for better performance when needed.



\subsection{An Inexact-ADMM Based Algorithm for Regularized Federated Meta-Learning}

As alluded to earlier, general gradient-based federated meta-learning approaches cannot handle the regularized optimization problem \eqref{prim_prob}-\eqref{eq:phi} well. To address this problem, we propose an inexact-ADMM based federated meta-learning algorithm (ADMM-FedMeta) to solve \eqref{prim_prob}-\eqref{eq:phi}. 

Observe that the federated meta-learning problem \eqref{prim_prob}-\eqref{eq:phi} is equivalent to the following constrained optimization problem:
\begin{align}
    \label{equi_prob}
    \begin{split}
        \min_{\{\theta_i\},\theta}&\quad\sum_{i\in\mathcal{I}}w_i L_i\big(\phi_i(\theta_i),\mathcal{D}^{q}_i\big)+\lambda D_h(\theta,\theta_p)\\
        \text{s.t.}\,&\quad
        \begin{array}{lr}
            \theta_i - \theta = 0,~i\in\mathcal{I}\\
        \end{array}
    \end{split}
\end{align}
where $\phi_i(\theta_i)$ is given by (\ref{eq:phi}). To solve \eqref{equi_prob}, we form the augmented Lagrangian function as follows:
\begin{align}
    \label{eq:lagrangian}
    \nonumber
    \mathcal{L}\big(\{\theta_i,y_i\},\theta\big)\triangleq&\sum_{i\in\mathcal{I}}\Big(w_i L_i\big(\phi_i(\theta_i),\mathcal{D}^{q}_i\big)+\langle y _i,\theta_i-\theta\rangle\\
    &+\frac{\rho_i}{2}\Vert \theta_i-\theta\Vert^2\Big)+\lambda D_h(\theta,\theta_p),
\end{align}
where $y_i\in\mathbb{R}^n$ is a dual variable and $\rho_i>0$ is a penalty parameter for each $i\in\mathcal{I}$. 

When the classical ADMM method is applied \cite{boyd2011distributed}, the variables $\theta_i$, $\theta$ and $y_i$ are updated alternatively in solving \eqref{equi_prob} as follows: 
\begin{align}
    \label{eq:ori_ADMM}
    \begin{cases}
    \theta^{t+1}=\argmin_{\theta}~\mathcal{L}\big(\{\theta^t_i,y^t_i\},\theta\big),\\
    \theta^{t+1}_i=\argmin_{\theta_i}~\mathcal{L}_i\big(\theta_i,y^t_i,\theta^{t+1}\big),\\
    y^{t+1}_i=y^t_i+\rho_i(\theta^{t+1}_i-\theta^{t+1}),\\
    \end{cases}
\end{align}
where $\mathcal{L}_i\big(\theta_i,y_i,\theta\big)\triangleq w_i L_i\big(\phi_i(\theta_i),\mathcal{D}^{q}_i\big)+\langle y _i,\theta_i-\theta\rangle+\frac{\rho_i}{2}\Vert \theta_i-\theta\Vert^2$. The conventional ADMM decomposes the problem \eqref{equi_prob} into a set of subproblems that can be solved in parallel, while computing $D_h(\theta,\theta_p)$ and $L_i\big(\phi_i(\theta_i),\mathcal{D}^{q}_i\big)$ separately. Based on that, to fully take advantage of the combined computation power of the local edge nodes and the platform, we provide the following alternating updating strategy: 1) updating $\theta$ at the platform and 2) updating $\{\theta_i,y_i\}$ at the source edge nodes in a distributed manner. Particularly, in this way, the computation corresponding to the regularizer can be decoupled from the edge nodes to the the platform. However, attaining the exact solution to each subproblem is computationally costly, especially with a complex DNN model. {\em To tackle this challenge, we devise the inexact-ADMM based federated meta-learning (ADMM-FedMeta) below.}

Specifically, in communication round $t=0$, the platform initializes $\theta^0$ and sends it to all edge nodes. Each node  $i\in\mathcal{I}$ initializes $y^{-1}_i$ locally.
\begin{itemize}
 \item\textbf{Local update of $\bm{\{\theta_i,y_i\}}$.} After receiving $\theta^t$ from the platform at communication round $t$, each edge node $i\in\mathcal{I}$ would do the following updates:
    
    (1) \emph{Update node-specific model $\phi_i$.} Based on the dataset $\mathcal{D}^{s}_i$, $\phi^t_i$ is updated as:
    \begin{align}
        \label{eq:update_phi}
        \phi^t_i=\theta^t-\alpha\nabla L_i(\theta^t,\mathcal{D}^{s}_{i}).
    \end{align}
    
    (2) \emph{Update local parameter $\theta_i$.} Based on \eqref{eq:ori_ADMM}, given the meta-model $\theta^t$ and local dual variable $y_i^{t-1}$ from last communication round, the local parameter $\theta_i$ should be updated as:
    \begin{align}\label{eq:update_thetai}
        \theta^t_i=&\mathop{\arg\min}_{\theta_i}\bigg\{w_i L_i\big(\phi_i(\theta_i),\mathcal{D}^{q}_i\big)+ \langle y^{t-1}_i,\theta_i-\theta^t\rangle \nonumber\\
      &+\frac{\rho_i}{2} \Vert\theta_i-\theta^t\Vert^2\bigg\}.
    \end{align}
    To simplify the computation, we use linear approximation (i.e., first-order Taylor expansion) around $\theta^t$ to relax this subproblem, i.e.,
    \begin{align}
    \label{eq:linear_approx}
    \nonumber
    \theta^t_i=&\mathop{\arg\min}_{\theta_i}\bigg\{w_i L_i(\phi^t_i,\mathcal{D}^{q}_i)\\
    \nonumber
    &+  \big\langle w_i\big(I-\alpha\nabla^2 L_i(\theta^t,\mathcal{D}^{s}_i)\big)\nabla  L_i(\phi^t_i,\mathcal{D}^{q}_i)\\
    &+y^{t-1}_i,\theta_i-\theta^t\big\rangle+\frac{\rho_i}{2} \Vert \theta_i-\theta^t\Vert^2\bigg\},
    \end{align}
    where $\phi^t_i$ is from \eqref{eq:update_phi}. Nevertheless, \eqref{eq:linear_approx} is still insufficient since the computational complexity of the Hessian-gradient product $\nabla^2 L_i(\theta^t,\mathcal{D}^{s}_i)\nabla L_i\big(\phi^t_i,\mathcal{D}^{q}_i\big)$ is $\mathcal{O}(n^2)$. To further reduce the computational cost, as in \cite{fallah2020convergence,ji2020multi}, we replace the Hessian-gradient product by a first-order estimator, i.e.,
    \begin{align}
    \label{estimatehessian}
    g^t_i\triangleq \frac{\nabla L_i\big(\theta^t+\delta_{i,t} r^t_i,\mathcal{D}^{s}_i\big)-\nabla L_i\big(\theta^t-\delta_{i,t} r^t_i,\mathcal{D}^{s}_i\big)}{2\delta_{i,t}},
    \end{align}
    where $r^t_i\triangleq\nabla L_i\big(\phi^t_i,\mathcal{D}^{q}_i\big)$ and $\delta_{i,t}>0$ is the degree of freedom capturing the estimation accuracy. In a nutshell, the local parameter $\theta_i^t$ is updated as follows:
    \begin{align}
    \label{eq:update_thetai_inex}
    \theta^t_i=\theta^t-\frac{y^{t-1}_i+w_i\big(\nabla L_i(\phi^t_i,\mathcal{D}^q_{i})-\alpha g^t_i\big)}{\rho_i},
    \end{align}
    where \eqref{eq:update_thetai_inex} is derived by the optimality of (\ref{eq:linear_approx}) after replacing $\nabla^2 L_i(\theta^t,\mathcal{D}^{s}_i)\nabla L_i\big(\phi^t_i,\mathcal{D}^{q}_i\big)$ with $g^t_i$. 
    \label{eq:update_thetai_inex_sp}
    
    (3) \emph{Update local dual variable $y_i$.} Based on $\theta^t$ and the updated local parameter $\theta_i^t$, the auxiliary dual variable $y_i^t$ is next updated according to:
    \begin{align}
        \label{eq:update_y}
        y^t_i=& y^{t-1}_i + \rho_i(\theta^t_i - \theta^t).
    \end{align}
    
\item\textbf{Global Aggregation towards Meta-Model $\bm{\theta}$.} Each edge node $i\in\mathcal{I}$ sends the updated local parameters $\theta^t_i$ and $y^t_i$ to the platform. With the prior model $\theta_p$ transferred from the cloud, the platform performs a global update of the model initialization $\theta$ based on:
    \begin{align}\label{eq:update_theta}
        \theta^{t+1}=\frac{\sum_{i\in\mathcal{I}}(y^t_i+\rho_i\theta^t_i)-\lambda\nabla_{\theta^t}D_h(\theta^t,\theta_p)}{\sum_{i\in\mathcal{I}}\rho_i},
    \end{align}
where \eqref{eq:update_theta} is derived from the optimality of the linearized $\mathcal{L}\big(\{\theta^t_i,y^t_i\},\theta\big)$ around $\theta^t$ similar to \eqref{eq:update_thetai_inex}. Then, the platform sends $\theta^{t+1}$ back to all edge nodes for the next communication round.

\item\textbf{Fast Adaptation.} After the training phase, the platform transfers the learned meta-model $\theta^T$ to the target node $m$. Based on $\theta^T$, the target node performs one or a few steps of stochastic gradient descent on its own dataset to obtain a new model.

\end{itemize}
The details of ADMM-FedMeta are summarized in Algorithm \ref{alg}. Note that due to linearizing all decomposed subproblems and estimating Hessian by its first-order estimation, we enable the computation complexity of ADMM-FedMeta to be $\mathcal{O}(n)$ per round, which maintains the lowest among all existing federated meta-learning approaches. 
\begin{algorithm}[t]
	\caption{Inexact-ADMM Based Meta-Learning Algorithm (ADMM-FedMeta)}
	\label{alg}
	\LinesNumbered
	\KwIn{$\theta_p$, $\alpha$, $\lambda$, $\rho_i$, $\mathcal{D}^{s}_i$, $\mathcal{D}^q_i$ for $i\in\mathcal{I}$}
	\KwOut{Finial meta-model $\theta$}
	Each edge node $i\in\mathcal{I}$ initializes $y^{-1}_i$\;
	Platform initializes $\theta^0$ and sends it to all edge nodes\;
	\For{$t=0$ \KwTo $T$}{
	    \For{$i=1$ \KwTo $I$}{
	        Compute $\phi^t_i\leftarrow\theta^t-\alpha\nabla L_i(\theta^t,\mathcal{D}^{s}_i)$\;
	        Compute $\theta^t_i$ by (\ref{eq:update_thetai_inex})\;
	        Compute $ y^t_i\leftarrow y^{t-1}_i + \rho_i(\theta^t_i - \theta^t)$\;
	        Send $\theta^t_i$ and $ y^t_i$ back to the platform\;
	        }
	    Platform updates $\theta^{t+1}$ by (\ref{eq:update_theta}) and sends it to all edge nodes $i\in\mathcal{I}$\;
		}
		Platform transfers $\theta^T$ to target node for fast adaptation\;
\end{algorithm}

\section{Performance Analysis}

In this section, we analyze the performance of ADMM-FedMeta. First, we study the convergence properties and characterize the communication complexity for ADMM-FedMeta. Then, we quantify the forgetting effect to previous tasks of the meta-model and analyze the fast adaptation performance at the target edge node.

\subsection{Convergence Analysis}

For convenience, we denote the objective function of \eqref{prim_prob} as $F(\theta)$:
\begin{align}
    \label{F_definition}
    F(\theta)\triangleq \sum_{i\in\mathcal{I}}w_i L_i\big(\phi_i(\theta),\mathcal{D}^{q}_i\big)+\lambda D_h(\theta,\theta_p),
\end{align}
where $\phi_i(\theta)=\theta-\alpha\nabla L_i(\theta,\mathcal{D}^{s}_i)$. Next, we characterize the convergence and communication complexity of the proposed algorithm for finding a first-order stationary point of function $F(\theta)$. Formally, the definition of an $\epsilon$-approximate first-order stationary point is given as follows.
\begin{definition}[$\epsilon$-FOSP]
A solution $\theta\in\mathbb{R}^n$ is called an $\epsilon$-approximate first-order stationary point ($\epsilon$-FOSP) of (\ref{prim_prob}), if $   \Vert\nabla F(\theta)\Vert\le\epsilon,~\text{for}~\epsilon>0$.
\end{definition}
The above definition implies that if a solution $\theta$ obtained by an algorithm is a $\epsilon$-FOSP, then the gradient norm of the objective function is bounded above by $\epsilon$.

Note that the first-order estimator of Hessian introduced in the subproblem \eqref{eq:update_thetai_inex} inevitably complicates the convergence analysis of ADMM-FedMeta, making the existing analysis methods of ADMM \cite{barber2020convergence} not suitable here. To establish the convergence of ADMM-FedMeta, we impose the following standard assumptions in the literature.
\begin{assumption}
    \label{lowerbounded}
    $F(\theta)$ is lower-bounded, i.e.,  $F(\theta)>-\infty$, for all $\theta\in\mathbb{R}^n$.
\end{assumption}
\begin{assumption}[Smoothness and Bounded Gradient]
\label{Lsmooth}
For each $i\in\mathcal{I}\cup\{m\}$, any $\mathcal{D}^{s}_i$, and $\theta_p\in\mathbb{R}^n$, both $L_i(\cdot,\mathcal{D}^{s}_i)$ and $D_h(\cdot,\theta_p)$ are twice continuously differentiable and smooth, i.e., for any $x,y\in\mathbb{R}^n$, there exist constants $\mu_i>0$ and $\mu_r>0$ such that:
\begin{align}
    \label{Lsmoothinequality}
    &\Vert \nabla L_i(x,\mathcal{D}^{s}_i)-\nabla L_i(y,\mathcal{D}^{s}_i)\Vert\le \mu_i\Vert x-y\Vert,\\
    &\Vert \nabla_x D_h(x,\theta_p)-\nabla_y D_h(y,\theta_p)\Vert\le \mu_r\Vert x-y\Vert.
\end{align}
Besides, the gradient norms of $L_i(\cdot,\mathcal{D}^{s}_i)$ is bounded by a positive constant $\beta_i>0$, i.e., for any $x\in\mathbb{R}^n$, the following holds:
\begin{align}
    \label{gradientbound}
    &\Vert\nabla L_i(x,\mathcal{D}^{s}_i)\Vert\leq \beta_i.
\end{align}
\end{assumption}
\begin{assumption}[Lipschitz Continuous Hessian]
\label{HessianLipschitz}
For any $i\in\mathcal{I}$ and any $\mathcal{D}^{s}_i$, the Hessian of $L_i(\cdot,\mathcal{D}^{s}_i)$ is $\zeta_i$-Lipschitz continuous, i.e., for any $x,y\in\mathbb{R}^n$, we have:
\begin{align}
    \label{HessianLipschitzinequality}
    \Vert \nabla^2 L_i(x,\mathcal{D}^{s}_i)-\nabla^2 L_i(y,\mathcal{D}^{s}_i)\Vert\le \zeta_i\Vert x-y\Vert.
\end{align}
\end{assumption}
\begin{assumption}[Bounded Variance]
\label{bounded_var}
For any $i\in\mathcal{I}\cup\{m\}$ and $\theta\in\mathbb{R}^n$, the stochastic gradient $\nabla l_i\big(\theta,(\mathbf{x},\mathbf{y})\big)$ and Hessian $\nabla^2 l_i\big(\theta,(\mathbf{x},\mathbf{y})\big)$ with respect to data point $(\mathbf{x},\mathbf{y})\in\mathcal{X}_i\times\mathcal{Y}_i$ have bounded variances, i.e.,
\begin{align}
    &\mathbb{E}_{(\mathbf{x},\mathbf{y})\sim P_i}\big\{\Vert\nabla l_i\big(\theta,(\mathbf{x},\mathbf{y})\big)-\nabla L_i(\theta)\Vert^2\big\}\le\big(\sigma^g_i\big)^2,\\
    &\mathbb{E}_{(\mathbf{x},\mathbf{y})\sim P_i}\big\{\Vert\nabla^2 l_i\big(\theta,(\mathbf{x},\mathbf{y})\big)-\nabla^2 L_i(\theta)\Vert^2\big\}\le\big(\sigma^h_i\big)^2,
\end{align}
for some positive constants $\sigma^g_i>0$ and $\sigma^h_i>0$.
\end{assumption}

Assumptions \ref{lowerbounded}-\ref{bounded_var} are standard in the state-of-the-art studies on the analysis of federated learning algorithms \cite{lin2020collaborative,fallah2020personalized,zhang2020fedpd}. In particular, (\ref{gradientbound}) is critical for analyzing the convergence as it enables characterizing the estimation error of the Hessian. Assumption \ref{HessianLipschitz} implies the high-order smoothness of $L_i(\cdot,\mathcal{D}^{s}_i)$ for dealing with the second-order information in the update steps of Algorithm \ref{alg}.  Furthermore, Assumption \ref{bounded_var} provides the upper bounds of the variances of the gradient and Hessian estimations.

To quantify the convergence behavior of ADMM-FedMeta, we  first study the properties of the objective function $F(\theta)$. Denote $F_i(\theta)\triangleq L_i\big(\phi_i(\theta),\mathcal{D}^q_i\big)$. Based on Assumptions \ref{Lsmooth} and \ref{HessianLipschitz}, we have the following result about the smoothness of $F_i$ as in the standard analysis of federated meta-learning approaches.
\begin{restatable}{lemma}{fsmoo}
\label{fsmoothlemma}
Given Assumptions \ref{Lsmooth} and \ref{HessianLipschitz}, for each $i\in\mathcal{I}\cup\{m\}$, $F_i$ is proper and $\nu_i$-smooth, i.e., 
\begin{align}
    \label{fsmooth}
    \Vert \nabla F_i(x)-\nabla F_i(y)\Vert\le \nu_i\Vert x-y\Vert,~\forall x,y\in\mathbb{R}^n,
\end{align}
where $\nu_i$ is defined as follows:
\begin{align}
\label{lipschtz_nu}
    \nu_i\triangleq (1+\alpha\mu_i)(1+\mu_i)\mu_i+\alpha\beta_i\zeta_i.
\end{align}
\end{restatable}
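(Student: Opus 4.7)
The plan is to prove Lipschitz continuity of $\nabla F_i$ by first computing $\nabla F_i$ explicitly via the chain rule, then bounding the difference $\nabla F_i(x) - \nabla F_i(y)$ by two terms using an ``add-and-subtract'' trick, and finally applying Assumptions~\ref{Lsmooth} and \ref{HessianLipschitz} to each term.

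Concretely, I would first note that since $\phi_i(\theta) = \theta - \alpha\nabla L_i(\theta,\mathcal{D}^s_i)$ and $L_i$ is twice continuously differentiable by Assumption~\ref{Lsmooth}, the Jacobian of $\phi_i$ is $I - \alpha\nabla^2 L_i(\theta,\mathcal{D}^s_i)$, so by the chain rule
\begin{align*}
\nabla F_i(\theta) = \bigl(I - \alpha\nabla^2 L_i(\theta,\mathcal{D}^s_i)\bigr)\nabla L_i\bigl(\phi_i(\theta),\mathcal{D}^o_i\bigr).
\end{align*}
Writing $H_\theta \triangleq \nabla^2 L_i(\theta,\mathcal{D}^s_i)$ and $g_\theta \triangleq \nabla L_i(\phi_i(\theta),\mathcal{D}^o_i)$, I would apply the identity
\begin{align*}
\nabla F_i(x) - \nabla F_i(y) = (I - \alpha H_x)(g_x - g_y) - \alpha(H_x - H_y)g_y,
\end{align*}
and use the triangle inequality together with sub-multiplicativity of operator norms.

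For the first piece, Assumption~\ref{Lsmooth} gives $\|H_x\| \le \mu_i$, hence $\|I - \alpha H_x\| \le 1 + \alpha\mu_i$; then two further applications of the $\mu_i$-smoothness bound yield
\begin{align*}
\|g_x - g_y\| \le \mu_i\|\phi_i(x) - \phi_i(y)\| \le \mu_i(1 + \mu_i)\|x - y\|,
\end{align*}
where the last inequality uses $\|\phi_i(x) - \phi_i(y)\| \le \|x - y\| + \alpha\|\nabla L_i(x,\mathcal{D}^s_i) - \nabla L_i(y,\mathcal{D}^s_i)\|$ and a standard step-size condition $\alpha\le 1$ to absorb $\alpha\mu_i$ into $\mu_i$. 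For the second piece, the Lipschitz-Hessian condition (Assumption~\ref{HessianLipschitz}) gives $\|H_x - H_y\| \le \zeta_i\|x - y\|$, while the bounded-gradient bound in \eqref{gradientbound} gives $\|g_y\| = \|\nabla L_i(\phi_i(y),\mathcal{D}^o_i)\| \le \beta_i$. Combining produces exactly $\nu_i = (1+\alpha\mu_i)(1+\mu_i)\mu_i + \alpha\beta_i\zeta_i$.

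The main obstacle is purely bookkeeping: one has to be careful to keep the $(1+\alpha\mu_i)$ factor arising from the Jacobian separate from the $(1+\mu_i)$ factor arising from the Lipschitz expansion of $\phi_i$, because conflating them yields either a looser constant $(1+\alpha\mu_i)^2\mu_i$ or a constant that does not match the stated $\nu_i$. Properness of $F_i$ is immediate from properness of $L_i$ together with the fact that $\phi_i$ is a well-defined continuously differentiable map $\mathbb{R}^n \to \mathbb{R}^n$, so this part of the claim requires no real work beyond noting that the composition inherits the required regularity.
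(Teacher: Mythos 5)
Your proof is correct and follows essentially the same route as the paper's: compute $\nabla F_i(\theta)=\bigl(I-\alpha\nabla^2 L_i(\theta,\mathcal{D}^{s}_i)\bigr)\nabla L_i\bigl(\phi_i(\theta),\mathcal{D}^{q}_i\bigr)$, add and subtract $\alpha H_x g_y$, and bound the two resulting pieces with the smoothness, bounded-gradient, and Lipschitz-Hessian assumptions; your grouping $(I-\alpha H_x)(g_x-g_y)-\alpha(H_x-H_y)g_y$ is just an algebraic rearrangement of the paper's split into $(g_x-g_y)$ and $\alpha(H_xg_x-H_yg_y)$. Your explicit use of $\alpha\le 1$ to pass from $(1+\alpha\mu_i)$ to $(1+\mu_i)$ is in fact the more careful bookkeeping: the paper's own arithmetic actually yields the (tighter, for $\alpha\le1$) constant $(1+\alpha\mu_i)^2\mu_i+\alpha\beta_i\zeta_i$ and silently identifies it with the stated $\nu_i=(1+\alpha\mu_i)(1+\mu_i)\mu_i+\alpha\beta_i\zeta_i$, which likewise only dominates under a step-size restriction of that kind.
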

\begin{proof}
The proof is standard. The detailed proof is provided in Appendix \ref{proof:smooth} of the technical report \cite{yue2020inexactadmm}.
\end{proof}
Next, we impose the assumptions on the hyper-parameters.
\begin{assumption}
    \label{parameterassumption}
    For all $i\in\mathcal{I}$, $\rho_i$ is large enough such that:
    \begin{align}
        \label{parameter_eq1}
        &\frac{\rho_i}{2}-4w_i\nu_i>0,\\
        \label{parameter_eq2}
        &\frac{\rho_i}{2}-2w^2_i\nu^2_i\bigg(\frac{4w_i\nu_i}{\rho^2_i}+\frac{1}{\rho_i}\bigg)-\frac{\lambda\mu_r}{2I}>0,\\
        \label{parameter_eq3}
        &\rho_i-3\nu_i>0,
    \end{align}
    where $\nu_i$ is a smooth scalar defined in (\ref{lipschtz_nu}). Besides, for all $i\in\mathcal{I}$, the additional degree of freedom parameter $\{\delta_{i,t}\}$ for the approximation  of Hessian-gradient products  is chosen to be a monotonically non-increasing positive sequence and satisfies $\sum^\infty_{t=1}\delta_{i,t}<\infty$.
\end{assumption}
We impose Assumption \ref{parameterassumption} on the penalty parameter $\rho_i$ and the degree of freedom parameter $\delta_{i,t}$. Intuitively, (\ref{parameter_eq1})-(\ref{parameter_eq3}) imply that a large $\rho_i$ is required to balance the error caused by the linear approximation and Hessian estimation in \eqref{eq:update_thetai_inex}.

Based on Lemma \ref{fsmoothlemma}, we are ready to establish the convergence and characterize the communication complexity for Algorithm \ref{alg}. 

\begin{restatable}[Convergence and Communication Complexity]{theorem}{conv}
\label{convergence}
Under Assumptions \ref{lowerbounded}-\ref{parameterassumption}, we have the following results based on  Algorithm \ref{alg}:
\begin{enumerate}[(i)]
    \item $\{\theta^t\}$ has at least one limit point and each limit point $\theta^*$ is a stationary solution of (\ref{prim_prob}), i.e., $\Vert \nabla F(\theta^*)\Vert=0$.
    \item Algorithm \ref{alg} finds an $\epsilon$-FOSP of Problem \eqref{prim_prob}-\eqref{eq:phi} after at most $\mathcal{O}(1/\epsilon^2)$ communication rounds.
\end{enumerate}
\end{restatable}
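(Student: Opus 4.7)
The plan is to construct a Lyapunov function from the augmented Lagrangian $\mathcal{L}$ of \eqref{eq:lagrangian} and establish a perturbed sufficient-descent inequality of the form
\begin{align*}
\mathcal{L}^{t+1}-\mathcal{L}^t \le -c_1\Vert\theta^{t+1}-\theta^t\Vert^2 - c_2\sum_{i\in\mathcal{I}}\Vert\theta^{t+1}_i-\theta^t_i\Vert^2 + e_t,
\end{align*}
where $\{e_t\}$ is summable in $t$. What separates this from classical inexact-ADMM arguments (e.g., \cite{barber2020convergence,wang2019global,hong2016convergence}) is that the first-order Hessian estimator $g_i^t$ in \eqref{eq:update_thetai_inex} introduces a persistent per-step bias that blocks an unconditional sufficient-descent bound on $\mathcal{L}^t$. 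Absorbing this bias into a summable perturbation $\{e_t\}$ by exploiting $\sum_t\delta_{i,t}<\infty$ from Assumption \ref{parameterassumption} is the crux of the argument and the main technical obstacle.

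The first preparatory step is to bound the Hessian-estimator bias. A Taylor expansion together with the Lipschitz Hessian of $L_i$ (Assumption \ref{HessianLipschitz}) and the bounded gradient \eqref{gradientbound} yields
\begin{align*}
\bigl\Vert g^t_i - \nabla^2 L_i(\theta^t,\mathcal{D}^{s}_i)\,\nabla L_i(\phi^t_i,\mathcal{D}^{q}_i)\bigr\Vert \le \tfrac{1}{2}\zeta_i\beta_i\delta_{i,t}.
\end{align*}
Substituting \eqref{eq:update_thetai_inex} into \eqref{eq:update_y} gives the key closed-form identity $y^t_i = -w_i\bigl(\nabla L_i(\phi^t_i,\mathcal{D}^{q}_i) - \alpha g^t_i\bigr)$, in which $y^{t-1}_i$ has vanished. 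Invoking Lemma \ref{fsmoothlemma} together with the above estimator bound then produces the dual-residual bound
\begin{align*}
\Vert y^{t+1}_i-y^t_i\Vert \le C_1\Vert \theta^{t+1}-\theta^t\Vert + C_2\delta_{i,t},
\end{align*}
which is the backbone of the analysis: it lets the dual residuals, usually the principal obstacle in nonconvex ADMM, be controlled by primal residuals plus a summable perturbation.

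With these estimates in place, I would lower-bound the decrease of $\mathcal{L}$ across the three block updates using strong convexity of the linearized $\theta_i$- and $\theta$-subproblems (with moduli $\rho_i/2$ and $\lambda\mu_r/2$) together with $\nu_i$-smoothness of $F_i$. Conditions \eqref{parameter_eq1}-\eqref{parameter_eq3} in Assumption \ref{parameterassumption} are engineered precisely so that, after inserting the dual-residual bound to absorb the $\Vert y^{t+1}_i-y^t_i\Vert^2$ term, the coefficients $c_1,c_2$ above emerge strictly positive; the leftover $\delta_{i,t}$-terms collect into $e_t=\mathcal{O}(\sum_i\delta^2_{i,t})$, which is summable. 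Lower-boundedness of $\mathcal{L}^t$ follows from Assumption \ref{lowerbounded} combined with the closed-form expression for $y^t_i$. Telescoping and summing yield $\sum_t\bigl(\Vert\theta^{t+1}-\theta^t\Vert^2 + \sum_i \Vert\theta^{t+1}_i-\theta^t_i\Vert^2\bigr) < \infty$, so both $\theta^{t+1}-\theta^t\to 0$ and $\theta^t_i-\theta^t\to 0$ (primal feasibility is recovered in the limit). At any accumulation point $\theta^*$, feeding $\theta^*_i=\theta^*$ and the closed form of $y^*_i$ into the optimality condition of \eqref{eq:update_theta} collapses to $\nabla F(\theta^*)=0$, proving~(i). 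For~(ii), the same ingredients give the bound $\Vert\nabla F(\theta^t)\Vert^2 \le C\bigl(\Vert\theta^{t+1}-\theta^t\Vert^2 + \sum_i \Vert\theta^{t+1}_i-\theta^t_i\Vert^2 + \sum_i \delta^2_{i,t}\bigr)$; averaging over $t=0,\dots,T-1$ and using the telescoping estimate plus $\sum_t\delta^2_{i,t}<\infty$ yields $\min_{0\le t<T}\Vert\nabla F(\theta^t)\Vert^2 = \mathcal{O}(1/T)$, so $T=\mathcal{O}(1/\epsilon^2)$ communication rounds suffice to find an $\epsilon$-FOSP.
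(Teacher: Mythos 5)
Your proposal is correct and follows essentially the same route as the paper's proof: the augmented Lagrangian as a Lyapunov function, the closed-form dual identity $-y^{t+1}_i=w_i\tilde{\nabla}F_i(\theta^{t+1})$ yielding a dual-residual bound of the form $\Vert y^{t+1}_i-y^t_i\Vert\le C_1\Vert\theta^{t+1}-\theta^t\Vert+C_2\delta_{i,t}$ (the paper's Lemma~\ref{dualbound}), a perturbed descent estimate whose bias terms are controlled by the summability of $\{\delta_{i,t}\}$ (Lemma~\ref{L_descent}), lower-boundedness of the Lagrangian (Lemma~\ref{lagrangian_lowerbound}), and telescoping to drive the primal residuals to zero and extract the $\mathcal{O}(1/\epsilon^2)$ rate. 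The only notable differences are cosmetic: you absorb the linear-in-residual bias terms via Young's inequality into a summable additive perturbation, which is arguably cleaner than the paper's case analysis on the roots of the per-step quadratics, and your estimator-bias constant should read $\tfrac{1}{2}\zeta_i\beta_i^2\delta_{i,t}$ rather than $\tfrac{1}{2}\zeta_i\beta_i\delta_{i,t}$.
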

\begin{proof}
For part (i), we first characterize the successive difference of the augmented Lagrangian function. Based on that, we next show $\lim_{t\rightarrow\infty}\Vert\theta^{t+1}_i-\theta^t_i\Vert=0$ and $\lim_{t\rightarrow\infty}\Vert\theta^{t+1}-\theta^t\Vert=0$. Lastly, we bound $\Vert \nabla F(\theta^t)\Vert$ via $\Vert\theta^{t+1}_i-\theta^t_i\Vert$ and $\Vert\theta^{t+1}-\theta^t\Vert$. We prove (ii) via dividing the sum of the Lagrangian successive difference into two finite parts. The detailed proof is presented in Appendix \ref{proof:convergence} of the technical report \cite{yue2020inexactadmm}.
\end{proof}
Theorem \ref{convergence} indicates that Algorithm \ref{alg} always converges to a stationary point of (\ref{prim_prob}). Besides, to find an $\epsilon$-FOSP of Problem \eqref{prim_prob}-\eqref{eq:phi}, Algorithm \ref{alg} requires $\mathcal{O}(1/\epsilon^2)$ communication rounds between edge nodes and the platform. It is worth noting that in contrast to the previous methods \cite{lin2020collaborative,fallah2020personalized,fallah2020convergence}, ADMM-FedMeta can converge under mild conditions, i.e., not depending on the similarity assumptions (i.e., Assumption \ref{similarity}) across different edge nodes. This implies that Algorithm \ref{alg} can be applied to unbalanced and heterogeneous local datasets, revealing the potential in dealing with the inherent challenges in federated learning.

To characterize the impact of local data samples on the expected performance on the source nodes, we provide the following corollary.
\begin{corollary}
\label{coro:data_impact}
Given Assumptions \ref{lowerbounded}-\ref{parameterassumption}, the $\epsilon$-FOSP solution $\theta_{\epsilon}$ found by Algorithm \ref{alg} satisfies that:
\begin{align}
    \nonumber
    &\mathbb{E}\Big\{\Big\Vert \sum_{i\in\mathcal{I}}w_i L_i\big(\theta_{\epsilon}-\alpha\nabla L_i(\theta_{\epsilon})\big)+\lambda D_h(\theta_{\epsilon},\theta_p)\Big\Vert\Big\}\\
    &\le\epsilon+\sum_{i\in\mathcal{I}}w_i\sigma^g_i\Bigg(\frac{\alpha\mu_i}{\sqrt{D^{s}_i}}+\frac{1}{\sqrt{D^{q}_i}}\Bigg),
\end{align}
where $w_i=1/I$ and $L_i(\cdot)$ is the expected loss denoted by:
\begin{align}
    \label{eq:expected_loss}
    L_i(\theta)\triangleq\mathbb{E}_{(\mathbf{x},\mathbf{y})\sim P_i}\big\{l_i\big(\theta,(\mathbf{x},\mathbf{y})\big)\big\}.
\end{align}
\end{corollary}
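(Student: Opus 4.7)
Reading the bound as concerning the gradient of the population objective (the outer norm on the left-hand side should be a gradient norm of the population counterpart of $F$), my plan is to pass from the empirical $\epsilon$-FOSP guarantee of Theorem \ref{convergence} to its population version by a triangle-inequality argument, paying a cost controlled by the sample-mean variances from Assumption \ref{bounded_var}. Concretely, define the population objective $\widetilde F(\theta)\triangleq\sum_{i\in\mathcal{I}}w_i L_i\bigl(\theta-\alpha\nabla L_i(\theta)\bigr)+\lambda D_h(\theta,\theta_p)$, where each $L_i(\cdot)$ is the expected loss in \eqref{eq:expected_loss}, and compare it to $F(\theta)$ from \eqref{F_definition}. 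Applying $\|\nabla\widetilde F(\theta_\epsilon)\|\le\|\nabla F(\theta_\epsilon)\|+\|\nabla\widetilde F(\theta_\epsilon)-\nabla F(\theta_\epsilon)\|$ and taking expectation, the first term is bounded by $\epsilon$ by the $\epsilon$-FOSP property, so the task reduces to bounding the expected statistical gap per node.

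\textbf{Per-node decomposition.} By the chain rule, $\nabla F_i(\theta)=\bigl(I-\alpha\nabla^2 L_i(\theta,\mathcal{D}_i^s)\bigr)\nabla L_i\bigl(\phi_i(\theta),\mathcal{D}_i^q\bigr)$ with $\phi_i(\theta)=\theta-\alpha\nabla L_i(\theta,\mathcal{D}_i^s)$, and analogously for $\widetilde F_i$ with the population gradient and Hessian and $\widetilde\phi_i(\theta)=\theta-\alpha\nabla L_i(\theta)$. I would split the difference $\nabla F_i(\theta_\epsilon)-\nabla \widetilde F_i(\theta_\epsilon)$ into two parts: (a) a \emph{query-set} term $\bigl(I-\alpha\nabla^2 L_i\bigr)\bigl[\nabla L_i(\widetilde\phi_i,\mathcal{D}_i^q)-\nabla L_i(\widetilde\phi_i)\bigr]$ coming purely from empirical vs.\ expected query gradient at the same adapted point, and (b) a \emph{support-set} term which, after adding and subtracting $\nabla L_i(\widetilde\phi_i,\mathcal{D}_i^q)$, collapses to the $\mu_i$-Lipschitz smoothness (Assumption \ref{Lsmooth}) of $\nabla L_i$ applied to $\|\phi_i(\theta_\epsilon)-\widetilde\phi_i(\theta_\epsilon)\|=\alpha\,\|\nabla L_i(\theta_\epsilon,\mathcal{D}_i^s)-\nabla L_i(\theta_\epsilon)\|$, picking up the factor $\alpha\mu_i$. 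The factor $\bigl(I-\alpha\nabla^2 L_i\bigr)$ is absorbed into the constant via Assumption \ref{Lsmooth}; the piece involving the Hessian difference multiplied by a gradient is of lower order and can be handled the same way using \eqref{gradientbound} and Assumption \ref{HessianLipschitz}, or by choosing $\alpha$ small, so it does not contribute leading-order variance terms.

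\textbf{Variance to sample-size conversion.} For each node $i$, the empirical gradients $\nabla L_i(\theta,\mathcal{D}_i^s)=(1/D_i^s)\sum_{j}\nabla l_i(\theta,(\mathbf{x}_i^j,\mathbf{y}_i^j))$ and $\nabla L_i(\theta,\mathcal{D}_i^q)$ are sample means of $D_i^s$ and $D_i^q$ i.i.d.\ stochastic gradients, respectively. By independence and Assumption \ref{bounded_var}, $\mathbb{E}\|\nabla L_i(\theta,\mathcal{D}_i^s)-\nabla L_i(\theta)\|^2\le (\sigma_i^g)^2/D_i^s$, and similarly with $q$. Jensen's inequality then gives $\mathbb{E}\|\nabla L_i(\theta,\mathcal{D}_i^s)-\nabla L_i(\theta)\|\le\sigma_i^g/\sqrt{D_i^s}$. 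Plugging into the two pieces of the decomposition yields $\mathbb{E}\|\nabla \widetilde F_i(\theta_\epsilon)-\nabla F_i(\theta_\epsilon)\|\lesssim \alpha\mu_i\sigma_i^g/\sqrt{D_i^s}+\sigma_i^g/\sqrt{D_i^q}$. Summing against $w_i$ and combining with the $\epsilon$-FOSP bound finishes the proof.

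\textbf{Main obstacle.} The delicate point is showing that the Hessian-discrepancy term $\alpha\bigl(\nabla^2 L_i(\theta_\epsilon,\mathcal{D}_i^s)-\nabla^2 L_i(\theta_\epsilon)\bigr)\nabla L_i(\widetilde\phi_i)$ is absorbed into the stated bound without introducing an explicit $\sigma_i^h/\sqrt{D_i^s}$ term. The cleanest route is to recombine this term with the support-set piece before taking norms: $\bigl(I-\alpha\nabla^2 L_i(\theta,\mathcal{D}_i^s)\bigr)\nabla L_i(\phi_i)-\bigl(I-\alpha\nabla^2 L_i(\theta)\bigr)\nabla L_i(\widetilde\phi_i)$ can be rewritten using the identity $\nabla L_i(\phi_i)-\nabla L_i(\widetilde\phi_i)=\int_0^1\nabla^2 L_i(\widetilde\phi_i+\tau(\phi_i-\widetilde\phi_i))(\phi_i-\widetilde\phi_i)\,d\tau$, so that the Hessian variance folds into the same $\|\phi_i-\widetilde\phi_i\|\le\alpha\sigma_i^g/\sqrt{D_i^s}$ quantity, up to constants controlled by \eqref{gradientbound} and Assumptions \ref{Lsmooth}--\ref{HessianLipschitz}. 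Making this bookkeeping explicit — so that only $\sigma_i^g$ appears in the final estimate — is the step I expect to consume most of the proof.
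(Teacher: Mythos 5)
Your core mechanism---triangle inequality against the empirical $\epsilon$-FOSP guarantee, a per-node split into a support-set piece and a query-set piece, smoothness to convert $\Vert\phi_i-\widetilde\phi_i\Vert$ into $\alpha\Vert\nabla L_i(\theta_\epsilon,\mathcal{D}^s_i)-\nabla L_i(\theta_\epsilon)\Vert$, and Jensen plus i.i.d.\ averaging to get the $\sigma^g_i/\sqrt{D^s_i}$ and $\sigma^g_i/\sqrt{D^q_i}$ rates---is exactly the paper's proof. The paper first shows $\mathbb{E}\{\Vert \nabla L_i(\theta-\alpha\nabla L_i(\theta))-\nabla L_i(\theta-\alpha\nabla L_i(\theta,\mathcal{D}^{s}_i),\mathcal{D}^{q}_i)\Vert\}\le \alpha\mu_i\sigma^g_i/\sqrt{D^{s}_i}+\sigma^g_i/\sqrt{D^{q}_i}$ by precisely your two-term decomposition, then sums against $w_i$ and invokes the FOSP bound. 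Where you diverge is in how you read the left-hand side: the paper treats $\nabla L_i(\theta_\epsilon-\alpha\nabla L_i(\theta_\epsilon))$ as the gradient \emph{function} evaluated at the population-adapted point, with no Jacobian prefactor $(I-\alpha\nabla^2 L_i)$ anywhere in the argument, whereas you insist on the full chain-rule gradient of the population objective $\widetilde F$.

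That choice creates a genuine gap in the step you yourself flag as the main obstacle. The term $\alpha\bigl(\nabla^2 L_i(\theta_\epsilon,\mathcal{D}^s_i)-\nabla^2 L_i(\theta_\epsilon)\bigr)\nabla L_i(\widetilde\phi_i)$ is a difference of empirical and population Hessians evaluated at the \emph{same} argument $\theta_\epsilon$; it is a pure Hessian sampling error, controlled only by Assumption \ref{bounded_var} at rate $\alpha\beta_i\sigma^h_i/\sqrt{D^s_i}$, and it cannot be folded into $\Vert\phi_i-\widetilde\phi_i\Vert$. Your integral identity rewrites $\nabla L_i(\phi_i)-\nabla L_i(\widetilde\phi_i)$, which is a different term (two evaluation points of the same population gradient); it does nothing to the same-point Hessian discrepancy. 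If the corollary really concerned $\nabla\widetilde F$, the bound would have to carry an extra $\alpha\beta_i\sigma^h_i/\sqrt{D^s_i}$ contribution---compare Theorem \ref{performance}, where exactly such a term does appear---so no amount of bookkeeping makes it vanish. The fix is simply to adopt the paper's reading of the statement (no Jacobian), at which point your first two paragraphs already constitute the complete proof and the ``main obstacle'' paragraph can be deleted.
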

\begin{proof}
The detailed proof is presented in Appendix \ref{proof:coro} of the technical report \cite{yue2020inexactadmm}.
\end{proof}
Corollary \ref{coro:data_impact} implies that despite $\epsilon$-FOSP of the deterministic loss \eqref{prim_prob} can be obtained within $\mathcal{O}(1/\epsilon^2)$ rounds, performance degradation may happen due to small sample size.

\subsection{Performance of Rapid Adaptation at Target Node}
While the task similarity assumption is not required to guarantee the convergence of Algorithm \ref{alg}, we impose such an assumption to study the fast adaptation performance at the target node $m$.
\begin{assumption}[Task Similarity]
\label{similarity}
There exist positive constants $\psi^g_i>0$ and $\psi^h_i>0$ such that for any $i\in\mathcal{I}$ and $\theta\in\mathbb{R}^n$, the following holds:
\begin{align}
    &\Vert\nabla L_m(\theta)-\nabla L_i(\theta)\Vert\le \psi^g_i,\\
    &\Vert\nabla^2 L_m(\theta)-\nabla^2 L_i(\theta)\Vert\le \psi^h_i,
\end{align}
where $L_i(\cdot)$ is defined in \eqref{eq:expected_loss}, and the same applies to $L_m(\cdot)$.
\end{assumption}
Assumption \ref{similarity} indicates that the variations of the gradients between the loss of source edge nodes and the target edge node are bounded by some constants, which capture the similarity of the tasks corresponding to non-IID data and holds for many practical loss functions \cite{zhang2020fedpd}, such as logistic regression and hyperbolic tangent functions. In particular, $\psi^g_i$ and $\psi^h_i$ can be roughly seen as a distance between data distributions $P_m$ and $P_i$ \cite{fallah2020convergence}.

Next, we present the following result about the performance of rapid adaptation.
\begin{restatable}[Fast Adaptation Performance]{theorem}{perf}
\label{performance}
Suppose that Assumptions \ref{lowerbounded}-\ref{similarity} hold. For any $\epsilon>0$, the $\epsilon$-FOSP solution $\theta_{\epsilon}$ obtained by Algorithm \ref{alg} satisfies that:
\begin{align}
    \label{performance_ineq}
    \nonumber
    \mathbb{E}\big\{\Vert \nabla F_m(\theta_{\epsilon})\Vert\big\}\le&\epsilon+\alpha\beta_m\sum_{i\in\mathcal{I}}w_i\psi^h_i+(\alpha\mu+1)^2\sum_{i\in\mathcal{I}}w_i\psi^g_i\\
    \nonumber
    &+\alpha\mu(\alpha\mu+1)\sum_{i\in\mathcal{I}}w_i\sigma^g_i\Bigg(\frac{1}{\sqrt{D^{q}_m}}+\frac{1}{\sqrt{D^{q}_i}}\Bigg)\\
    \nonumber
    &+(\alpha\mu+1)\sum_{i\in\mathcal{I}}w_i\sigma^g_i\Bigg(\frac{1}{\sqrt{D^{s}_m}}+\frac{1}{\sqrt{D^{s}_i}}\Bigg)\\
    &+\alpha\beta_m\sum_{i\in\mathcal{I}}w_i\sigma^h_i\Bigg(\frac{1}{\sqrt{D^{s}_m}}+\frac{1}{\sqrt{D^{s}_i}}\Bigg),
\end{align}
where $F_m(\theta)\triangleq L_m\big(\theta-\alpha\nabla L_m(\theta,\mathcal{D}^{s}_m),\mathcal{D}^q_m\big)+\lambda D_h(\theta,\theta_p)$ for any $\mathcal{D}^{s}_m$ and $\mathcal{D}^q_m$ with respect to distribution $P_m$, and $\mu=\max_{i\in\mathcal{I}}\{\mu_i\}$.
\end{restatable}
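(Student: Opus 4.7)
The plan is to bound $\|\nabla F_m(\theta_\epsilon)\|$ by a three-way triangle inequality that threads through two population-level proxies. Define $\bar F_m(\theta) \triangleq L_m\big(\theta - \alpha \nabla L_m(\theta)\big) + \lambda D_h(\theta, \theta_p)$ and $\bar F(\theta) \triangleq \sum_{i\in\mathcal{I}} w_i L_i\big(\theta - \alpha \nabla L_i(\theta)\big) + \lambda D_h(\theta, \theta_p)$, the expected-loss versions of the target and source objectives. Then
\[
\|\nabla F_m(\theta_\epsilon)\| \le \underbrace{\|\nabla F_m(\theta_\epsilon) - \nabla \bar F_m(\theta_\epsilon)\|}_{\text{(a) target sampling}} + \underbrace{\|\nabla \bar F_m(\theta_\epsilon) - \nabla \bar F(\theta_\epsilon)\|}_{\text{(b) task dissimilarity}} + \underbrace{\|\nabla \bar F(\theta_\epsilon)\|}_{\text{(c)}},
\]
and Corollary \ref{coro:data_impact} already controls (c) in expectation by $\epsilon + \sum_i w_i \sigma^g_i(\alpha\mu_i/\sqrt{D^s_i} + 1/\sqrt{D^q_i})$, so the work is concentrated in (a) and (b).

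For (b), I would expand the MAML-style gradient as $\nabla \bar F_m - \nabla \bar F = \sum_i w_i\big[(I - \alpha \nabla^2 L_m)\nabla L_m(\bar\phi_m) - (I - \alpha \nabla^2 L_i)\nabla L_i(\bar\phi_i)\big]$, where $\bar\phi_\cdot(\theta) \triangleq \theta - \alpha \nabla L_\cdot(\theta)$, and insert the pivot $(I - \alpha \nabla^2 L_i)\nabla L_m(\bar\phi_m)$. The resulting Hessian-difference piece $\alpha\|\nabla^2 L_m(\theta_\epsilon) - \nabla^2 L_i(\theta_\epsilon)\|\cdot \|\nabla L_m(\bar\phi_m(\theta_\epsilon))\|$ is controlled by $\psi^h_i$ (Assumption \ref{similarity}) and $\beta_m$ (Assumption \ref{Lsmooth}), giving $\alpha\beta_m \psi^h_i$. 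The remaining piece has factor $\|I - \alpha\nabla^2 L_i\| \le 1+\alpha\mu$ times $\|\nabla L_m(\bar\phi_m) - \nabla L_i(\bar\phi_i)\|$; a further split through $\nabla L_m(\bar\phi_i)$ uses Assumption \ref{similarity} to yield $\psi^g_i$ and $\|\bar\phi_m - \bar\phi_i\| \le \alpha\psi^g_i$ combined with smoothness of $L_m$ to yield an additional $\alpha\mu\psi^g_i$ contribution, producing $(1+\alpha\mu)^2\psi^g_i$ in total. Summing over $i$ recovers the first two sums in \eqref{performance_ineq}.

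For (a), I would swap the stochastic components of $\nabla F_m(\theta_\epsilon) = (I - \alpha \nabla^2 L_m(\theta_\epsilon, \mathcal{D}^s_m))\nabla L_m(\phi_m(\theta_\epsilon), \mathcal{D}^o_m) + \lambda \nabla D_h$ one at a time against its population version: replace $\nabla^2 L_m(\theta_\epsilon, \mathcal{D}^s_m)$ by $\nabla^2 L_m(\theta_\epsilon)$, replace $\nabla L_m(\phi_m, \mathcal{D}^o_m)$ by $\nabla L_m(\phi_m)$, and finally replace $\phi_m$ by $\bar\phi_m$. Applying Assumption \ref{bounded_var} and Jensen's inequality to each sample-average error yields $\mathbb{E}\|\nabla L_m(\cdot,\mathcal{D}^o_m)-\nabla L_m(\cdot)\| \le \sigma^g_m/\sqrt{D^q_m}$ and analogous $1/\sqrt{D^s_m}$ bounds for the support-set gradient and Hessian. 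Combining these with the smoothness/bounded-gradient bookkeeping from (b) produces the Hessian-swap contribution $\alpha\beta_m\sigma^h_m/\sqrt{D^s_m}$, an outer-gradient-swap contribution of order $(1+\alpha\mu)\sigma^g_m/\sqrt{D^q_m}$, and an inner-point-drift contribution of order $\alpha\mu(1+\alpha\mu)\sigma^g_m/\sqrt{D^s_m}$ transmitted through the chain-rule dependence of the outer gradient on $\phi_m$. Since $\sum_i w_i = 1$, these single-node target errors can be packaged alongside the source-side errors from (c) into the symmetric $\sum_i w_i \sigma^g_i(1/\sqrt{D^q_m} + 1/\sqrt{D^q_i})$ and $\sum_i w_i \sigma^h_i(1/\sqrt{D^s_m} + 1/\sqrt{D^s_i})$ forms appearing in \eqref{performance_ineq}.

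The principal obstacle is the choice of pivots in (a) and (b). In (b), the asymmetric pivot $(I - \alpha\nabla^2 L_i)\nabla L_m(\bar\phi_m)$ is what lets the Hessian-difference $\psi^h_i$ term pick up the target gradient norm $\beta_m$ rather than the source-side $\beta_i$; the symmetric alternative would mismatch the theorem's structure. In (a), $\phi_m$ depends on the sampled inner gradient, so $\mathcal{D}^s_m$ randomness enters both the Hessian factor and the evaluation point of the outer gradient, and one must propagate the inner-update variance through the smoothness of $L_m$ exactly once---producing the compound coefficient $\alpha\mu(1+\alpha\mu)$---rather than double-counting it. The rest is a bounded-gradient/Lipschitz-Hessian calculation assembled under expectation over the independent draws of $\mathcal{D}^s_m$ and $\mathcal{D}^o_m$.
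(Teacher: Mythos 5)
Your top-level decomposition differs from the paper's, though it is assembled from the same elementary estimates. The paper never introduces the population objectives $\bar F_m$ and $\bar F$: it writes $\nabla F_m(\theta_\epsilon)+\lambda\nabla D_h=\big(\sum_{i}w_i\nabla F_i(\theta_\epsilon)+\lambda\nabla D_h\big)+\big(\nabla F_m(\theta_\epsilon)-\sum_i w_i\nabla F_i(\theta_\epsilon)\big)$, kills the first bracket with the \emph{empirical} $\epsilon$-FOSP condition, and then bounds the empirical-versus-empirical task difference by splitting the chain-rule gradient into an outer-gradient piece and a Hessian--gradient-product piece; inside each piece it passes through the population gradients so that Assumption \ref{similarity} contributes $\psi^g_i,\psi^h_i$ and Assumption \ref{bounded_var} with Jensen contributes $\sigma_i/\sqrt{D_m}+\sigma_i/\sqrt{D_i}$ in one stroke. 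Your legs (a) and (b) reproduce this bookkeeping correctly: the asymmetric pivot $(I-\alpha\nabla^2 L_i)\nabla L_m(\bar\phi_m)$ is exactly what attaches $\beta_m$ to the $\psi^h_i$ term, and your coefficients $(1+\alpha\mu)$ on the $1/\sqrt{D^{q}}$ errors and $\alpha\mu(1+\alpha\mu)$ on the $1/\sqrt{D^{s}}$ errors are the ones the paper's own intermediate inequalities actually produce (the theorem statement swaps these two coefficients, which appears to be a transcription slip).

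The gap is in leg (c). Corollary \ref{coro:data_impact} does not bound $\Vert\nabla\bar F(\theta_\epsilon)\Vert$: both its statement and its proof concern $\Vert\sum_i w_i\nabla L_i\big(\theta_\epsilon-\alpha\nabla L_i(\theta_\epsilon)\big)+\lambda\nabla D_h(\theta_\epsilon,\theta_p)\Vert$, i.e.\ the quantity \emph{without} the MAML Jacobian $\big(I-\alpha\nabla^2 L_i(\theta_\epsilon)\big)$, whereas $\nabla\bar F$ carries that factor. Consequently the corollary's bound contains no source-side Hessian-variance term, and in your decomposition the $\alpha\beta\,\sigma^h_i/\sqrt{D^{s}_i}$ contributions in \eqref{performance_ineq} have nowhere to come from: leg (a) produces only target-side errors, leg (b) is purely population-level, and leg (c) as you invoke it produces only $\sigma^g_i$ terms. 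The repair is mechanical --- bound $\Vert\nabla\bar F(\theta_\epsilon)\Vert\le\epsilon+\Vert\nabla\bar F(\theta_\epsilon)-\nabla F(\theta_\epsilon)\Vert$ directly from the FOSP condition and estimate the full chain-rule sampling error, which supplies the missing $(1+\alpha\mu)$ factors and an $\alpha\beta_i\sigma^h_i/\sqrt{D^{s}_i}$ term --- but as written the citation of Corollary \ref{coro:data_impact} does not discharge this leg. Note also that this repair attaches $\beta_i$ rather than $\beta_m$ to the source-side Hessian variance; the paper gets $\beta_m$ there because it multiplies the joint difference $\nabla^2 L_m(\cdot,\mathcal{D}^{s}_m)-\nabla^2 L_i(\cdot,\mathcal{D}^{s}_i)$ by the target gradient norm, so the constant in your final bound would differ cosmetically from \eqref{performance_ineq}.
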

\begin{proof}
The detailed proof is presented in Appendix \ref{proof:performance} of the technical report \cite{yue2020inexactadmm}.
\end{proof}
Theorem \ref{performance} sheds light on the performance of fast adaptation with the previous knowledge, which depends on the size of datasets, the variance of stochastic gradient and Hessian, and the similarity between the target node and source nodes. In particular, if $D^{s}_i=\mathcal{O}(\epsilon^{-2})$ and $D^{q}_i=\mathcal{O}(\epsilon^{-2})$ for $i\in\mathcal{I}\cup\{m\}$, then an $\mathcal{O}\big(\epsilon+\sum_{i\in\mathcal{I}}w_i(\psi^h_i+\psi^g_i)\big)$-FOSP can be obtained at the target node. However, it is clear that the larger the dataset of source nodes dissimilar to the target node is, the worse the rapid adaptation performs. In the next subsection, we will show these issues can be alleviated via regularization with a good prior model.

\subsection{Forgetting effect on Prior Knowledge}
In this section, we quantify the forgetting effect of the previous task of Algorithm \ref{alg} in a special case, where the regularizer is squared Euclidean distance, i.e., $D_h(\theta,\theta_p)=\Vert\theta-\theta_p\Vert^2$. To do so, we first derive an upper bound of $D_h(\theta,\theta_p)$ via the following lemma. 
\begin{lemma}
\label{lem:forgetting}
Given Assumptions \ref{lowerbounded}-\ref{parameterassumption}, for any $\epsilon>0$, the $\epsilon$-FOSP solution $\theta_{\epsilon}$ obtained by Algorithm \ref{alg} satisfies that:
\begin{align}
    \label{eq:forgetting_general_ex}
    \mathbb{E}\big\{D_h(\theta_{\epsilon},\theta_p)\big\}\le\frac{1}{\lambda}\left(\epsilon+\sum_{i\in\mathcal{I}}w_i\left(\beta_i+\frac{\alpha\mu_i\sigma^g_i}{\sqrt{D^s_i}}+\frac{\sigma^g_i}{\sqrt{D^q_i}}\right)\right)\Vert\theta_{\epsilon}-\theta_p\Vert.
\end{align}
Particularly, suppose $D_h(\theta,\theta_p)$ is strongly convex with respect to $\theta$, i.e., there exists $M>0$ such that for $x,y\in\mathbb{R}^n$, the following holds:
\begin{align}
    \label{eq:strongly_convex_primal_ex}
    \left\langle \nabla D_h(x,\theta_p)-\nabla D_h(y,\theta_p),x-y\right\rangle\ge M\Vert x-y\Vert^2.
\end{align}
Then \eqref{eq:forgetting_general_ex} can be written as:
\begin{align}
    \label{eq:forgetting_general_strong_ex}
    \mathbb{E}\big\{D_h(\theta_{\epsilon},\theta_p)\big\}\le\frac{1}{M\lambda^2}\left(\epsilon+\sum_{i\in\mathcal{I}}w_i\left(\beta_i+\frac{\alpha\mu_i\sigma^g_i}{\sqrt{D^s_i}}+\frac{\sigma^g_i}{\sqrt{D^q_i}}\right)\right)^2.
\end{align}
\end{lemma}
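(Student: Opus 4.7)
The plan is to exploit the $\epsilon$-FOSP inequality $\Vert\nabla F(\theta_{\epsilon})\Vert\le\epsilon$ to control $\Vert\nabla_{\theta}D_h(\theta_{\epsilon},\theta_p)\Vert$, and then convert that gradient bound into a bound on the Bregman divergence itself by invoking (strong) convexity of $h$. Writing out the gradient of the objective,
\[
\nabla F(\theta)=\sum_{i\in\mathcal{I}}w_i\bigl(I-\alpha\nabla^2 L_i(\theta,\mathcal{D}^s_i)\bigr)\nabla L_i\bigl(\phi_i(\theta),\mathcal{D}^q_i\bigr)+\lambda\nabla_{\theta}D_h(\theta,\theta_p),
\]
the triangle inequality immediately yields
\[
\lambda\Vert\nabla_{\theta}D_h(\theta_{\epsilon},\theta_p)\Vert\le \epsilon+\Bigl\Vert\sum_{i\in\mathcal{I}}w_i\bigl(I-\alpha\nabla^2 L_i(\theta_{\epsilon},\mathcal{D}^s_i)\bigr)\nabla L_i\bigl(\phi_i(\theta_{\epsilon}),\mathcal{D}^q_i\bigr)\Bigr\Vert,
\]
and taking expectations over the random datasets turns the right-hand side into a sum of a deterministic piece and sampling-noise corrections.

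To bound the expected meta-gradient, I would use Assumption~\ref{Lsmooth} (which delivers $\Vert\nabla L_i\Vert\le\beta_i$ and $\Vert\nabla^2 L_i\Vert\le\mu_i$) to produce the leading $\beta_i$ contribution, and Assumption~\ref{bounded_var} applied to the sample-mean representations of the empirical gradient and empirical Hessian-gradient product: conditioning on $\mathcal{D}^s_i$ first controls the outer query-set fluctuation of $\nabla L_i(\phi_i(\theta_\epsilon),\mathcal{D}^q_i)$ by $\sigma^g_i/\sqrt{D^o_i}$, while conditioning on $\mathcal{D}^q_i$ and absorbing the $\mu_i$ operator-norm bound of the Hessian into the gradient-variance of the inner set contributes $\alpha\mu_i\sigma^g_i/\sqrt{D^{in}_i}$. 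Summing the per-task contributions weighted by $w_i$ gives
\[
\mathbb{E}\Bigl\Vert\sum_{i\in\mathcal{I}}w_i\bigl(I-\alpha\nabla^2 L_i(\theta_{\epsilon},\mathcal{D}^s_i)\bigr)\nabla L_i\bigl(\phi_i(\theta_{\epsilon}),\mathcal{D}^q_i\bigr)\Bigr\Vert\le\sum_{i\in\mathcal{I}}w_i\Bigl(\beta_i+\tfrac{\alpha\mu_i\sigma^g_i}{\sqrt{D^{in}_i}}+\tfrac{\sigma^g_i}{\sqrt{D^o_i}}\Bigr).
\]
To translate $\Vert\nabla_{\theta}D_h\Vert$ back into $D_h$ itself, I would use convexity of $h$: writing $D_h(\theta,\theta_p)=h(\theta)-h(\theta_p)-\langle\nabla h(\theta_p),\theta-\theta_p\rangle$ and applying $h(\theta_p)\ge h(\theta)+\langle\nabla h(\theta),\theta_p-\theta\rangle$ gives $D_h(\theta,\theta_p)\le\langle\nabla_{\theta}D_h(\theta,\theta_p),\theta-\theta_p\rangle$, and Cauchy--Schwarz then yields $D_h(\theta_{\epsilon},\theta_p)\le\Vert\nabla_{\theta}D_h(\theta_{\epsilon},\theta_p)\Vert\cdot\Vert\theta_{\epsilon}-\theta_p\Vert$. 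Combined with the two preceding displays and divided through by $\lambda$, this produces \eqref{eq:forgetting_general_ex}.

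For the strongly convex refinement the plan is to eliminate the residual factor $\Vert\theta_{\epsilon}-\theta_p\Vert$. Setting $y=\theta_p$ in \eqref{eq:strongly_convex_primal_ex} and noting that $\nabla_{\theta}D_h(\theta_p,\theta_p)=\nabla h(\theta_p)-\nabla h(\theta_p)=0$ yields $\langle\nabla_{\theta}D_h(\theta_{\epsilon},\theta_p),\theta_{\epsilon}-\theta_p\rangle\ge M\Vert\theta_{\epsilon}-\theta_p\Vert^{2}$, and Cauchy--Schwarz then forces $\Vert\theta_{\epsilon}-\theta_p\Vert\le \Vert\nabla_{\theta}D_h(\theta_{\epsilon},\theta_p)\Vert/M$. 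Substituting this into the preceding bound sharpens it to $D_h(\theta_{\epsilon},\theta_p)\le\Vert\nabla_{\theta}D_h(\theta_{\epsilon},\theta_p)\Vert^{2}/M$, and squaring the gradient bound already established delivers \eqref{eq:forgetting_general_strong_ex}. I expect the main technical obstacle to be the bookkeeping in the meta-gradient bound: $\phi_i$ depends on $\mathcal{D}^s_i$ while the outer gradient is taken on $\mathcal{D}^q_i$, so producing the exact stated coefficient $\alpha\mu_i\sigma^g_i/\sqrt{D^{in}_i}$ (rather than a deterministic $\alpha\mu_i\beta_i$ contribution or a Hessian-variance $\sigma^h_i$ term) requires iterated expectations that cleanly separate the two sources of randomness and exploit the bounded-gradient assumption to absorb the dominant deterministic piece into the single $\beta_i$ term.
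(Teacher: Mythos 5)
Your overall skeleton coincides with the paper's: isolate $\lambda\Vert\nabla_\theta D_h(\theta_\epsilon,\theta_p)\Vert$ by the triangle inequality, bound the remaining loss-gradient term, then use convexity of $h$ together with $D_h(\theta_p,\theta_p)=0$ and Cauchy--Schwarz to get $D_h(\theta_\epsilon,\theta_p)\le\Vert\nabla_\theta D_h(\theta_\epsilon,\theta_p)\Vert\,\Vert\theta_\epsilon-\theta_p\Vert$, and finally strong convexity with $\nabla_\theta D_h(\theta_p,\theta_p)=0$ to absorb $\Vert\theta_\epsilon-\theta_p\Vert\le\Vert\nabla_\theta D_h(\theta_\epsilon,\theta_p)\Vert/M$. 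Those last two conversions are exactly the paper's steps and are correct as you state them.

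The gap is in the middle step, and you have correctly identified where it is but your proposed resolution will not work. You want to show
$\mathbb{E}\bigl\Vert\sum_{i}w_i\bigl(I-\alpha\nabla^2 L_i(\theta_{\epsilon},\mathcal{D}^s_i)\bigr)\nabla L_i\bigl(\phi_i(\theta_{\epsilon}),\mathcal{D}^q_i\bigr)\bigr\Vert\le\sum_{i}w_i\bigl(\beta_i+\alpha\mu_i\sigma^g_i/\sqrt{D^{in}_i}+\sigma^g_i/\sqrt{D^o_i}\bigr)$,
i.e.\ that the cross term $\alpha\nabla^2 L_i(\theta_\epsilon,\mathcal{D}^s_i)\nabla L_i(\phi_i,\mathcal{D}^q_i)$ contributes only $\alpha\mu_i\sigma^g_i/\sqrt{D^{in}_i}$. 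But that cross term is a deterministic Hessian--gradient product of magnitude up to $\alpha\mu_i\beta_i$; it is not a sampling fluctuation, so no iterated-expectation argument under Assumption \ref{bounded_var} can shrink it to a $1/\sqrt{D^{in}_i}$ quantity. Bounding the raw empirical meta-gradient directly only yields the coefficient $(1+\alpha\mu_i)\beta_i$ (with no variance terms at all, since everything is empirical), which does not match the lemma. The paper obtains the stated coefficients by a different decomposition: it invokes Corollary \ref{coro:data_impact} to pass from the empirical FOSP condition to the \emph{population} quantity $\sum_i w_i\nabla L_i\bigl(\theta_\epsilon-\alpha\nabla L_i(\theta_\epsilon)\bigr)+\lambda\nabla_\theta D_h(\theta_\epsilon,\theta_p)$, paying the price $\sum_i w_i\sigma^g_i\bigl(\alpha\mu_i/\sqrt{D^{s}_i}+1/\sqrt{D^{q}_i}\bigr)$ for that replacement; the term then bounded by $\beta_i$ is the plain population gradient at the adapted point, with no Jacobian factor attached. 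To reproduce \eqref{eq:forgetting_general_ex} you should apply your triangle inequality to that population-level expression from Corollary \ref{coro:data_impact} rather than to $\nabla F(\theta_\epsilon)$ itself.
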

\begin{proof}
The detailed proof is presented in Appendix \ref{proof:forgetting} of the technical report \cite{yue2020inexactadmm}.
\end{proof}

For a current model parameter $\theta$, we define the \emph{forgetting cost} of $\theta$ on the previous task $p$ as $L_p(\theta)$ \cite{krishnan2020meta}, where $L_p(\cdot)$ is the expected loss over the data distribution of task $p$ (defined by \eqref{eq:expected_loss}). Based on Lemma \ref{lem:forgetting}, we next characterize the forgetting cost of the $\epsilon$-FOSP solution.
\begin{theorem}
\label{thm:forgetting}
Suppose that $L_p(\cdot)$ is $\mu_p$-smooth, $\Vert\nabla L_p(\theta_p)\Vert\le\epsilon_p$ for some $\epsilon_p>0$, and $D_h(\theta,\theta_p)=\Vert\theta-\theta_p\Vert^2$. Under Assumptions \ref{lowerbounded}-\ref{parameterassumption}, for the $\epsilon$-FOSP solution $\theta_{\epsilon}$, we have the following result:
\begin{align}
    \label{eq:forgeting_eucliean}
    \Vert\nabla L_p(\theta_{\epsilon})\Vert\le\epsilon_p+\frac{\nu_p}{2\lambda}\left(\epsilon+\sum_{i\in\mathcal{I}}w_i\left(\beta_i+\frac{\alpha\mu_i\sigma^g_i}{\sqrt{D^s_i}}+\frac{\sigma^g_i}{\sqrt{D^q_i}}\right)\right).
\end{align}
\end{theorem}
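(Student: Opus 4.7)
My approach is to reduce the bound on $\Vert\nabla L_p(\theta_\epsilon)\Vert$ to a displacement bound on $\Vert\theta_\epsilon - \theta_p\Vert$, and then to invoke the strongly-convex case of Lemma~\ref{lem:forgetting}. The key observation enabling this reduction is that $\theta_p$ serves both as the anchor of the regularizer (so closeness to $\theta_p$ is automatically controlled through $D_h$) and as an approximate stationary point of $L_p$ (so the gradient of $L_p$ at $\theta_p$ is small by hypothesis).

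First, I would use the $\mu_p$-smoothness of $L_p$ together with the triangle inequality, anchoring at $\theta_p$, to get
\begin{align*}
    \Vert\nabla L_p(\theta_\epsilon)\Vert \le \Vert\nabla L_p(\theta_p)\Vert + \Vert\nabla L_p(\theta_\epsilon)-\nabla L_p(\theta_p)\Vert \le \epsilon_p + \nu_p\Vert\theta_\epsilon - \theta_p\Vert,
\end{align*}
identifying $\nu_p$ with the smoothness constant of $L_p$. Everything therefore reduces to an upper bound on $\mathbb{E}\{\Vert\theta_\epsilon - \theta_p\Vert\}$.

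Second, for the specific choice $D_h(\theta,\theta_p) = \Vert\theta - \theta_p\Vert^2$ one has $\nabla D_h(x,\theta_p) = 2(x-\theta_p)$, so the strong convexity inequality \eqref{eq:strongly_convex_primal_ex} holds with a concrete constant $M$ (namely $M=2$). Hence Lemma~\ref{lem:forgetting}'s strongly convex bound \eqref{eq:forgetting_general_strong_ex} applies, and, combined with the identity $D_h(\theta_\epsilon,\theta_p) = \Vert\theta_\epsilon - \theta_p\Vert^2$, it yields a bound of the form
\begin{align*}
    \mathbb{E}\{\Vert\theta_\epsilon - \theta_p\Vert^2\} \le \frac{1}{M\lambda^2}\Big(\epsilon + \sum_{i\in\mathcal{I}}w_i\Big(\beta_i + \tfrac{\alpha\mu_i\sigma^g_i}{\sqrt{D^{in}_i}} + \tfrac{\sigma^g_i}{\sqrt{D^o_i}}\Big)\Big)^2.
\end{align*}

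Finally, since $\sqrt{\cdot}$ is concave, Jensen's inequality gives $\mathbb{E}\{\Vert\theta_\epsilon - \theta_p\Vert\} \le \sqrt{\mathbb{E}\{\Vert\theta_\epsilon - \theta_p\Vert^2\}}$, which turns the squared bound above into a linear one in $1/(\lambda\sqrt{M})$. Plugging this back into the smoothness inequality (after taking expectations) yields the claim. The proof is essentially bookkeeping: the heavy stochastic analysis is already buried inside Lemma~\ref{lem:forgetting}, so the only things to be careful about are (i) verifying that the squared-Euclidean regularizer truly satisfies \eqref{eq:strongly_convex_primal_ex} with an explicit $M>0$, and (ii) legitimately passing the expectation through the square root via Jensen rather than illegally through $\Vert\cdot\Vert$ itself. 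I do not foresee a substantive obstacle beyond these two checks.
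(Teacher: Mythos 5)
Your proposal is correct and follows exactly the route the paper intends: its own proof is the one-line remark that the result follows directly from Lemma~\ref{lem:forgetting}, and your argument (smoothness anchored at $\theta_p$, the squared-Euclidean regularizer satisfying \eqref{eq:strongly_convex_primal_ex} with $M=2$, then Jensen to pass from the bound on $\mathbb{E}\{\Vert\theta_\epsilon-\theta_p\Vert^2\}$ to one on $\mathbb{E}\{\Vert\theta_\epsilon-\theta_p\Vert\}$) is the natural unpacking of that remark, correctly reproducing the $1/(\lambda\sqrt{M})$ factor. Your two caveats are also the right ones to flag, including the paper's implicit identification of $\nu_p$ with the stated smoothness constant $\mu_p$ and the fact that the left-hand side of \eqref{eq:forgeting_eucliean} should properly carry an expectation.
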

\begin{proof}
The result can be directly obtained by Lemma \ref{lem:forgetting}.
\end{proof}
Based on Theorem \ref{thm:forgetting} and corollary \ref{coro:data_impact}, it is clear that by selecting a suitable $\lambda$, the regularizer enables the meta-model to learn on the current task while maintaining good performance on the previous task. On the other hand, combined with Theorem \ref{performance}, Theorem \ref{thm:forgetting} also implies that due to the independence of \eqref{eq:forgeting_eucliean} and similarity conditions, a good prior reference model (e.g., with a relatively small $\Vert \nabla L_m(\theta_p)\Vert$) can effectively alleviate the significant performance degradation caused by the dissimilarity between the source nodes and the target nodes. 


\section{Experimental Results}

\begin{table*}[htbp]
\centering
\begin{tabular}{ccccc}
\toprule
Dataset                                                                   & \# local updates & FedAvg             & Per-FedAvg         & ADMM-FedMeta       \\ \midrule
\multirow{3}{*}{\begin{tabular}[c]{@{}l@{}}Fashion-\\ MNIST\end{tabular}} & 1     & $83.99\%\pm3.90\%$ & $87.55\%\pm2.42\%$ & \bf{95.69\%$\pm$0.37\%} \\
                                                                          & 5     & $88.86\%\pm1.57\%$ & $89.65\%\pm3.26\%$ & N/A                  \\
                                                                          & 10    & $85.29\%\pm1.93\%$ & $90.95\%\pm2.71\%$ & N/A                  \\ \midrule
\multirow{3}{*}{CIFAR-10}                                                 & 1     & $41.97\%\pm1.33\%$ & $60.53\%\pm1.12\%$ & \bf{74.61\%$\pm$2.19\%} \\
                                                                          & 5     & $56.58\%\pm2.27\%$ & $65.93\%\pm9.97\%$ & N/A                  \\
                                                                          & 10    & $56.58\%\pm1.15\%$ & $67.43\%\pm0.99\%$ & N/A                  \\ \midrule
\multirow{3}{*}{CIFAR-100}                                                & 1     & $42.35\%\pm1.55\%$ & $48.19\%\pm2.18\%$ & \bf{63.56\%$\pm$0.87\%} \\
                                                                          & 5     & $50.00\%\pm1.09\%$ & $49.56\%\pm1.09\%$ & N/A                  \\
                                                                          & 10    & $49.97\%\pm1.04\%$ & $48.73\%\pm1.23\%$ & N/A                  \\ \bottomrule
\end{tabular}
\caption{Comparison of the accuracy on target nodes of different algorithms.}
\vspace{-14pt}
\label{table:comparison_confidence}
\end{table*}

\begin{figure*}[t]
\centering
\subfigure{
\includegraphics[width=0.25\linewidth]{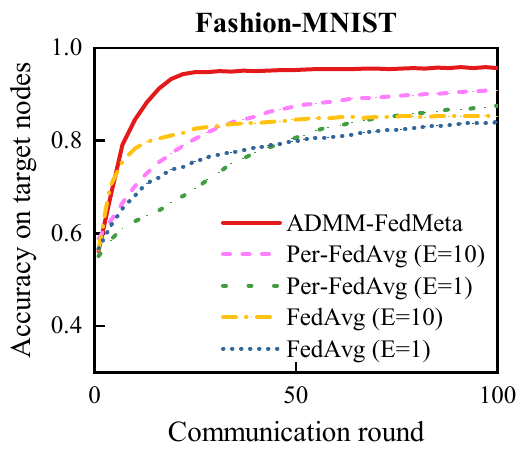}
}
\subfigure{
\includegraphics[width=0.25\linewidth]{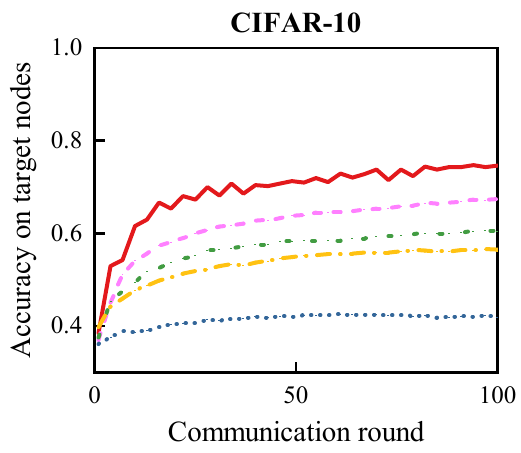}
}
\subfigure{
\includegraphics[width=0.25\linewidth]{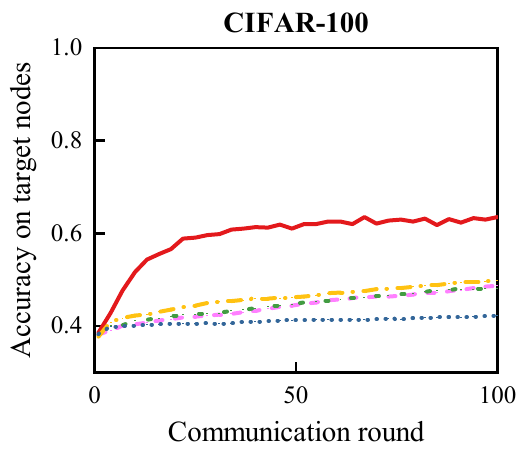}
}
\vspace{-12pt}
\caption{Comparison of the convergence speed of different algorithms.}
\vspace{-6pt}
\label{figure:convergence}
\end{figure*}

In this section, we evaluate the experimental performance of ADMM-FedMeta on different datasets and models. In particular, our experimental studies are designed to evaluate the performance of the proposed ADMM-FedMeta algorithm in challenging edge learning settings where edge nodes have limited data samples. Specifically, we assume that each source node has only tens of data samples during the training stage and that in the testing phase, each target node has only 10-20 data samples. Clearly, edge learning in these settings is highly nontrivial, particularly for sophisticated datasets (e.g., CIFAR-100).
\label{subsection:experimental_setup}

\textbf{Datasets and models.} We evaluate the performance of ADMM-FedMeta on three widely-used benchmarks, including Fashion-MNIST \cite{xiao2017fashion}, CIFAR-10 \cite{krizhevsky2009learning}, and CIFAR-100 \cite{krizhevsky2009learning}. Specifically, the data is distributed among $I$ edge nodes as follows: 1) Each node has samples from only two random classes \cite{lin2020collaborative}; 2) the number of samples per node follows a discrete uniform distribution, i.e., $D_i\sim U(a,b)$ for $i\in\mathcal{I}$. Here we set $a=20$, $b=40$, $I=50$ for Fashion-MNIST and CIFAR-10, and $I=100$ for CIFAR-100. We randomly select 80\% and 20\% nodes as the source nodes and the target nodes respectively. For each node, we divide the local dataset into a support set and a query set (i.e., $\mathcal{D}^{s}_i$ and $\mathcal{D}^q_i$), each with 50\% of the local data. We set the meta-step stepsize as $\alpha=0.01$, the penalty parameters $\rho=0.3$ for Fashion-MNIST, and $\rho=0.7$ for CIFAR-10 and CIFAR-100, where $\rho_i=\rho$ for all $i\in\mathcal{I}$. We set the regularizer as squared $\ell_2$-norm, and the degree of freedom parameter $\delta_{i,t}=1/(10t+100)$ with $t=1,2,\dots,100$ for $i\in\mathcal{I}$. For Fashion-MNIST, we use a convolutional neural network (CNN) with max-pooling operation and Exponential Linear Unit (ELU) activation function, which contains two convolutional layers with sizes 32 and 64 followed by a fully connected layer and softmax. The strides are set as 1 for convolution operation and 2 for pooling operation. For CIFAR-10 and CIFAT-100, we use a CNN containing three convolutional layers with sizes 32, 64, and 128, and a CNN containing four convolutional layers with sizes 32, 64, 128, and 256, respectively, while keeping all the other setups the same as that in Fashion-MNIST. 

\textbf{Implementation.}
We implement the code in TensorFlow Version 1.14 on a server with two Intel$^\circledR$ Xeon$^\circledR$ Golden 5120 CPUs and one Nvidia$^\circledR$ Tesla-V100 32G GPU. Please refer to \url{https://github.com/XinJiang1994/HFmaml} for full details.

\textbf{Baselines.} We consider two existing baseline algorithms, i.e., FedAvg \cite{mcmahan2017communication} and Per-FedAvg \cite{fallah2020personalized} with one or multiple local update steps. For the sake of fair comparison,  we test different hyper-parameters of Per-FedAvg from $\{0.001,0.005,0.01,0.05,0.1\}$ (i.e.,$\beta$ in \cite[Algorithm 1]{fallah2020personalized}), and select the best for the experiments, i.e., 0.005 for Fashion-MNIST and CIFAR-10, and 0.001 for CIFAR-100. 

To demonstrate the impact of the knowledge transfer and the inexact-ADMM based methods respectively, we first remove the regularization term (i.e., letting $\lambda=0$) and compare the convergence speed and adaptation performance between ADMM-FedMeta and the baselines. Then, we conduct the experiment using a prior model for regularization and show the performance improvement in terms of convergence, adaptation, and forgetting effect.


\textbf{Performance and computational efficiency.} To be fair,  we set $\lambda=0$ to remove the benefit of using the regularization in ADMM-FedMeta. We repeat the experiments 10 times, then show the comparison of the accuracy along with 95\% confidence intervals in Table \ref{table:comparison_confidence}. We have the following observations. (1) ADMM-FedMeta substantially outperforms Per-FedAvg and FedAvg, especially on sophisticated datasets. Specifically, ADMM-FedMeta achieves 7.7\% over FedAvg and 5.2\% over Per-FedAvg on Fashion-MNIST, 31.87\% over  FedAvg and 10.65\% overPer-FedAvg on CIFAR-10, and 27.12\% over FedAvg and 28.25\% over Per-FedAvg on CIFAR-100. Note that the computation costs of each local update are $\mathcal{O}(n)$, $\mathcal{O}(n^2)$, and $\mathcal{O}(n)$ for FedAvg, Per-FedAvg, and ADMM-FedMeta, respectively. This performance improvement clearly indicates that ADMM-FedMeta is more computationally efficient on non-convex loss and heterogeneous datasets with a small number of data samples. (2) It should be noted that the gaps between ADMM-FedMeta and the two baselines on CIFAR-10 are smaller when the number of local updates increases. The underlying rationale is that with more local update steps, the number of the overall iterations in the two baseline algorithms actually increases, thus resulting in a better model. However, a large number of local update steps would lead to high computational cost and may cause failure to convergence \cite[Theorem 4.5]{fallah2020personalized}. 

\textbf{Convergence speed and hyper-parameter.} As shown in Figure \ref{figure:convergence}, ADMM-FedMeta converges significantly faster than the existing methods, often requiring tens of rounds to obtain a high-quality meta-model, which indicates ADMM-FedMeta can achieve a great communication efficiency. Besides, Figure \ref{figure:convergence} also suggests that despite the sample size is small, edge nodes can obtain a satisfactory model via federated meta-learning with only one-step stochastic gradient descent. Further, we investigate the impact of the hyper-parameter $\rho$ on the convergence of ADMM-FedMeta (we let $\rho_i$ be the same across different nodes). It can be seen from Figure \ref{figure:rho} that ADMM-FedMeta has a relatively faster convergence speed with a smaller $\rho$ in terms of the training loss. In particular, a small change of $\rho$ does not greatly affect the convergence properties of the algorithm, which implies that ADMM-FedMeta is robust to the hyper-parameters.

\begin{figure*}[t]
\centering
\subfigure{
\includegraphics[width=0.25\linewidth]{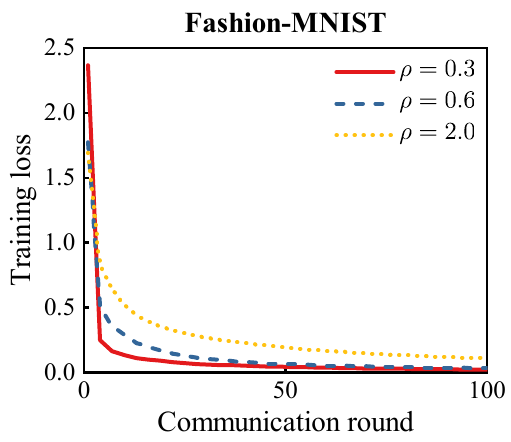}
}
\subfigure{
\includegraphics[width=0.25\linewidth]{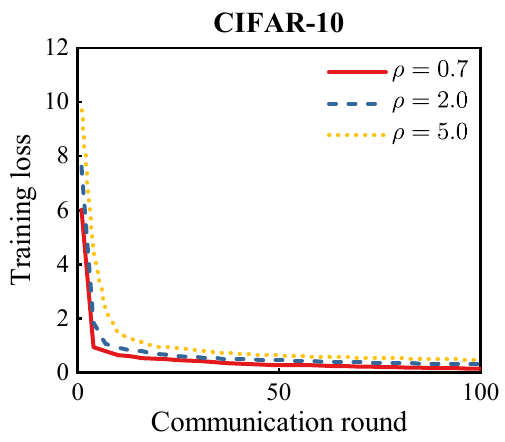}
}
\subfigure{
\includegraphics[width=0.25\linewidth]{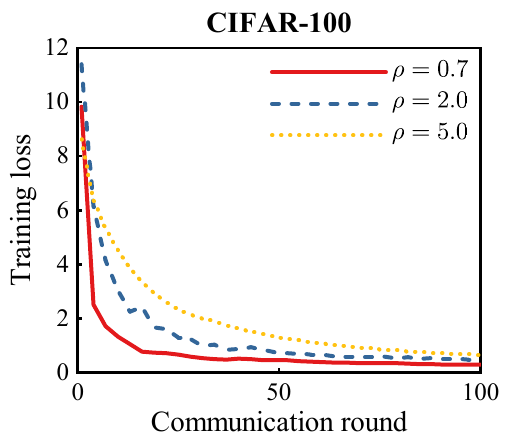}
}
\vspace{-12pt}
\caption{Impact of penalty parameter $\rho$.}
\vspace{0pt}
\label{figure:rho}
\end{figure*}

\begin{table}[htpb]
\begin{tabular}{ccccccc}
\toprule
Dataset                                                                   & Task & \begin{tabular}[c]{@{}c@{}}Prior\\ Model\end{tabular} & FedAvg  & \begin{tabular}[c]{@{}c@{}}Per-\\ FedAvg\end{tabular} & \begin{tabular}[c]{@{}c@{}}ADMM-\\ FedMeta\end{tabular} &  \\ \midrule
\multirow{2}{*}{\begin{tabular}[c]{@{}c@{}}Fashion-\\ MNIST\end{tabular}} & Prior  & 95.63\%                                               & 41.27\% & 49.60\%                                               & \textbf{92.86\%}                                        &  \\
                                                                          & New  & 49.21\%                                                   & 94.05\% & 94.84\%                                               & \textbf{94.04\%}                                        &  \\ \midrule
\multirow{2}{*}{CIFAR-10}                                                 & Prior  & 75.74\%                                               & 41.08\% & 42.08\%                                               & \textbf{62.38\%}                                        &  \\
                                                                          & New  & 17.33\%                                                   & 55.45\% & 47.03\%                                               & \textbf{71.29\%}                                        &  \\ \midrule
\multirow{2}{*}{CIFAR-100}                                                & Prior  & 66.27\%                                               & 35.32\% & 37.62\%                                               & \textbf{59.52\%}                                        &  \\
                                                                          & New  & 45.63\%                                                   & 40.48\% & 57.92\%                                               & \textbf{63.10\%}                                        &  \\ \bottomrule
\end{tabular}
\caption{Comparison of the accuracy on prior and current tasks.}
\vspace{-16pt}
\label{table:forgetting}
\end{table}

\textbf{Forgetting effect to the previous task.} To demonstrate the forgetting effect to the prior task on different algorithms, we pre-train a model $\theta_p$ with satisfactory performance on the data of the first five classes as a prior task, and then use $\theta_p$ as the initialization to train the meta-models by different algorithms on the data of the last five classes as a new task. Similarly, over CIFAR-100, we use the first fifty classes and the last fifty classes as the prior and new tasks respectively. After that, We test the adaptation performance of the meta-models on the prior and new tasks to show the forgetting effect. We set $\lambda=0.5$ for Fashion-MNIST, $\lambda=1$ for CIFAR-10 and CIFAR-100, and use squared Euclidean distance as the regularizer. As illustrated in Table \ref{table:forgetting}, the existing methods suffer from the catastrophic forgetting on the previous task due to a lack of mechanisms to extract the knowledge from the prior model. Clearly, ADMM-FedMeta can effectively mitigate this issue via a regularization with the prior model, while also maintaining a satisfactory performance on the new task.

\begin{figure}[t]
\centering
\includegraphics[width=0.975\columnwidth]{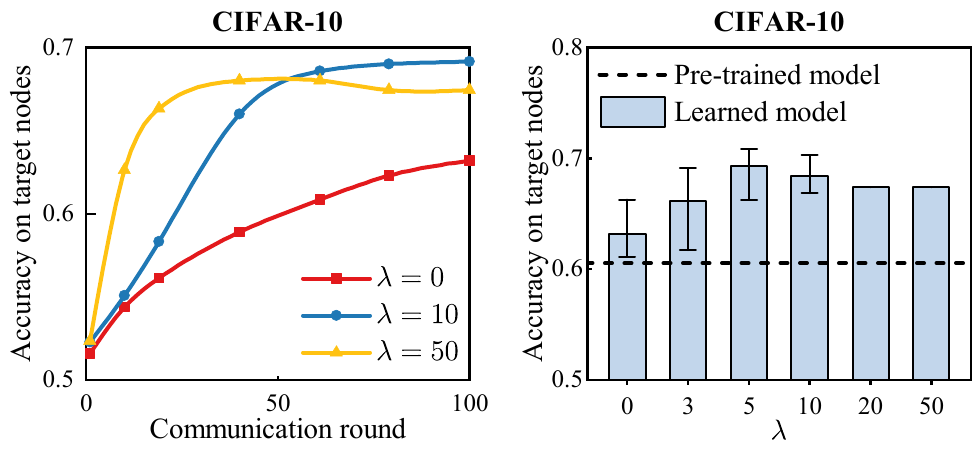} 
\vspace{-10pt}
\caption{Impact of $\theta_p$ and $\lambda$.}
\vspace{-16pt}
\label{figure:lambda}
\end{figure}

\textbf{Impact of prior knowledge.} To quantify the impact of the knowledge transfer on the convergence of model training and the adaptation performance of target nodes, we pre-train a prior model as $\theta_p$ using  images of 3-10 classes on CIFAR-10. Then, we train the meta-model $\theta$ on source nodes with images of 1-8 classes and test the accuracy on the target nodes with all 1-10 classes. In this way, the pre-trained model can be considered as containing valuable knowledge of the current task. As shown in Figure \ref{figure:lambda}, with the useful knowledge transferred from the previous task to the edge nodes, faster convergence, and higher adaptation performance are achieved by appropriately selecting the trade-off parameter $\lambda$. In other words, the regularization with useful knowledge transfer can help get a high-quality meta-model and achieve faster edge learning. 

\section{Conclusion}
In this paper, we presented an  inexact-ADMM based federated meta-learning approach for  fast and continual edge learning. More specifically, we first proposed a platform-aided federated meta-learning architecture enabling edge nodes to collaboratively learn a meta-model with the knowledge transfer of previous tasks. We cast the federated meta-learning problem as a regularized optimization problem, where the previous knowledge is extracted as regularization. Then, we devised an ADMM based algorithm, namely ADMM-FedMeta, in which the original problem is decomposed into many subproblems which can be solved in parallel across edge nodes and the platform. Further, we developed a variant of the inexact-ADMM method to reduce the computational cost per round to $\mathcal{O}(n)$ via employing  linear approximation as well as Hessian estimation. We provided a comprehensive analysis and empirical results to demonstrate the effectiveness and efficiency of ADMM-FedMeta. The advantages of the proposed algorithm are summarized as follows: First, it can decouple the regularizer from edge nodes to the platform, which helps to  alleviate the local computational cost while exploiting the resources between local devices and the server effectively. Besides, by inexact-ADMM technique, we further reduce the computational complexity during local update and global aggregation. We show that ADMM-FedMeta can converge under  mild conditions, particularly,  with weak \emph{task similarity} assumptions. Lastly, empirical results show that ADMM-FedMeta converges  faster than existing benchmark algorithms.

There are a number of interesting questions and directions for future work. First, it is of interest to incorporate the experience replay method and parameter isolation approaches into the ADMM-FedMeta to further mitigate the catastrophic forgetting. Secondly, despite ADMM-FedMeta can be directly applied to reinforcement learning   with policy gradient, it may  lead to poor sample efficiency. It remains largely open to develop efficient collaborative reinforcement learning  for edge learning. Moreover, our experimental results indicate that even without the regularization term, in practice ADMM-FedMeta can still converge  faster than the existing gradient-based methods, especially on small sample sizes. It is intriguing to get a more deep understanding of this phenomenon. 

\section{Acknowledgments}

This research was supported in part by NSF under Grants CNS-2003081 and CPS-1739344, National Key R\&D Program of China under Grant No. 2019YFA0706403, National Natural Science Foundation of China under Grants No. 62072472, 61702562 and U19A2067,  Natural Science Foundation of Hunan Province, China under Grant No. 2020JJ2050, 111 Project under Grant No. B18059, the Young Elite Scientists Sponsorship Program by CAST under Grant No. 2018QNRC001, the Young Talents Plan of Hunan Province of China under Grant No. 2019RS2001, and also financially supported by China 
Scholarship Council (CSC).


\bibliographystyle{ACM-Reference-Format}
\bibliography{reference}


\onecolumn
\section*{Appendix}
\appendix


\section{Proof of Lemma \ref{fsmoothlemma}}
\label{proof:smooth}
The proof is standard. For simplicity, we denote $F_i(\theta)\triangleq L_i\big(\theta-\alpha\nabla L_i(\theta,\mathcal{D}^{s}_i),\mathcal{D}^q_i\big)$. 
Recall that $\nabla F_i(x)=\big(I_n-\alpha\nabla^2 L_i(x,\mathcal{D}^{s}_i)\big)\nabla L_i\big(x-\alpha\nabla L_i(x,\mathcal{D}^{s}_i),\mathcal{D}^q_i\big)$, and we have
\begin{align}
\label{fsmooth_eq1}
\big\Vert\nabla F_i(x)-\nabla F_i(y)\big\Vert\le&\big\Vert \nabla L_i\big(x-\alpha\nabla L_i(x,\mathcal{D}^{s}_i),\mathcal{D}^q_i\big)-\nabla L_i\big(y-\alpha\nabla L_i(y,\mathcal{D}^{s}_i),\mathcal{D}^q_i\big) \big\Vert\\
\nonumber
&+\alpha \big\Vert\nabla^2 L_i(x,\mathcal{D}^{s}_i)\nabla L_i\big(x-\alpha\nabla L_i(x,\mathcal{D}^{s}_i),\mathcal{D}^q_i\big)\\
\label{fsmooth_eq2}
&-\nabla^2 L_i(y,\mathcal{D}^{s}_i)\nabla L_i\big(y-\alpha\nabla L_i(y,\mathcal{D}^{s}_i),\mathcal{D}^q_i\big)\big\Vert,
\end{align}
and
\begin{align}
    \label{Hessianbound}
    -\mu_i I_n\preceq \nabla^2 L_i(x,\mathcal{D}^{s}_i) \preceq \mu_i I_n,~\forall x\in\mathbb{R}^n.
\end{align}
To prove (\ref{fsmooth}), we need to bound (\ref{fsmooth_eq1}) and (\ref{fsmooth_eq2}). For (\ref{fsmooth_eq1}), based on Assumption \ref{Lsmooth}, we have 
\begin{align}
    \label{fsmooth_eq3}
    \nonumber
    &\big\Vert \nabla L_i\big(x-\alpha\nabla L_i(x,\mathcal{D}^{s}_i),\mathcal{D}^q_i\big)-\nabla L_i\big(y-\alpha\nabla L_i(y,\mathcal{D}^{s}_i),\mathcal{D}^q_i\big) \big\Vert\\
    \nonumber
    \le&\mu_i\big\Vert x-y-\alpha\big(\nabla L_i(x,\mathcal{D}^{s}_i)-\nabla L_i(y,\mathcal{D}^{s}_i)\big)\big\Vert\\
    \nonumber
    \le&\mu_i\big( \Vert x-y\Vert + \alpha\Vert \nabla L_i(x,\mathcal{D}^{s}_i)-\nabla L_i(y,\mathcal{D}^{s}_i)\Vert\big)\\
    \le& (1+\alpha\mu_i)\mu_i\Vert x-y\Vert.
\end{align}
To bound (\ref{fsmooth_eq2}), it can be shown that
\begin{align}
    \label{fsmooth_eq4}
    \nonumber
    &\big\Vert\nabla^2 L_i(x,\mathcal{D}^{s}_i)\nabla L_i\big(x-\alpha\nabla L_i(x,\mathcal{D}^{s}_i),\mathcal{D}^q_i\big)-\nabla^2 L_i(y,\mathcal{D}^{s}_i)\nabla L_i\big(y-\alpha\nabla L_i(y,\mathcal{D}^{s}_i),\mathcal{D}^q_i\big)\big\Vert\\
    \nonumber
    =&\big\Vert\nabla^2 L_i(x,\mathcal{D}^{s}_i)\nabla L_i\big(x-\alpha\nabla L_i(x,\mathcal{D}^{s}_i),\mathcal{D}^q_i\big)-\nabla^2 L_i(x,\mathcal{D}^{s}_i)\nabla L_i\big(y-\alpha\nabla L_i(y,\mathcal{D}^{s}_i),\mathcal{D}^q_i\big)\\
    \nonumber
    &+\nabla^2 L_i(x,\mathcal{D}^{s}_i)\nabla L_i\big(y-\alpha\nabla L_i(y,\mathcal{D}^{s}_i),\mathcal{D}^q_i\big)-\nabla^2 L_i(y,\mathcal{D}^{s}_i)\nabla L_i\big(y-\alpha\nabla L_i(y,\mathcal{D}^{s}_i),\mathcal{D}^q_i\big)\big\Vert\\
    \nonumber
    \le& \big\Vert \nabla^2 L_i(x,\mathcal{D}^{s}_i)\big\Vert\cdot\big\Vert \nabla L_i\big(x-\alpha\nabla L_i(x,\mathcal{D}^{s}_i),\mathcal{D}^q_i\big)-\nabla L_i\big(y-\alpha\nabla L_i(y,\mathcal{D}^{s}_i),\mathcal{D}^q_i\big)\big\Vert 
    \\
    \nonumber
    &+\big\Vert \nabla^2 L_i(x,\mathcal{D}^{s}_i)-\nabla^2 L_i(y,\mathcal{D}^{s}_i)\big\Vert\cdot\big\Vert\nabla L_i\big(y-\alpha\nabla L_i(y,\mathcal{D}^{s}_i),\mathcal{D}^q_i\big)\big\Vert\\
    \nonumber
    \le& \Big((1+\alpha\mu_i)\mu^2_i+\zeta_i\big\Vert\nabla L_i\big(y-\alpha\nabla L_i(y,\mathcal{D}^{s}_i),\mathcal{D}^q_i\big)\big\Vert\Big)\Vert x-y\Vert\\
    \le& \big((1+\alpha\mu_i)\mu^2_i+\zeta_i\beta_i\big)\Vert x-y\Vert,
\end{align}
where (\ref{fsmooth_eq4}) follows from (\ref{gradientbound}), (\ref{Hessianbound}), (\ref{fsmooth_eq3}) and Assumption \ref{HessianLipschitz}. 
Combining (\ref{fsmooth_eq3}) and (\ref{fsmooth_eq4}) yields the result.

\section{Proof of Lemma \ref{dualbound}}
Based on Lemma \ref{fsmoothlemma}, we next prove the Lemmas \ref{dualbound}-\ref{lagrangian_lowerbound} for the convergence analysis. In the following lemma, we first bound the variations of $y^t_i$ via the variations of $\theta^t$. For simplicity, denote 
\begin{align}
    f_i(\theta)\triangleq L_i(\theta,\mathcal{D}^{s}_i),~
    f^q_i(\phi_i)\triangleq L_i(\phi_i,\mathcal{D}^q_i),~
    F_i(\theta)\triangleq L_i\big(\theta-\alpha\nabla L_i(\theta,\mathcal{D}^{s}_i),\mathcal{D}^q_i\big).
\end{align}
\begin{lemma}
\label{dualbound}
Suppose that Assumption \ref{lowerbounded}-\ref{HessianLipschitz} are satisfied. Then, the following holds true
\begin{align}
    \label{dualbound_ineq_1}
    \Vert y^{t+1}_i- y^t_i\Vert\le w_i\nu_i\Vert\theta^{t+1}-\theta^t\Vert + (\delta_{i,t}+\delta_{i,t+1})\alpha w_i\zeta_i\beta^2_i.
\end{align}
\end{lemma}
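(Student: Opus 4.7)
The plan is to first reduce the dual update to an explicit, closed-form expression in terms of only the primal quantities and the inexact Hessian-vector product, so that $y^{t+1}_i-y^t_i$ can be expressed as the change in an approximation of $\nabla F_i$. Combining the $y_i$-update \eqref{eq:update_y} with the primal update \eqref{eq:update_thetai_inex} and cancelling $y^{t-1}_i$, I would obtain the telescoped identity
\begin{align*}
    y^t_i \;=\; -w_i\bigl(\nabla L_i(\phi^t_i,\mathcal{D}^{q}_i) - \alpha\, g^t_i\bigr),
\end{align*}
which removes all explicit dependence on past dual iterates. Taking successive differences then gives
\begin{align*}
    y^{t+1}_i - y^t_i \;=\; -w_i\!\left[\bigl(r^{t+1}_i-\alpha g^{t+1}_i\bigr) - \bigl(r^t_i-\alpha g^t_i\bigr)\right],
\end{align*}
with $r^t_i\triangleq\nabla L_i(\phi^t_i,\mathcal{D}^{q}_i)$ as defined after \eqref{estimatehessian}.

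Second, I would add and subtract the exact Hessian-vector product $\nabla^2 L_i(\theta^t,\mathcal{D}^{s}_i)\,r^t_i$ at each index $t,t+1$, using the identity $\nabla F_i(\theta) = (I-\alpha\nabla^2 L_i(\theta,\mathcal{D}^{s}_i))\,\nabla L_i(\phi_i(\theta),\mathcal{D}^{q}_i)$. This rewrites the bracket as $\nabla F_i(\theta^{t+1}) - \nabla F_i(\theta^t)$ plus two Hessian-approximation residuals. By the triangle inequality,
\begin{align*}
    \Vert y^{t+1}_i - y^t_i\Vert \;\le\; w_i\Vert\nabla F_i(\theta^{t+1})-\nabla F_i(\theta^t)\Vert + \alpha w_i \sum_{\tau\in\{t,t+1\}}\Vert \nabla^2 L_i(\theta^\tau,\mathcal{D}^{s}_i)\,r^\tau_i - g^\tau_i\Vert.
\end{align*}
Lemma~\ref{fsmoothlemma} controls the first term by $w_i\nu_i\Vert\theta^{t+1}-\theta^t\Vert$.

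Third, I would bound each Hessian approximation residual. Writing $g^t_i$ via the integral form $\nabla L_i(\theta^t\pm\delta_{i,t} r^t_i,\mathcal{D}^{s}_i)-\nabla L_i(\theta^t,\mathcal{D}^{s}_i)=\int_0^{\pm\delta_{i,t}}\nabla^2 L_i(\theta^t+s\,r^t_i,\mathcal{D}^{s}_i)\,r^t_i\,ds$ and subtracting the leading term yields
\begin{align*}
    \Vert g^t_i - \nabla^2 L_i(\theta^t,\mathcal{D}^{s}_i)\,r^t_i\Vert
    \;\le\; \frac{1}{2\delta_{i,t}}\!\int_{-\delta_{i,t}}^{\delta_{i,t}}\!\!\Vert \nabla^2 L_i(\theta^t+s r^t_i,\mathcal{D}^{s}_i)-\nabla^2 L_i(\theta^t,\mathcal{D}^{s}_i)\Vert\cdot\Vert r^t_i\Vert\,ds.
\end{align*}
Assumption~\ref{HessianLipschitz} gives a factor $\zeta_i|s|\Vert r^t_i\Vert$ inside the integral, and Assumption~\ref{Lsmooth} (specifically \eqref{gradientbound}) provides $\Vert r^t_i\Vert=\Vert\nabla L_i(\phi^t_i,\mathcal{D}^{q}_i)\Vert\le\beta_i$, so the residual is at most $\tfrac{1}{2}\delta_{i,t}\zeta_i\beta_i^2\le\delta_{i,t}\zeta_i\beta_i^2$. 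Summing the $t$ and $t{+}1$ residuals and substituting back yields \eqref{dualbound_ineq_1}.

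The main obstacle is the third step: the dual gap only behaves well because the symmetric finite-difference estimator in \eqref{estimatehessian} has a second-order bias, and extracting this requires the Hessian-Lipschitz hypothesis together with the uniform gradient bound $\beta_i$ to tame $\Vert r^t_i\Vert^2$. Without either of these the residual would scale like $\delta_{i,t}\Vert r^t_i\Vert^2$ rather than $\delta_{i,t}\beta_i^2$, and the summability of $\{\delta_{i,t}\}$ demanded in Assumption~\ref{parameterassumption} would not be enough to drive the dual variations to zero in the subsequent convergence argument.
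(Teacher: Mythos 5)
Your proposal is correct and follows essentially the same route as the paper: both reduce the dual update to the closed-form identity $-y^t_i = w_i(\nabla L_i(\phi^t_i,\mathcal{D}^{q}_i)-\alpha g^t_i)$, split the successive difference into the exact gradient difference (controlled by Lemma~\ref{fsmoothlemma}) plus two Hessian-estimation residuals, and bound each residual by $\delta\,\alpha\zeta_i\beta_i^2$. The only difference is that you derive the finite-difference estimator bound from the integral form yourself (obtaining the slightly sharper constant $\tfrac{1}{2}\delta_{i,t}\zeta_i\beta_i^2$), whereas the paper cites it directly from the Fallah et al. reference.
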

\begin{proof}
\label{proof_dualbound}
First, define
\begin{align}
    \label{tilde_F}
    \tilde{\nabla}F_i(\theta^{t+1})\triangleq \nabla f^q_i(\phi^{t+1}_i)-\alpha g^{t+1}_i,
\end{align}
where $\phi^{t+1}_i=\theta^{t+1}-\alpha\nabla f_i(\theta^{t+1})$. We have the following observation from (\ref{eq:update_thetai_inex})
\begin{align}
    \label{dualbound_eq1}
    w_i\tilde{\nabla}F_i(\theta^{t+1})+ y^t_i+\rho_i(\theta^{t+1}_i-\theta^{t+1})=0.
\end{align}
Using (\ref{eq:update_y}), we conclude that (\ref{dualbound_eq1}) is equivalent to
\begin{align}
    \label{dualbound_eq2}
    - y^{t+1}_i=w_i\tilde{\nabla}F_i(\theta^{t+1}).
\end{align}
Thus, for all $t\in\mathbb{N}$, the following is true
\begin{align}
    \label{dualbound_eq3}
    \Vert y^{t+1}_i- y^t_i\Vert= w_i\underbrace{\Vert\tilde{\nabla}F_i(\theta^{t+1})- \tilde{\nabla}F_i(\theta^t)\Vert}_{\text{(a)}}.
\end{align}
Using Lemma \ref{fsmoothlemma}, we derive the upper bound of (a) as follows
\begin{align}
    \label{dualbound_eq4}
    \nonumber
    \text{(a)}
    =&\big\Vert\big(\nabla F_i(\theta^{t+1})-\tilde{\nabla}F_i(\theta^{t+1})\big)-\big(\nabla F_i(\theta^{t+1})-\tilde{\nabla}F_i(\theta^t)\big)\big\Vert\\
    \nonumber
    \le&\Vert\nabla F_i(\theta^{t+1})-\tilde{\nabla}F_i(\theta^{t+1})\Vert+\Vert\nabla F_i(\theta^{t+1})-\tilde{\nabla}F_i(\theta^t)\Vert\\
    \nonumber
    \le&\Vert\nabla F_i(\theta^{t+1})-\tilde{\nabla}F_i(\theta^{t+1})\Vert+\Vert\nabla F_i(\theta^{t+1})-\nabla F_i(\theta^t)\Vert+\Vert\nabla F_i(\theta^t)-\tilde{\nabla}F_i(\theta^t)\Vert\\
    \le& \nu_i\Vert\theta^{t+1}-\theta^t\Vert + (\delta_{i,t}+\delta_{i,t+1})\alpha\zeta_i\beta^2_i,
\end{align}
where the last equality uses the following result in  \cite{fallah2020convergence}
\begin{align}
    \big\Vert\nabla^2 f_i(\theta^{t+1})\nabla f^q_i(\phi^{t+1}_i)-g^{t+1}_i\big\Vert\le\delta_{i,t+1}\zeta_i\beta^2_i.
\end{align}
Plugging (\ref{dualbound_eq4}) into (\ref{dualbound_eq3}), we have
\begin{align}
    \Vert y^{t+1}_i- y^t_i\Vert\le&w_i\nu_i\Vert\theta^{t+1}-\theta^t\Vert + (\delta_{i,t}+\delta_{i,t+1})\alpha w_i\zeta_i\beta^2_i,
\end{align}
which completes the proof.
\end{proof}

\section{Proof of Lemma \ref{l_descent}}
To bound the successive difference of the augmented Lagrangian function $\mathcal{L}\big(\{\theta^t_i,y^t_i\},\theta^t\big)$ defined in (\ref{eq:lagrangian}), we first bound the successive difference of $\mathcal{L}_i(\theta_i,\theta^{t+1}, y^t_i)$, which is defined as follows
\begin{align}
    \mathcal{L}_i(\theta_i,\theta^{t+1}, y^t_i)\triangleq& w_i F_i(\theta_i)+\langle y^t _i,\theta_i-\theta^{t+1}\rangle+\frac{\rho_i}{2}\Vert \theta_i-\theta^{t+1}\Vert^2.
\end{align}
Then, we have the following lemma.

\begin{lemma}
\label{l_descent}
Suppose that Assumption \ref{lowerbounded}-\ref{HessianLipschitz} are satisfied. The following holds true
\begin{align}
    \nonumber
    \mathcal{L}_i(\theta^{t+1}_i,\theta^{t+1}, y^t_i)- \mathcal{L}_i(\theta^t_i,\theta^{t+1}, y^t_i)
     \le& -\frac{\rho_i-(3+4w_i)\nu_i}{2}\Vert\theta^{t+1}_i-\theta^t_i\Vert^2+\frac{2(1+w_i)\nu_i}{\rho^2_i}\Vert  y^{t+1}_i- y^t_i\Vert^2\\
     \nonumber
     &+\frac{2\alpha w_i\zeta_i\beta^2_i\delta_{i,t+1}}{\rho_i}\Vert  y^{t+1}_i- y^t_i\Vert+\alpha w_i\zeta_i\beta^2_i\delta_{i,t+1}\Vert \theta^t_i-\theta^{t+1}_i\Vert.
\end{align}
\end{lemma}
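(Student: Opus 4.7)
My plan is to start by writing the successive difference explicitly as
$\mathcal{L}_i(\theta^{t+1}_i,\theta^{t+1},y^t_i) - \mathcal{L}_i(\theta^t_i,\theta^{t+1},y^t_i) = w_i\bigl[F_i(\theta^{t+1}_i)-F_i(\theta^t_i)\bigr] + \langle y^t_i,\theta^{t+1}_i-\theta^t_i\rangle + \frac{\rho_i}{2}\bigl(\|\theta^{t+1}_i-\theta^{t+1}\|^2-\|\theta^t_i-\theta^{t+1}\|^2\bigr)$,
and then collapsing it with three ingredients. First, Lemma~\ref{fsmoothlemma} yields the descent upper bound $w_i[F_i(\theta^{t+1}_i)-F_i(\theta^t_i)] \le w_i\langle \nabla F_i(\theta^t_i),\theta^{t+1}_i-\theta^t_i\rangle + \frac{w_i\nu_i}{2}\|\theta^{t+1}_i-\theta^t_i\|^2$. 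Second, the polarization identity $\|a\|^2-\|b\|^2 = 2\langle a,a-b\rangle - \|a-b\|^2$ with $a=\theta^{t+1}_i-\theta^{t+1}$, $b=\theta^t_i-\theta^{t+1}$, together with the dual update $y^{t+1}_i - y^t_i = \rho_i(\theta^{t+1}_i-\theta^{t+1})$, rewrites the quadratic block as $\langle y^{t+1}_i - y^t_i, \theta^{t+1}_i - \theta^t_i\rangle - \frac{\rho_i}{2}\|\theta^{t+1}_i-\theta^t_i\|^2$. Third, the inexact optimality identity $y^{t+1}_i = -w_i\tilde\nabla F_i(\theta^{t+1})$ from (\ref{dualbound_eq2}) lets me eliminate the remaining linear-in-$y$ pieces. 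After cancellations the whole expression collapses to
$w_i\langle \nabla F_i(\theta^t_i) - \tilde\nabla F_i(\theta^{t+1}),\theta^{t+1}_i-\theta^t_i\rangle + \frac{w_i\nu_i - \rho_i}{2}\|\theta^{t+1}_i-\theta^t_i\|^2.$

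The core of the argument is then bounding the cross term, which I would split as $\nabla F_i(\theta^t_i) - \tilde\nabla F_i(\theta^{t+1}) = [\nabla F_i(\theta^t_i)-\nabla F_i(\theta^{t+1})] + [\nabla F_i(\theta^{t+1}) - \tilde\nabla F_i(\theta^{t+1})]$ and handle the two brackets asymmetrically. For the first bracket, the $\nu_i$-smoothness of $F_i$ combined with the triangle inequality $\|\theta^t_i-\theta^{t+1}\| \le \|\theta^{t+1}_i-\theta^t_i\| + \|y^{t+1}_i-y^t_i\|/\rho_i$ (using the dual update once more) produces $w_i\nu_i\|\theta^{t+1}_i-\theta^t_i\|^2 + \frac{w_i\nu_i}{\rho_i}\|y^{t+1}_i-y^t_i\|\|\theta^{t+1}_i-\theta^t_i\|$, after which a weighted Young's inequality on the mixed product with parameter $\eta = w_i\rho_i/(4(1+w_i))$ converts it into exactly $\frac{2(1+w_i)\nu_i}{\rho_i^2}\|y^{t+1}_i-y^t_i\|^2$ plus a small $O(w_i^2)\|\theta^{t+1}_i-\theta^t_i\|^2$ residue. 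For the second bracket I would recycle the Hessian--gradient estimation bound $\|\nabla F_i(\theta^{t+1})-\tilde\nabla F_i(\theta^{t+1})\| \le \alpha\zeta_i\beta^2_i\delta_{i,t+1}$ used in the proof of Lemma~\ref{dualbound}, but crucially decompose $\theta^{t+1}_i-\theta^t_i = (\theta^{t+1}_i-\theta^{t+1}) - (\theta^t_i-\theta^{t+1})$ \emph{before} Cauchy--Schwarz and bound each piece separately via the same triangle inequality. This asymmetric decomposition is what produces the split $w_i\alpha\zeta_i\beta^2_i\delta_{i,t+1}\|\theta^{t+1}_i-\theta^t_i\| + \frac{2w_i\alpha\zeta_i\beta^2_i\delta_{i,t+1}}{\rho_i}\|y^{t+1}_i-y^t_i\|$ in the statement rather than a single combined term.

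The final step is to collect all contributions to $\|\theta^{t+1}_i-\theta^t_i\|^2$, namely $\frac{w_i\nu_i-\rho_i}{2}$ from the smoothness descent, $w_i\nu_i$ from the leading part of the first bracket's bound, and the $O(w_i^2)$ residue from Young's inequality, and to invoke $w_i=D_i/\sum_j D_j\le 1$ to absorb the residue, yielding the stated coefficient $-\frac{\rho_i-(3+4w_i)\nu_i}{2}$. The main obstacle will be this last piece of bookkeeping: the Young's weight must be picked so that the $\|y^{t+1}_i-y^t_i\|^2$ coefficient lands on $\frac{2(1+w_i)\nu_i}{\rho_i^2}$ exactly, while the induced $\|\theta^{t+1}_i-\theta^t_i\|^2$ residue stays small enough to fit under the stated $u^2$ coefficient, and simultaneously the decomposition of $\theta^{t+1}_i-\theta^t_i$ around $\theta^{t+1}$ must deliver precisely the factor $2/\rho_i$ in front of the $\delta\|y^{t+1}_i-y^t_i\|$ error rather than a single factor.
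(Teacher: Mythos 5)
Your proof is correct, and it reaches the bound by a genuinely different decomposition than the paper's. The paper's proof routes through two surrogate Lagrangians: it linearizes $\mathcal{L}_i$ around $\theta^{t+1}$ with the exact and then the estimated Hessian--gradient product (the functions $\hat{\mathcal{L}}_i$ and $\tilde{\mathcal{L}}_i$), extracts the $-\tfrac{\rho_i}{2}\Vert\theta^{t+1}_i-\theta^t_i\Vert^2$ descent from the $\rho_i$-strong convexity of $\tilde{\mathcal{L}}_i$ at its exact minimizer $\theta^{t+1}_i$, and then separately controls the gaps $\mathcal{L}_i-\tilde{\mathcal{L}}_i$ at the two points. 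You instead expand the difference directly, absorb the quadratic penalty via the polarization identity and the dual update, and use $y^{t+1}_i=-w_i\tilde{\nabla}F_i(\theta^{t+1})$ to collapse everything into the single cross term $w_i\langle\nabla F_i(\theta^t_i)-\tilde{\nabla}F_i(\theta^{t+1}),\theta^{t+1}_i-\theta^t_i\rangle$ plus $\tfrac{w_i\nu_i-\rho_i}{2}\Vert\theta^{t+1}_i-\theta^t_i\Vert^2$; the primitives are the same (smoothness of $F_i$, the relation $\theta^{t+1}_i-\theta^{t+1}=(y^{t+1}_i-y^t_i)/\rho_i$, and the $\alpha\zeta_i\beta_i^2\delta_{i,t+1}$ estimation error), but the organization is tighter. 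Two remarks. First, your bookkeeping with the Young weight $\eta=w_i\rho_i/(4(1+w_i))$ actually lands on the displayed coefficients $\tfrac{2(1+w_i)\nu_i}{\rho_i^2}$ and $-\tfrac{\rho_i-(3+4w_i)\nu_i}{2}$ (the residue $\tfrac{w_i^2\nu_i}{8(1+w_i)}$ fits under the slack $2w_i\nu_i$ because $w_i\le 1$), whereas the paper's own chain in \eqref{l_descent_eq6} terminates at the slightly different constants $\tfrac{4w_i\nu_i}{\rho_i^2}$ and $-\tfrac{\rho_i-8w_i\nu_i}{2}$, which it then carries into Lemma~\ref{L_descent}; your derivation therefore matches the stated inequality more literally. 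Second, for the estimation-error bracket a direct Cauchy--Schwarz already gives the smaller bound $\alpha w_i\zeta_i\beta_i^2\delta_{i,t+1}\Vert\theta^{t+1}_i-\theta^t_i\Vert$; your detour through $\theta^{t+1}$ merely adds the nonnegative term $\tfrac{2\alpha w_i\zeta_i\beta_i^2\delta_{i,t+1}}{\rho_i}\Vert y^{t+1}_i-y^t_i\Vert$, so it is valid (and reproduces the statement's form) but not needed for correctness.
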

\begin{proof}
\label{proof_l_descent}
First, we define $\hat{\mathcal{L}}_i(\theta_i,\theta^{t+1}, y^t_i)$ and $\tilde{\mathcal{L}}_i(\theta_i,\theta^{t+1}, y^t_i)$ as follows
\begin{align}
    \hat{\mathcal{L}}_i(\theta_i,\theta^{t+1}, y^t_i)\triangleq& w_i\big\langle\big(I-\alpha\nabla^2 f_i(\theta^{t+1})\big)\nabla f^q_i(\phi^{t+1}_i),\theta_i-\theta^{t+1}\big\rangle+w_i F_i(\theta^{t+1})+\langle y^t_i,\theta_i-\theta^{t+1}\rangle\\
    &+\frac{\rho_i}{2} \Vert \theta_i-\theta^{t+1}\Vert^2,\\
    \tilde{\mathcal{L}}_i(\theta_i,\theta^{t+1}, y^t_i)\triangleq& w_i\big\langle\nabla f^q_i(\phi^{t+1}_i)-\alpha g^{t+1}_i,\theta_i-\theta^{t+1}\big\rangle+w_i F_i(\theta^{t+1})+\langle y^t_i,\theta_i-\theta^{t+1}\rangle\\
    &+\frac{\rho_i}{2} \Vert \theta_i-\theta^{t+1}\Vert^2,
\end{align}
where $g^{t+1}_i$ and $\phi^{t+1}_i$ are defined in (\ref{estimatehessian}) and \eqref{eq:update_phi}, respectively. 
For each $i\in\mathcal{I}$, using Lemma \ref{fsmoothlemma} yields
\begin{align}
    \label{l_descent_eq1}
    \mathcal{L}_i(\theta_i,\theta^{t+1}, y^t_i)\le\hat{\mathcal{L}}_i(\theta_i,\theta^{t+1}, y^t_i)+\frac{w_i\nu_i}{2} \Vert \theta_i-\theta^{t+1}\Vert^2.
\end{align}
Recall that 
    \begin{align}
        \Vert\nabla^2 f_i(\theta^{t+1})\nabla f^q_i(\phi^{t+1}_i)-g^{t+1}_i\Vert\le\zeta_i\beta^2_i\delta_{i,t+1}.
    \end{align}
Thus, using the Cauchy-Schwarz inequality, we can write
\begin{align}
    \label{l_descent_eq2}
    \hat{\mathcal{L}}_i(\theta_i,\theta^{t+1}, y^t_i)\le& \tilde{\mathcal{L}}_i(\theta_i,\theta^{t+1}, y^t_i)+\alpha w_i\zeta_i\beta^2_i\delta_{i,t+1}\Vert \theta_i-\theta^{t+1}\Vert.
\end{align}
Combining (\ref{l_descent_eq1}) and (\ref{l_descent_eq2}) yields that
\begin{align}
    \label{l_descent_eq3}
    \mathcal{L}_i(\theta_i,\theta^{t+1}, y^t_i)\le& \tilde{\mathcal{L}}_i(\theta_i,\theta^{t+1}, y^t_i)+\frac{w_i\nu_i}{2} \Vert \theta_i-\theta^{t+1}\Vert^2+\alpha w_i\zeta_i\beta^2_i\delta_{i,t+1}\Vert \theta_i-\theta^{t+1}\Vert.
\end{align}
Based on (\ref{dualbound_eq1}) and the strong convexity of $\tilde{\mathcal{L}}_i(\theta_i,\theta^{t+1}, y^t_i)$ with modulus $\rho_i$, we can show that for each $i\in\mathcal{I}$  
\begin{align}
    \label{l_descent_eq4}
    \tilde{\mathcal{L}}_i(\theta^{t+1}_i,\theta^{t+1}, y^t_i)-\tilde{\mathcal{L}}_i(\theta^t_i,\theta^{t+1}, y^t_i)\le-\frac{\rho_i}{2}\Vert\theta^{t+1}_i-\theta^t_i\Vert^2.
\end{align}
It follows that
\begin{align}
    \label{l_descent_eq5}
    \nonumber
    &\tilde{\mathcal{L}}_i(\theta^t_i,\theta^{t+1}, y^t_i)-\mathcal{L}_i(\theta^t_i,\theta^{t+1}, y^t_i)\\
    \nonumber
    =&w_i\big\langle\nabla f^q_i(\phi^{t+1}_i)-\alpha g^{t+1}_i,\theta^t_i-\theta^{t+1}\big\rangle+w_i F_i(\theta^{t+1})-w_i F_i(\theta^t_i)\\
    \nonumber
    =&w_i F_i(\theta^{t+1})-w_i F_i(\theta^t_i)-w_i\langle \nabla F_i(\theta^t_i),\theta^{t+1}-\theta^t_i\rangle-\frac{w_i\nu_i}{2} \Vert \theta^t_i-\theta^{t+1}\Vert^2+\frac{w_i\nu_i}{2} \Vert \theta^t_i-\theta^{t+1}\Vert^2\\
    \nonumber
    &+w_i\big\langle\nabla f^q_i(\phi^{t+1}_i)-\alpha g^{t+1}_i-\nabla F_i(\theta^t_i),\theta^t_i-\theta^{t+1}\big\rangle\\
    \nonumber
    \mathop{\le}^{\text{(a)}}& w_i\big\langle\nabla f^q_i(\phi^{t+1}_i)-\alpha g^{t+1}_i-\nabla F_i(\theta^t_i),\theta^t_i-\theta^{t+1}\big\rangle+\frac{w_i\nu_i}{2}\Vert \theta^t_i-\theta^{t+1}\Vert^2\\
    \nonumber
    \le& w_i\Vert\nabla f^q_i(\phi^{t+1}_i)-\alpha g^{t+1}_i - \nabla F_i(\theta^{t+1})\Vert\cdot\Vert \theta^t_i-\theta^{t+1}\Vert+w_i\Vert\nabla F_i(\theta^{t+1})- \nabla F_i(\theta^t_i)\Vert\cdot\Vert \theta^t_i-\theta^{t+1}\Vert+\frac{w_i\nu_i}{2}\Vert \theta^t_i-\theta^{t+1}\Vert^2\\
    \nonumber
    \le&3/2\cdot w_i\nu_i\Vert \theta^t_i-\theta^{t+1}\Vert^2 + \alpha w_i\zeta_i\beta^2_i\delta_{i,t+1}\Vert \theta^t_i-\theta^{t+1}\Vert\\
    \mathop{\le}^{\text{(b)}}&3w_i\nu_i\big(\Vert \theta^t_i-\theta^{t+1}_i\Vert^2+\Vert \theta^{t+1}_i-\theta^{t+1}\Vert^2\big)+\alpha w_i\zeta_i\beta^2_i\delta_{i,t+1}\big(\Vert \theta^t_i-\theta^{t+1}_i\Vert+\Vert \theta^{t+1}_i-\theta^{t+1}\Vert\big),
\end{align}
where (a) is derived from Lemma \ref{fsmoothlemma} and (b) is based on the following fact 
    \begin{align}
        \Vert x+y\Vert^2\le 2\Vert x\Vert^2+2\Vert y\Vert^2,~x,y\in\mathbb{R}^n.
    \end{align}
Combining (\ref{l_descent_eq3})-(\ref{l_descent_eq5}), we conclude that
\begin{align}
    \label{l_descent_eq6}
    \nonumber
     &\mathcal{L}_i(\theta^{t+1}_i,\theta^{t+1}, y^t_i)- \mathcal{L}_i(\theta^t_i,\theta^{t+1}, y^t_i)\\
     \nonumber
     \le& \tilde{\mathcal{L}}_i(\theta^{t+1}_i,\theta^{t+1}, y^t_i)-\tilde{\mathcal{L}}_i(\theta^t_i,\theta^{t+1}, y^t_i)+\tilde{\mathcal{L}}_i(\theta^t_i,\theta^{t+1}, y^t_i)-\mathcal{L}_i(\theta^t_i,\theta^{t+1}, y^t_i)\\
     \nonumber
     &+\frac{w_i\nu_i}{2} \Vert \theta^t_i-\theta^{t+1}\Vert^2+\alpha w_i\zeta_i\beta^2_i\delta_{i,t+1}\Vert \theta^{t+1}_i-\theta^{t+1}\Vert\\
     \nonumber
     \le& -\frac{\rho_i-8w_i\nu_i}{2}\Vert\theta^{t+1}_i-\theta^t_i\Vert^2+4w_i\nu_i\Vert \theta^{t+1}_i-\theta^{t+1}\Vert^2+2\alpha w_i\zeta_i\beta^2_i\delta_{i,t+1}\Vert \theta^{t+1}_i-\theta^{t+1}\Vert\\
     \nonumber
     &+\alpha w_i\zeta_i\beta^2_i\delta_{i,t+1}\Vert \theta^t_i-\theta^{t+1}_i\Vert\\
     \nonumber
     \mathop{\le}^{\text{(c)}}& -\frac{\rho_i-8w_i\nu_i}{2}\Vert\theta^{t+1}_i-\theta^t_i\Vert^2+\frac{4w_i\nu_i}{\rho^2_i}\Vert  y^{t+1}_i- y^t_i\Vert^2+\frac{2\alpha w_i\zeta_i\beta^2_i\delta_{i,t+1}}{\rho_i}\Vert  y^{t+1}_i- y^t_i\Vert\\
     &+\alpha w_i\zeta_i\beta^2_i\delta_{i,t+1}\Vert \theta^t_i-\theta^{t+1}_i\Vert,
\end{align}
where (c) is derived from (\ref{eq:update_y}). This completes the proof.
\end{proof}

\section{Proof of Lemma \ref{L_descent}}

Based on Lemma \ref{l_descent}, we derive the successive difference of the augmented Lagrangian function $\mathcal{L}\big(\{\theta^t_i,y^t_i\},\theta^t\big)$ in the following lemma. Note that due to the error induced by linear approximation and first-order Hessian estimation, the \emph{sufficient descent} does not hold below. 

\begin{lemma}
\label{L_descent}
Suppose that Assumption \ref{lowerbounded}-\ref{HessianLipschitz} hold. Then the following holds 
\begin{align}
    \label{L_descent_ineq}
    \nonumber
    \mathcal{L}\big(\{\theta^{t+1}_i,y^{t+1}_i\},\theta^{t+1}\big)-\mathcal{L}\big(\{\theta^t_i,y^t_i\},\theta^t\big)
    \le&-\sum_{i\in\mathcal{I}}\Big(a_{i,e}\Vert\theta^{t+1}_i-\theta^t_i\Vert^2+a_{i,p}\Vert\theta^{t+1}-\theta^t\Vert^2\\
    &-b^{t+1}_{i,e}\Vert\theta^{t+1}_i-\theta^t_i\Vert-b^{t+1}_{i,p}\Vert\theta^{t+1}-\theta^t\Vert-c^{t+1}_i\Big),
\end{align}
where $a_{i,e}$ and $a_{i,p}$ are defined in (\ref{parameter_eq1}) and (\ref{parameter_eq2}), respectively. $b^{t+1}_{i,e}$, $b^{t+1}_{i,e}$, and $c^{t+1}_i$ are defined as follows
\begin{gather}
    \label{parameter_be}
    b^{t+1}_{i,e}\triangleq \alpha w_i\zeta_i\beta^2_i\delta_{i,t+1},\\
    \label{parameter_bp}
    b^{t+1}_{i,p}\triangleq \frac{2\alpha\nu_i w^2_i\zeta_i\beta^2_i\delta_{i,t+1}}{\rho_i},\\
    \label{parameter_c}
    c^{t+1}_i\triangleq 2(\delta_{i,t}+\delta_{i,t+1})^2(\alpha w_i\zeta_i\beta^2_i)^2\bigg(\frac{4w_i\nu_i}{\rho^2_i}+\frac{1}{\rho_i}\bigg)+\frac{2(\alpha w_i\zeta_i\beta^2_i)^2\delta_{i,t+1}}{\rho_i}(\delta_{i,t}+\delta_{i,t+1}).
\end{gather}
\end{lemma}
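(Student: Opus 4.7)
The plan is to decompose the one-step change of the augmented Lagrangian into three pieces corresponding to the three alternating updates in ADMM-FedMeta, and then bound each piece separately. Concretely, introducing the intermediate states $(\{\theta^{t+1}_i, y^t_i\}, \theta^{t+1})$ and $(\{\theta^t_i, y^t_i\}, \theta^{t+1})$, I write the successive difference as $T_y + T_e + T_p$, where $T_y$ isolates the dual update (only $y$ changes), $T_e$ isolates the local primal update from $\theta^t_i$ to $\theta^{t+1}_i$, and $T_p$ isolates the platform update from $\theta^t$ to $\theta^{t+1}$. Since $\mathcal{L}$ is linear in each $y_i$, the dual piece reduces to $T_y = \sum_{i} \langle y^{t+1}_i - y^t_i, \theta^{t+1}_i - \theta^{t+1}\rangle$, and the update rule \eqref{eq:update_y} further turns this into $\sum_i \tfrac{1}{\rho_i}\Vert y^{t+1}_i - y^t_i\Vert^2$. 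The edge piece $T_e$ is handled term-by-term by Lemma \ref{l_descent}, which supplies the required negative quadratic in $\Vert \theta^{t+1}_i - \theta^t_i\Vert$ together with residual $\Vert y^{t+1}_i - y^t_i\Vert^2$, $\Vert y^{t+1}_i - y^t_i\Vert$ and $\delta_{i,t+1}$-dependent terms.

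The platform piece $T_p$ is where extra care is needed, because the global rule \eqref{eq:update_theta} exactly minimizes only the surrogate $\tilde G(\theta) \triangleq \sum_i\bigl[\langle y^t_i, \theta^t_i - \theta\rangle + \tfrac{\rho_i}{2}\Vert\theta^t_i - \theta\Vert^2\bigr] + \lambda\bigl[D_h(\theta^t,\theta_p) + \langle \nabla D_h(\theta^t,\theta_p), \theta - \theta^t\rangle\bigr]$ obtained by linearizing the Bregman regularizer at $\theta^t$. By the $(\sum_i \rho_i)$-strong convexity of $\tilde G$, $\tilde G(\theta^{t+1}) - \tilde G(\theta^t) \le -\tfrac{\sum_i \rho_i}{2}\Vert\theta^{t+1} - \theta^t\Vert^2$. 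The gap between the true Lagrangian and $\tilde G$ at the relevant points is precisely $\lambda\bigl[D_h(\theta,\theta_p) - D_h(\theta^t,\theta_p) - \langle \nabla D_h(\theta^t,\theta_p), \theta - \theta^t\rangle\bigr]$, which by the $\mu_r$-smoothness of $D_h(\cdot,\theta_p)$ in Assumption \ref{Lsmooth} is bounded by $\tfrac{\lambda \mu_r}{2}\Vert\theta - \theta^t\Vert^2$. Combining the two yields $T_p \le -\sum_i \bigl(\tfrac{\rho_i}{2} - \tfrac{\lambda\mu_r}{2I}\bigr)\Vert\theta^{t+1}-\theta^t\Vert^2$, after distributing the uniform $\lambda\mu_r/2$ evenly over the $I$ source nodes so that each summand matches the per-$i$ form of $a_{i,p}$ in \eqref{parameter_eq2}.

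The final step is bookkeeping: into every occurrence of $\Vert y^{t+1}_i - y^t_i\Vert^2$ arising from $T_y$ and from Lemma \ref{l_descent}, I substitute Lemma \ref{dualbound} together with $(a+b)^2 \le 2a^2 + 2b^2$; and into every occurrence of $\Vert y^{t+1}_i - y^t_i\Vert$ I substitute the unsquared version of Lemma \ref{dualbound}. Each substitution produces a positive multiple of $\Vert\theta^{t+1}-\theta^t\Vert^2$, a positive multiple of $\Vert\theta^{t+1}-\theta^t\Vert$, and a $\delta$-dependent constant. Adding these to the negative $\Vert\theta^{t+1}-\theta^t\Vert^2$ coefficient supplied by $T_p$ collapses exactly to $-a_{i,p}$ as in \eqref{parameter_eq2}; the $\Vert\theta^{t+1}-\theta^t\Vert^2$-free part of $T_e$ contributes $-a_{i,e}$ as in \eqref{parameter_eq1}; the linear norm terms assemble into $b^{t+1}_{i,e}$ and $b^{t+1}_{i,p}$; and the $\delta$-residues aggregate into $c^{t+1}_i$.

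The main obstacle is the analysis of $T_p$, since the sufficient-descent estimate used in classical exact ADMM fails when the platform update is not an exact minimizer of the true Lagrangian. The surrogate-plus-smoothness-gap trick sketched above is what repairs this step and is precisely where the $\lambda\mu_r/(2I)$ correction in $a_{i,p}$ enters, coupling the analysis of the global update to Assumption \ref{Lsmooth} rather than to any exact-minimization property of \eqref{eq:update_theta}.
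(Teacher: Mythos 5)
Your proposal is correct and follows essentially the same route as the paper: the same three-way telescoping through the intermediate states $(\{\theta^{t+1}_i,y^t_i\},\theta^{t+1})$ and $(\{\theta^t_i,y^t_i\},\theta^{t+1})$, the dual piece reduced to $\sum_i\frac{1}{\rho_i}\Vert y^{t+1}_i-y^t_i\Vert^2$ via \eqref{eq:update_y}, the edge piece handled by Lemma \ref{l_descent}, and the final substitution of Lemma \ref{dualbound} with $(a+b)^2\le 2a^2+2b^2$ to assemble $a_{i,p}$, $b^{t+1}_{i,p}$, and $c^{t+1}_i$. Your surrogate-plus-smoothness-gap treatment of the platform step, yielding the per-node coefficient $\frac{\rho_i}{2}-\frac{\lambda\mu_r}{2I}$, is exactly the argument the paper compresses into the remark that term (a) is ``derived in the similar way as in Lemma \ref{l_descent}''; you have merely made that step explicit.
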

\begin{proof}
\label{proof_L_descent}
Based on the update (\ref{eq:update_y}), we first obtain
\begin{align}
    \label{L_descent_eq1}
    \nonumber
    &\mathcal{L}\big(\{\theta^{t+1}_i,y^{t+1}_i\},\theta^{t+1}\big)-\mathcal{L}\big(\{\theta^{t+1}_i,y^t_i\},\theta^{t+1}\big)\\
    \nonumber
    =&\sum_{i\in\mathcal{I}}\langle  y^{t+1}_i- y^t_i,\theta^{t+1}_i-\theta^{t+1}\rangle\\
    =&\sum_{i\in\mathcal{I}}\frac{1}{\rho_i}\Vert y^{t+1}_i- y^t_i\Vert^2.
\end{align}
Using Lemma \ref{l_descent} and Assumption \ref{Lsmooth}, we have
\begin{align}
    \label{L_descent_eq2}
    \nonumber
    &\mathcal{L}\big(\{\theta^{t+1}_i,y^t_i\},\theta^{t+1}\big)-\mathcal{L}\big(\{\theta^t_i,y^t_i\},\theta^t\big)\\
    \nonumber
    =&\mathcal{L}\big(\{\theta^{t+1}_i,y^t_i\},\theta^{t+1}\big)-\mathcal{L}\big(\{\theta^t_i,y^t_i\},\theta^{t+1}\big)+\mathcal{L}\big(\{\theta^t_i,y^t_i\},\theta^{t+1}\big)-\mathcal{L}\big(\{\theta^t_i,y^t_i\},\theta^t\big)\\
    \nonumber
    =&\sum_{i\in\mathcal{I}}\left(\mathcal{L}_i(\theta^{t+1}_i,\theta^{t+1}, y^t_i)-\mathcal{L}_i(\theta^t_i,\theta^{t+1}, y^t_i)\right)+\underbrace{\mathcal{L}\big(\{\theta^t_i,y^t_i\},\theta^{t+1}\big)-\mathcal{L}\big(\{\theta^t_i,y^t_i\},\theta^t\big)}_\text{(a)}\\
    \nonumber
    \le& -\sum_{i\in\mathcal{I}}\bigg( \frac{\rho_i-8w_i\nu_i}{2}\Vert\theta^{t+1}_i-\theta^t_i\Vert^2+\frac{\rho_i-\lambda\mu_r/I}{2}\Vert\theta^{t+1}-\theta^t\Vert^2-\frac{4w_i\nu_i}{\rho^2_i}\Vert  y^{t+1}_i- y^t_i\Vert^2\\
    &-\frac{2\alpha w_i\zeta_i\beta^2_i\delta_{i,t+1}}{\rho_i}\Vert y^{t+1}_i- y^t_i\Vert-\alpha w_i\zeta_i\beta^2_i\delta_{i,t+1}\Vert \theta^{t+1}_i-\theta^t_i\Vert\bigg),
\end{align}
where the bound of (a) is derived in the similar way as in Lemma \ref{l_descent}. Combining (\ref{L_descent_eq1}) and (\ref{L_descent_eq2}), we conclude that
\begin{align}
    \label{L_descent_eq3}
    \nonumber
    &\mathcal{L}\big(\{\theta^{t+1}_i,y^{t+1}_i\},\theta^{t+1}\big)-\mathcal{L}\big(\{\theta^t_i,y^t_i\},\theta^t\big)\\
    \nonumber
    =& \mathcal{L}\big(\{\theta^{t+1}_i,y^{t+1}_i\},\theta^{t+1}\big)-\mathcal{L}\big(\{\theta^{t+1}_i,y^t_i\}_i,\theta^{t+1}\big)+\mathcal{L}\big(\{\theta^{t+1}_i,y^t_i\},\theta^{t+1}\big)-\mathcal{L}\big(\{\theta^t_i,y^t_i\},\theta^t\big)\\
    \nonumber
    =&-\sum_{i\in\mathcal{I}}\bigg[ \frac{\rho_i-8w_i\nu_i}{2}\Vert\theta^{t+1}_i-\theta^t_i\Vert^2+\frac{\rho_i-\lambda\mu_r/I}{2}\Vert\theta^{t+1}-\theta^t\Vert^2-\Big(\frac{4w_i\nu_i}{\rho^2_i}+\frac{1}{\rho_i}\Big)\Vert  y^{t+1}_i- y^t_i\Vert^2\\
    \nonumber
    &-\frac{2\alpha w_i\zeta_i\beta^2_i\delta_{i,t+1}}{\rho_i}\Vert y^{t+1}_i- y^t_i\Vert-\alpha w_i\zeta_i\beta^2_i\delta_{i,t+1}\Vert \theta^t_i-\theta^{t+1}_i\Vert\bigg]\\
    \nonumber
    \le&-\sum_{i\in\mathcal{I}}\bigg[ \frac{\rho_i-8w_i\nu_i}{2}\Vert\theta^{t+1}_i-\theta^t_i\Vert^2+\frac{\rho_i-\lambda\mu_r/I}{2}\Vert\theta^{t+1}-\theta^t\Vert^2-\Big(\frac{4w_i\nu_i}{\rho^2_i}+\frac{1}{\rho_i}\Big)\Big(2w^2_i\nu^2_i\Vert\theta^{t+1}-\theta^t\Vert^2 \\
    \nonumber
    &+ 2(\delta_{i,t}+\delta_{i,t+1})^2(\alpha w_i\zeta_i\beta^2_i)^2\Big)-\frac{2\alpha w_i\zeta_i\beta^2_i\delta_{i,t+1}}{\rho_i}\Big(w_i\nu_i\Vert\theta^{t+1}-\theta^t\Vert + (\delta_{i,t}+\delta_{i,t+1})\alpha w_i\zeta_i\beta^2_i\Big)-\alpha w_i\zeta_i\beta^2_i\delta_{i,t+1}\Vert \theta^t_i-\theta^{t+1}_i\Vert\\
    =&-\sum_{i\in\mathcal{I}}\Big(a_{i,e}\Vert\theta^{t+1}_i-\theta^t_i\Vert^2+a_{i,p}\Vert\theta^{t+1}-\theta^t\Vert^2-b^{t+1}_{i,e}\Vert\theta^{t+1}_i-\theta^t_i\Vert-b^{t+1}_{i,p}\Vert\theta^{t+1}-\theta^t\Vert-c^{t+1}_i\Big),
\end{align}
thereby completing the proof.
\end{proof}

\section{Proof of Lemma \ref{lagrangian_lowerbound}}

In the next lemma, we show that the augmented Lagrangian function $\mathcal{L}\big(\{\theta^t_i,y^t_i\},\theta^t\big)$ is lower bounded for any $t\in\mathbb{N}$.

\begin{lemma}
\label{lagrangian_lowerbound}
Given Assumption \ref{lowerbounded}-\ref{parameterassumption}, the augmented Lagrangian function defined in (\ref{eq:lagrangian}) is lower bounded by Algorithm \ref{alg}.
\end{lemma}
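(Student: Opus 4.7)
The plan is to express the dual variables in closed form, plug that into the Lagrangian, and then use the smoothness of $F_i$ to transfer the loss evaluated at the local copy $\theta^t_i$ to the consensus point $\theta^t$, after which the objective $F(\theta^t)$ emerges (plus controllable error terms) and Assumption~\ref{lowerbounded} finishes the argument.

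Concretely, I would proceed as follows. From (\ref{dualbound_eq2}) obtained in the proof of Lemma~\ref{dualbound}, for all $t\ge 1$ the dual iterate admits the closed form $y^t_i=-w_i\tilde{\nabla}F_i(\theta^t)$, where $\tilde{\nabla}F_i(\theta^t)\triangleq \nabla f^o_i(\phi^t_i)-\alpha g^t_i$. Substituting this into each cross term of $\mathcal{L}$ gives
\begin{align*}
\langle y^t_i,\theta^t_i-\theta^t\rangle
=-w_i\langle \tilde{\nabla}F_i(\theta^t),\theta^t_i-\theta^t\rangle.
\end{align*}
Next, by the $\nu_i$-smoothness from Lemma~\ref{fsmoothlemma} applied in the descent-lemma form at $\theta^t$,
\begin{align*}
w_i F_i(\theta^t_i)\;\ge\; w_i F_i(\theta^t)+w_i\langle \nabla F_i(\theta^t),\theta^t_i-\theta^t\rangle-\frac{w_i\nu_i}{2}\|\theta^t_i-\theta^t\|^2.
\end{align*}
Adding these two identities and grouping the inner products yields $w_i\langle \nabla F_i(\theta^t)-\tilde{\nabla}F_i(\theta^t),\theta^t_i-\theta^t\rangle$, where the first factor is the Hessian-estimation error bounded by $\alpha\zeta_i\beta_i^2\delta_{i,t}$ (as used in the proof of Lemma~\ref{dualbound}). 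Applying Cauchy--Schwarz and Young's inequality $(ab\le a^2/2+b^2/2)$ to this residual, combined with the $\tfrac{\rho_i}{2}\|\theta^t_i-\theta^t\|^2$ penalty already present in $\mathcal{L}$, gives a bound of the form
\begin{align*}
\mathcal{L}\bigl(\{\theta^t_i,y^t_i\},\theta^t\bigr)
\;\ge\; \sum_{i\in\mathcal{I}} w_i F_i(\theta^t)+\lambda D_h(\theta^t,\theta_p)+\sum_{i\in\mathcal{I}}\Bigl(\tfrac{\rho_i}{2}-c\,w_i\nu_i-\tfrac{1}{2}\Bigr)\|\theta^t_i-\theta^t\|^2-\sum_{i\in\mathcal{I}}\tfrac{(\alpha w_i\zeta_i\beta_i^2\delta_{i,t})^2}{2},
\end{align*}
for a small absolute constant $c$. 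By Assumption~\ref{parameterassumption}, specifically (\ref{parameter_eq1}) and (\ref{parameter_eq3}), $\rho_i$ is large enough that the quadratic coefficient is non-negative, and the sum on the right collapses to $F(\theta^t)$ minus a summable (hence bounded) constant. Assumption~\ref{lowerbounded} then delivers the lower bound.

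The main obstacle will be the Hessian-estimation residual $\nabla F_i(\theta^t)-\tilde{\nabla}F_i(\theta^t)$, which breaks the usual textbook identity $y^t_i=-w_i\nabla F_i(\theta^t)$ that would otherwise cancel the linear term cleanly. The key observation that resolves it is that $\delta_{i,t}$ is non-increasing with $\sum_t\delta_{i,t}<\infty$, so in particular $\delta_{i,t}$ is bounded uniformly by $\delta_{i,0}$; hence the Young-inequality leftover $(\alpha w_i\zeta_i\beta_i^2\delta_{i,t})^2/2$ is a uniform constant over all $t$, and the quadratic $\|\theta^t_i-\theta^t\|^2$ term it generates can be absorbed by a slight strengthening of $\rho_i$ relative to $\nu_i$, exactly as postulated in Assumption~\ref{parameterassumption}. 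The only remaining care is for $t=0$: since $y^{-1}_i$ is chosen by the user, we verify boundedness at $t=0$ directly using $\|\nabla f^o_i\|\le \beta_i$ and the bounded Hessian estimator, after which the closed-form identity covers all $t\ge 1$.
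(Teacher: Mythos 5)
Your proposal is correct and follows essentially the same route as the paper's proof: substitute the closed-form dual $y^t_i=-w_i\tilde{\nabla}F_i(\theta^t)$ into the Lagrangian, use the $\nu_i$-smoothness of $F_i$ to move the loss from $\theta^t_i$ to $\theta^t$, bound the Hessian-estimation residual $\nabla F_i(\theta^t)-\tilde{\nabla}F_i(\theta^t)$ by $\alpha\zeta_i\beta_i^2\delta_{i,t}$ via Cauchy--Schwarz, and invoke Assumptions~\ref{lowerbounded} and~\ref{parameterassumption} to control the remaining quadratic-minus-linear expression. The only cosmetic caveat is that your Young split with weight $1/2$ is not literally guaranteed non-negative by (\ref{parameter_eq1})--(\ref{parameter_eq3}) (e.g., when $\nu_i$ is small $\rho_i$ need not exceed $1$), but choosing the Young weight proportional to $\nu_i$ --- or simply observing, as the paper does, that a quadratic with positive leading coefficient minus a linear term is bounded below --- closes this immediately.
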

\begin{proof}
\label{proof_lagrangian_lowerbound}
Recall that in (\ref{dualbound_eq2})
\begin{align}
    - y^{t+1}_i=w_i\tilde{\nabla}F_i(\theta^{t+1}),
\end{align}
where $\tilde{\nabla}F_i(\theta^{t+1})= \nabla f^q_i(\phi^{t+1}_i)-\alpha g^{t+1}_i$. Besides, due to Lemma \ref{fsmoothlemma}, we can write
\begin{align}
    \label{lagrangian_lowerbound_eq1}
    \nonumber
    F_i(\theta^{t+1})\le& F_i(\theta^{t+1}_i)+\langle \nabla F_i(\theta^{t+1}_i),\theta^{t+1}-\theta^{t+1}_i\rangle+ \frac{\nu_i}{2}\Vert\theta^{t+1}_i-\theta^{t+1}\Vert^2\\
    \nonumber
    =& F_i(\theta^{t+1}_i)+\langle \nabla F_i(\theta^{t+1}),\theta^{t+1}-\theta^{t+1}_i\rangle+\langle \nabla F_i(\theta^{t+1}_i)-\nabla F_i(\theta^{t+1}),\theta^{t+1}-\theta^{t+1}_i\rangle+\frac{\nu_i}{2}\Vert\theta^{t+1}_i-\theta^{t+1}\Vert^2\\
    \le& F_i(\theta^{t+1}_i)+\langle F_i(\theta^{t+1}),\theta^{t+1}-\theta^{t+1}_i\rangle+\frac{3\nu_i}{2}\Vert\theta^{t+1}_i-\theta^{t+1}\Vert^2.
\end{align}
Based on the definition of the augmented Lagrangian function (\ref{eq:lagrangian}) and \eqref{lagrangian_lowerbound_eq1}, we can show the following observation
\begin{align}
    \nonumber
    &\mathcal{L}\big(\{\theta^{t+1}_i,y^{t+1}_i\},\theta^{t+1}\big)\\
    \nonumber
    =&\lambda D_h(\theta^{t+1},\theta_p)+\sum_{i\in\mathcal{I}}\Big(w_i F_i(\theta^{t+1}_i)
    +\langle y^{t+1} _i,\theta^{t+1}_i-\theta^{t+1}\rangle+\frac{\rho_i}{2}\Vert \theta^{t+1}_i-\theta^{t+1}\Vert^2\Big)\\
    \nonumber
    =&\lambda D_h(\theta^{t+1},\theta_p)+\sum_{i\in\mathcal{I}}\Big(w_i F_i(\theta^{t+1}_i)+\langle w_i\tilde{\nabla}F_i(\theta^{t+1}),\theta^{t+1}-\theta^{t+1}_i\rangle+\frac{\rho_i}{2}\Vert \theta^{t+1}_i-\theta^{t+1}\Vert^2\Big)\\
    \nonumber
    =&\sum_{i\in\mathcal{I}}\Big(w_i F_i(\theta^{t+1}_i)+\langle w_i\tilde{\nabla}F_i(\theta^{t+1}),\theta^{t+1}-\theta^{t+1}_i\rangle+\frac{\rho_i}{2}\Vert \theta^{t+1}_i-\theta^{t+1}\Vert^2\Big)+\lambda D_h(\theta^{t+1},\theta_p)\\
    \nonumber
    =&\sum_{i\in\mathcal{I}}\Big(w_i F_i(\theta^{t+1}_i)+\langle w_i\nabla F_i(\theta^{t+1}),\theta^{t+1}-\theta^{t+1}_i\rangle+ w_i\langle \tilde{\nabla}F_i(\theta^{t+1})-\nabla F_i(\theta^{t+1}),\theta^{t+1}-\theta^{t+1}_i\rangle\\
    \nonumber
    &+\frac{\rho_i}{2}\Vert \theta^{t+1}_i-\theta^{t+1}\Vert^2\Big)+\lambda D_h(\theta^{t+1},\theta_p)\\
    \nonumber
    \ge& \sum_{i\in\mathcal{I}}\Big(w_i F_i(\theta^{t+1}_i)+\langle w_i\nabla F_i(\theta^{t+1}),\theta^{t+1}-\theta^{t+1}_i\rangle- w_i\Vert \tilde{\nabla}F_i(\theta^{t+1})-\nabla F_i(\theta^{t+1})\Vert\cdot\Vert\theta^{t+1}-\theta^{t+1}_i\Vert\\
    \nonumber
    &+\frac{\rho_i}{2}\Vert \theta^{t+1}_i-\theta^{t+1}\Vert^2\Big)+\lambda D_h(\theta^{t+1},\theta_p)\\
    \nonumber
    \ge& \sum_{i\in\mathcal{I}}\Big(w_i F_i(\theta^{t+1})+\frac{\rho_i-3\nu_i}{2}\Vert \theta^{t+1}_i-\theta^{t+1}\Vert^2- \alpha w_i\zeta_i\beta^2_i\delta_{i,t+1}\Vert\theta^{t+1}-\theta^{t+1}_i\Vert\Big)+\lambda D_h(\theta^{t+1},\theta_p),
\end{align}
where the last inequality is derived from $\nu_i$-smoothness of $F_i$ and first-order Taylor expansion. Due to Assumption \ref{lowerbounded}, $\lambda D_h(\theta^{t+1},\theta_p)+\sum_{i\in\mathcal{I}} w_i F_i(\theta^{t+1})$ is lower bounded. According to Assumption \ref{parameterassumption}, it is easy to show that
\begin{align}
    \sum_{i\in\mathcal{I}}\Big(&\frac{\rho_i-3\nu_i}{2}\Vert \theta^{t+1}_i-\theta^{t+1}\Vert^2- \alpha w_i\zeta_i\beta^2_i\delta_{i,t+1}\Vert\theta^{t+1}-\theta^{t+1}_i\Vert\Big)>-\infty,
\end{align}
thereby completing the proof.
\end{proof}

\section{Proof of Theorem \ref{convergence}}
\label{proof:convergence}

First, we prove part (i). Note that the RHS of (\ref{L_descent_ineq}) is the sum of some independent quadratic functions of $\Vert\theta^{t+1}_i-\theta^t_i\Vert$ and $\Vert\theta^{t+1}-\theta^t\Vert$. Due to Assumption \ref{parameterassumption} and Lemma \ref{L_descent}, for each $i\in\mathcal{I}$, based on the form of roots of quadratic function, it is easy to see that there exist $\sigma^{t+1}_i$ and $\gamma^{t+1}_i$ such that
\begin{align}
\label{convergence_eq1}
    \nonumber
    \lim_{t\rightarrow\infty}\sigma^{t+1}_i=0,\\
    \lim_{t\rightarrow\infty}\gamma^{t+1}_i=0.
\end{align}
When $\Vert\theta^{t+1}_i-\theta^t_i\Vert>\sigma^{t+1}_i$, 
\begin{align}
\label{convergence_eq11}
    a_{i,e}\Vert\theta^{t+1}_i-\theta^t_i\Vert^2-b^{t+1}_{i,e}\Vert\theta^{t+1}_i-\theta^t_i\Vert-c^{t+1}_i>0;
\end{align}
and when $\Vert\theta^{t+1}-\theta^t\Vert>\gamma^{t+1}_i$, 
\begin{align}
\label{convergence_eq12}
    a_{i,p}\Vert\theta^{t+1}-\theta^t\Vert^2-b^{t+1}_{i,p}\Vert\theta^{t+1}-\theta^t\Vert>0.
\end{align}
Next, we show $\lim_{t\rightarrow\infty}\Vert\theta^{t+1}_i-\theta^t_i\Vert=0$ and $\lim_{t\rightarrow\infty}\Vert\theta^{t+1}-\theta^t\Vert=0$ by two steps. 
\begin{enumerate}[1)]
    \item Suppose that there exists $T\ge 0$ such that for all $t\ge T$, the following is true 
    \begin{align}
        \sum_{i\in\mathcal{I}}\Big(&a_{i,e}\Vert\theta^{t+1}_i-\theta^t_i\Vert^2+a_{i,p}\Vert\theta^{t+1}-\theta^t\Vert^2-b^{t+1}_{i,e}\Vert\theta^{t+1}_i-\theta^t_i\Vert-b^{t+1}_{i,p}\Vert\theta^{t+1}-\theta^t\Vert-c^{t+1}_i\Big)>0.
    \end{align}
    It follows that under Assumption \ref{parameterassumption}, using Lemma \ref{L_descent}-\ref{lagrangian_lowerbound},  $\mathcal{L}\big(\{\theta^{t+1}_i,y^{t+1}_i\},\theta^{t+1}\big)$ will monotonically decrease and converges. Thus, we obtain
    \begin{align}
        \lim_{t\rightarrow\infty}\sum_{i\in\mathcal{I}}\Big(&a_{i,e}\Vert\theta^{t+1}_i-\theta^t_i\Vert^2-b^{t+1}_{i,e}\Vert\theta^{t+1}_i-\theta^t_i\Vert+a_{i,p}\Vert\theta^{t+1}-\theta^t\Vert^2-b^{t+1}_{i,p}\Vert\theta^{t+1}-\theta^t\Vert-c^{t+1}_i\Big)=0,
    \end{align}
    which implies that $\Vert\theta^{t+1}_i-\theta^t_i\Vert$ and $\Vert\theta^{t+1}-\theta^t\Vert$ converge to the positive roots of corresponding quadratic functions, i.e., LHS of (\ref{convergence_eq11}) and (\ref{convergence_eq12}), otherwise the limitation will not be 0. Due to (\ref{convergence_eq1}), the positive roots of the above quadratic function converge to 0, which implies
    \begin{align}
        \label{convergence_eq2}
        &\lim_{t\rightarrow\infty}\Vert\theta^{t+1}_i-\theta^t_i\Vert=0,~\forall~i\in\mathcal{I},\\
        \label{convergence_eq3}
        &\lim_{t\rightarrow\infty}\Vert\theta^{t+1}-\theta^t\Vert=0.
    \end{align}
    By Lemma \ref{dualbound} and (\ref{eq:update_y}), we can show that
    \begin{align}
        \label{convergence_eq4}
        &\lim_{t\rightarrow\infty}\Vert y^{t+1}_i- y^t_i\Vert=0,~\forall~i\in\mathcal{I},\\
        \label{convergence_eq5}
        &\lim_{t\rightarrow\infty}\Vert\theta^{t+1}_i-\theta^{t+1}\Vert=0,~\forall~i\in\mathcal{I}.
    \end{align}
    \item Suppose that there exists a sequence $\{t_j\mid j\in\mathbb{N}\}$ such that
    \begin{align}
        \label{convergence_eq14}
        \sum_{i\in\mathcal{I}}\Big(&a_{i,e}\Vert\theta^{t_j+1}_i-\theta^{t_j}_i\Vert^2+a_{i,p}\Vert\theta^{t_j+1}-\theta^{t_j}\Vert^2-b^{t_j+1}_{i,e}\Vert\theta^{t_j+1}_i-\theta^{t_j}_i\Vert-b^{t_j+1}_{i,p}\Vert\theta^{t_j+1}-\theta^{t_j}\Vert-c^{t_j+1}_i\Big)\le0.
    \end{align}
    Due to Assumption \ref{parameterassumption}, the minimum value of the above quadratic function converges to 0, which implies
    \begin{align}
        \lim_{t\rightarrow\infty}\sum_{i\in\mathcal{I}}\Big(&a_{i,e}\Vert\theta^{t_j+1}_i-\theta^{t_j}_i\Vert^2+a_{i,p}\Vert\theta^{t_j+1}-\theta^{t_j}\Vert^2-b^{t_j+1}_{i,e}\Vert\theta^{t_j+1}_i-\theta^{t_j}_i\Vert-b^{t_j+1}_{i,p}\Vert\theta^{t_j+1}-\theta^{t_j}\Vert-c^{t_j+1}_i\Big)=0.
    \end{align}
    Similar to (\ref{convergence_eq2}) and (\ref{convergence_eq3}), we have 
    \begin{align}
        &\lim_{t\rightarrow\infty}\Vert\theta^{t_j+1}_i-\theta^{t_j}_i\Vert=0,~\forall~i\in\mathcal{I},\\
        &\lim_{t\rightarrow\infty}\Vert\theta^{t_j+1}-\theta^{t_j}\Vert=0.
    \end{align}
    We also define a nontrivial sequence $\{t_q\mid q\in\mathbb{N}\}\triangleq\mathbb{N}-\{t_j\mid j\in\mathbb{N}\}$. Note that
    \begin{align}
        \label{convergence_eq15}
        \sum_{i\in\mathcal{I}}\Big(&a_{i,e}\Vert\theta^{t_q+1}_i-\theta^{t_q}_i\Vert^2+a_{i,p}\Vert\theta^{t_q+1}-\theta^{t_q}\Vert^2-b^{t_q+1}_{i,e}\Vert\theta^{t_q+1}_i-\theta^{t_q}_i\Vert-b^{t_q+1}_{i,p}\Vert\theta^{t_q+1}-\theta^{t_q}\Vert-c^{t_q+1}_i\Big)>0.
    \end{align}
    Similar to 1), we have
    \begin{align}
        &\lim_{t\rightarrow\infty}\Vert\theta^{t_q+1}_i-\theta^{t_q}_i\Vert=0,~\forall~i\in\mathcal{I},\\
        &\lim_{t\rightarrow\infty}\Vert\theta^{t_q+1}-\theta^{t_q}\Vert=0.
    \end{align}
    Based on the above observations, for any $\eta>0$, there exists $k\ge 0$ such that when $j>k$ and $q>k$, the following holds true 
    \begin{align}
        &\Vert\theta^{t_j+1}_i-\theta^{t_j}_i\Vert\le\eta,~\forall~i\in\mathcal{I},\\
        &\Vert\theta^{t_j+1}-\theta^{t_j}\Vert\le\eta,\\
        &\Vert\theta^{t_q+1}_i-\theta^{t_q}_i\Vert\le\eta,~\forall~i\in\mathcal{I},\\
        &\Vert\theta^{t_q+1}-\theta^{t_q}\Vert\le\eta.
    \end{align}
    Thus, for any $t>t_k$, we can write
    \begin{align}
        &\Vert\theta^{t+1}_i-\theta^t_i\Vert\le\eta,~\forall~i\in\mathcal{I},\\
        &\Vert\theta^{t+1}-\theta^t\Vert\le\eta,
    \end{align}
    which implies that 
    \begin{align}
        &\lim_{t\rightarrow\infty}\Vert\theta^{t+1}_i-\theta^t_i\Vert=0,~\forall~i\in\mathcal{I},\\
        &\lim_{t\rightarrow\infty}\Vert\theta^{t+1}-\theta^t\Vert=0,
    \end{align}
    thereby (\ref{convergence_eq2})-(\ref{convergence_eq5}) hold.
\end{enumerate}
Using the optimality condition of (\ref{eq:update_thetai_inex}) leads to
\begin{align}
    \label{convergence_eq6}
    w_i\tilde{\nabla}F_i(\theta^{t+1})+ y^{t+1}_i=0,
\end{align}
where $\tilde{\nabla}F_i(\theta^{t+1})\triangleq \nabla f^q_i(\phi^{t+1}_i)-\alpha g^{t+1}_i$. For each $i\in\mathcal{I}$, we derive an upper bound of $\Vert\nabla_{\theta^{t+1}_i}\mathcal{L}\big(\{\theta^{t+1}_i,y^{t+1}_i\},\theta^{t+1}\big)\Vert$ as
\begin{align}
    \label{convergence_eq7}
    \nonumber
    &\Vert\nabla_{\theta^{t+1}_i}\mathcal{L}\big(\{\theta^{t+1}_i,y^{t+1}_i\},\theta^{t+1}\big)\Vert\\
    \nonumber
    =&\Vert w_i\nabla F_i(\theta^{t+1}_i)+ y^{t+1}_i+\rho_i(\theta^{t+1}_i-\theta^{t+1})\Vert\\
    \nonumber
    \le&\Vert w_i\nabla  F_i(\theta^{t+1}_i)-w_i\tilde{\nabla}F_i(\theta^{t+1})\Vert+\rho_i\Vert\theta^{t+1}_i-\theta^{t+1}\Vert+\Vert w_i\tilde{\nabla}F_i(\theta^{t+1})+ y^{t+1}_i\Vert\\
    \nonumber
    \le& \Vert w_i\nabla  F_i(\theta^{t+1}_i)-w_i\nabla F_i(\theta^{t+1})\Vert+\Vert w_i\nabla F_i(\theta^{t+1})-w_i\tilde{\nabla}F_i(\theta^{t+1})\Vert+\rho_i\Vert\theta^{t+1}_i-\theta^{t+1}\Vert\\
    \le&\Vert w_i\nabla  F_i(\theta^{t+1}_i)-w_i\nabla F_i(\theta^{t+1})\Vert+\rho_i\Vert\theta^{t+1}_i-\theta^{t+1}\Vert+\alpha w_i\zeta_i\beta^2_i\delta_{i,t+1}.
\end{align}
Taking limitation of $t\rightarrow\infty$ on both sides of (\ref{convergence_eq7}), and using Assumption \ref{parameterassumption} and (\ref{convergence_eq5}) yields
\begin{align}
    \label{convergence_eq8}
    \Vert\nabla_{\theta^*_i}\mathcal{L}\big(\{\theta^*_i,y^*_i\},\theta^*\big)\Vert=0,~\forall~i\in\mathcal{I}.
\end{align}
Note that
\begin{align}
\label{convergence_eq9}
    \Vert\nabla_{\theta^{t+1}}\mathcal{L}\big(\{\theta^{t+1}_i,y^{t+1}_i\},\theta^{t+1}\big)\Vert\le \sum_{i\in\mathcal{I}}\Big(\rho_i\Vert\theta^{t+1}_i-\theta^t_i\Vert+w_i\nu_i\Vert\theta^{t+1}-\theta^t\Vert+2\alpha w_i\zeta_i\beta^2_i\delta_{i,t}\Big).
\end{align}
Similarly, we obtain
\begin{align}
\label{convergence_eq10}
    \nabla_{\theta^*}\mathcal{L}\big(\{\theta^*_i,y^*_i\},\theta^*\big)=0.
\end{align}
Finally, we bound $\big\Vert \sum_{i\in\mathcal{I}}w_i\nabla F_i(\theta^{t+1})+\lambda \nabla D_h(\theta^{t+1},\theta_p)\big\Vert$ by
\begin{align}
    \label{convergence_eq13}
    \nonumber
    &\Big\Vert \sum_{i\in\mathcal{I}}w_i\nabla F_i(\theta^{t+1})+\lambda \nabla D_h(\theta^{t+1},\theta_p)\Big\Vert\\
    \nonumber
    \le& \Big\Vert \sum_{i\in\mathcal{I}}w_i\nabla F_i(\theta^{t+1})+\lambda \nabla D_h(\theta^{t+1},\theta_p)-\sum_{i\in\mathcal{I}}\nabla_{\theta^{t+1}_i}\mathcal{L}\big(\{\theta^{t+1}_i,y^t_i\},\theta^{t+1}\big)\Big\Vert+\Big\Vert\sum_{i\in\mathcal{I}}\nabla_{\theta^{t+1}_i}\mathcal{L}\big(\{\theta^{t+1}_i,y^t_i\},\theta^{t+1}\big) \Big\Vert\\
    \nonumber
    =&\Big\Vert\sum_{i\in\mathcal{I}}w_i\big(\nabla F_i(\theta^{t+1})-\nabla F_i(\theta^{t+1}_i)\big)+\lambda \nabla D_h(\theta^{t+1},\theta_p)-\sum_{i\in\mathcal{I}}\big(y^t_i+\rho_i(\theta^{t+1}_i-\theta^{t+1})\big)\Big\Vert\\
    \nonumber
    &+\Big\Vert\sum_{i\in\mathcal{I}}\nabla_{\theta^{t+1}_i}\mathcal{L}\big(\{\theta^{t+1}_i,y^t_i\},\theta^{t+1}\big) \Big\Vert\\
    \le& \sum_{i\in\mathcal{I}}w_i\nu_i\Vert\theta^{t+1}-\theta^{t+1}_i\Vert+\Vert \nabla_{\theta^{t+1}}\mathcal{L}\big(\{\theta^{t+1}_i,y^t_i\},\theta^{t+1}\big)\Vert+\sum_{i\in\mathcal{I}}\Vert\nabla_{\theta^{t+1}_i}\mathcal{L}\big(\{\theta^{t+1}_i,y^t_i\},\theta^{t+1}\big)\Vert.
\end{align}
Taking limitation of (\ref{convergence_eq13}) by $t\rightarrow\infty$, and combining (\ref{convergence_eq5}), (\ref{convergence_eq8}) and (\ref{convergence_eq10}) yield part (i).

Next, we prove part (ii), Summing up the Inequality (\ref{L_descent_ineq}) from $t=0$ to $T$ and taking a limitation on $T$, there exist some positive constants $a_2$ and $a_1$ corresponding to $\rho_i$ such that
\begin{align}
    \sum^\infty_{t=0}z^t\le \mathcal{L}\big(\{\theta^0_i,y^0_i\},\theta^0\big)-\mathcal{L}\big(\{\theta^*_i,y^*_i\},\theta^*\big)<\infty,
\end{align}
where $z^t$ is denoted by 
\begin{align}
    z^t\triangleq& \underbrace{a_2\sum_{i\in\mathcal{I}}\Big(\Vert\theta^{t+1}_i-\theta^t_i\Vert^2+\Vert\theta^{t+1}-\theta^t\Vert^2\Big)}_{z^t_2}-\underbrace{a_1\sum_{i\in\mathcal{I}}\Big(\delta_{i,t+1}\big(\Vert\theta^{t+1}_i-\theta^t_i\Vert+\Vert\theta^{t+1}-\theta^t\Vert\big)+\delta^2_{i,t}\Big)}_{z^t_1}\\
    =&z^t_2-z^t_1.
\end{align}
$z^t_2$, $z^t_1$ are denoted as the first and second sum terms, respectively. Due to Assumption \ref{parameterassumption} and Theorem \ref{convergence}, it is easy to see that there exists some positive constant $a_3$ such that the following holds true
\begin{align}
    \sum^{\infty}_{t=0}z^t_1    &=a_1\sum^{\infty}_{t=0}\sum_{i\in\mathcal{I}}\Big(\delta_{i,t+1}\big(\Vert\theta^{t+1}_i-\theta^t_i\Vert+\Vert\theta^{t+1}-\theta^t\Vert\big)+\delta^2_{i,t}\Big)\\
    &\le a_1\sum_{i\in\mathcal{I}}\Big(\sum^{\infty}_{t=0}2\delta_{i,t+1}+\sum^{\infty}_{t=0}\delta^2_{i,t}\Big)+a_3\\
    &< \infty.
\end{align}
Hence, we have
\begin{align}
    \sum^{\infty}_{t=1}z^t_2\le b<\infty,~\text{for some constant}~b>0.
\end{align}
Due to $\sum^{\infty}_{t=0}z^t_1<\infty$, it is easy to see that the augmented Lagrangian function is upper bounded and $\theta^t$ is finite, which shows $\{\theta^t\}$ has at least one limit point. Denoting $T^2(\epsilon)\triangleq\min\{t\mid \Vert\theta^{t+1}-\theta^t\Vert^2\le\epsilon,t\ge0\}$ and $T^2_i(\epsilon)\triangleq\min\{t\mid \Vert\theta^{t+1}_i-\theta^t_i\Vert^2\le\epsilon,t\ge0\}$, then we can write
\begin{align}
    \label{complexity_eq1}
    &a_2 T^2(\epsilon)\epsilon\le\sum^{\infty}_{t=1}z^t_2\le b,\\
    \label{complexity_eq2}
    &a_2 T^2_i(\epsilon)\epsilon\le\sum^{\infty}_{t=1}z^t_2\le b.
\end{align}
That is, $T^2(\epsilon)=\mathcal{O}(1/\epsilon)$ and $T^2_i(\epsilon)=\mathcal{O}(1/\epsilon)$ hold. Further, we denote $T(\epsilon)\triangleq\min\{t\mid\Vert\theta^{t+1}-\theta^t\Vert\le\epsilon,t\ge0\}$ and $T_i(\epsilon)\triangleq\min\{t\mid\Vert\theta^{t+1}_i-\theta^t_i\Vert\le\epsilon,t\ge0\}$. Based on (\ref{complexity_eq1}) and (\ref{complexity_eq2}), we have $T(\epsilon)=\mathcal{O}(1/\epsilon^2)$ and $T_i(\epsilon)=\mathcal{O}(1/\epsilon^2)$. Due to Assumption \ref{parameterassumption}, combining (\ref{dualbound_ineq_1}) and (\ref{convergence_eq7}) yields
\begin{align}
    \Vert\nabla_{\theta^{t+1}_i}\mathcal{L}\big(\{\theta^{t+1}_i,y^{t+1}_i\},\theta^{t+1}\big)\Vert
    \le\frac{(w_i\nu_i+\rho_i)w_i\nu_i}{\rho_i}\cdot\Vert\theta^{t+1}-\theta^t\Vert+\frac{(2w_i\nu_i+3\rho_i)\alpha w_i\zeta_i\beta^2_i}{\rho_i}\cdot\delta_{i,t}.
\end{align}
Similarly, it is easy to see that the convergence rate of $\delta_{i,t}$ is $\mathcal{O}(1/\epsilon)$. Therefore, for any $\epsilon>0$, Algorithm \ref{alg} finds a point $(\{\theta_i\},\theta, y)$ with $\Vert\nabla_{\theta_i}\mathcal{L}\big(\{\theta_i,y_i\},\theta\big)\Vert\le\epsilon$, after at most $\mathcal{O}(1/\epsilon^2)$. In the same way, it can be shown that 
\begin{align}
     \Vert\nabla_{\theta^{t+1}}\mathcal{L}\big(\{\theta^{t+1}_i,y^{t+1}_i\},\theta^{t+1}\big)\Vert\le \sum_{i\in\mathcal{I}}\Big(\rho_i\Vert\theta^{t+1}_i-\theta^t_i\Vert+w_i\nu_i\Vert\theta^{t+1}-\theta^t\Vert+2\alpha w_i\zeta_i\beta^2_i\delta_{i,t}\Big),
\end{align}
which implies that $\Vert\nabla_{\theta^{t+1}}\mathcal{L}\big(\{\theta^{t+1}_i,y^{t+1}_i\},\theta^{t+1}\big)\Vert$ has the same communication complexity as $\Vert\nabla_{\theta^{t+1}_i}\mathcal{L}\big(\{\theta^{t+1}_i,y^{t+1}_i\},\theta^{t+1}\big)\Vert$. Moreover, it is easy to show that $\Vert\nabla_{y_i}\mathcal{L}\big(\{\theta_i,y_i\},\theta\big)\Vert$ also the same complexity. Combining with (\ref{convergence_eq13}), it completes the proof.

\section{Proof of Corollary \ref{coro:data_impact}}
\label{proof:coro}
First, we prove the following result
\begin{align}
    \label{eq:coro_1}
    \mathbb{E}\Big\{\big\Vert \nabla L_i\big(\theta-\alpha\nabla L_i(\theta)\big)-\nabla L_i\big(\theta-\alpha\nabla L_i(\theta,\mathcal{D}^s_i),\mathcal{D}^q_i\big)\big\Vert\Big\}\le \frac{\alpha\mu_i\sigma^g_i}{\sqrt{D^s_i}}+\frac{\sigma^g_i}{\sqrt{D^q_i}}.
\end{align}
We can write
\begin{align}
    \label{eq:eg_1}
    &\mathbb{E}\Big\{\big\Vert \nabla L_i\big(\theta-\alpha\nabla L_i(\theta)\big)-\nabla L_i\big(\theta-\alpha\nabla L_i(\theta,\mathcal{D}^s_i),\mathcal{D}^q_i\big)\big\Vert\Big\}\nonumber\\
    \le&\underbrace{\mathbb{E}\Big\{\big\Vert \nabla L_i\big(\theta-\alpha\nabla L_i(\theta)\big)-\nabla L_i\big(\theta-\alpha\nabla L_i(\theta,\mathcal{D}^s_i)\big)\big\Vert\Big\}}_\text{(a)}+\underbrace{\mathbb{E}\Big\{\big\Vert \nabla L_i\big(\theta-\alpha\nabla L_i(\theta,\mathcal{D}^s_i)\big)-\nabla L_i\big(\theta-\alpha\nabla L_i(\theta,\mathcal{D}^s_i),\mathcal{D}^q_i\big)\big\Vert\Big\}}_\text{(b)}.
\end{align}
For (b), due to Assumption \ref{bounded_var}, we have:
\begin{align}
    \label{eq:eg_2}
    \text{(c)}\le&\sqrt{\mathbb{E}\Big\{\big\Vert \nabla L_i(\phi_i)-\frac{1}{D^q_i}\sum^{D^q_i}_{j=1}\nabla l\big(\phi_i,(\mathbf{x}^j_i,\mathbf{y}^j_i)\big)\big\Vert^2\Big\}}\nonumber\\
    =&\sqrt{\frac{1}{(D^q_i)^2}\sum^{D^q_i}_{j=1}\mathbb{E}\Big\{\big\Vert \nabla L_i(\phi_i)-\nabla l\big(\phi_i,(\mathbf{x}^j_i,\mathbf{y}^j_i)\big)\big\Vert^2\Big\}}\nonumber\\
    \le&\frac{\sigma^g_i}{\sqrt{D^q_i}},
\end{align}
where $\phi_i=\theta-\alpha\nabla L_i(\theta,\mathcal{D}^s_i)$. For (a), based on Assumption \ref{Lsmooth}, it can be bounded by:
\begin{align}
    \label{eq:eg_3}
    \text{(b)}\le\mathbb{E}\Big\{\alpha\mu_i\big\Vert \nabla L_i(\theta,\mathcal{D}^s_i)-\nabla L_i(\theta)\big\Vert\Big\}
    \le\frac{\alpha\mu_i\sigma^g_i}{\sqrt{D^s_i}}.
\end{align}
Plugging \eqref{eq:eg_2} and \eqref{eq:eg_3} into \eqref{eq:eg_1}, \eqref{eq:coro_1} holds. Based on \eqref{eq:coro_1}, the following holds
\begin{align}
    \label{eq:convergence_expected}
     &\mathbb{E}\Big\{\big\Vert \sum_{i\in\mathcal{I}} w_i \nabla L_i\big(\theta_{\epsilon}-\alpha\nabla L_i(\theta_{\epsilon})\big)+\lambda \nabla D_h(\theta_{\epsilon},\theta_p)\big\Vert\Big\}\nonumber\\
     \le& \sum_{i\in\mathcal{I}}w_i \mathbb{E}\Big\{\big\Vert \nabla L_i\big(\theta_{\epsilon}-\alpha\nabla L_i(\theta_{\epsilon})\big)-\nabla L_i\big(\theta_{\epsilon}-\alpha\nabla L_i(\theta_{\epsilon},\mathcal{D}^s_i),\mathcal{D}^q_i\big)\big\Vert\Big\}+\mathbb{E}\Big\{\big\Vert \sum_{i\in\mathcal{I}} w_i \nabla L_i\big(\theta_{\epsilon}-\alpha\nabla L_i(\theta_{\epsilon},\mathcal{D}^s_i),\mathcal{D}^q_i\big)+\lambda \nabla D_h(\theta_{\epsilon},\theta_p)\big\Vert\Big\}\nonumber\\
     \le&\epsilon+ \sum_{i\in\mathcal{I}}w_i\sigma^g_i\Big(\frac{\alpha\mu_i}{\sqrt{D^s_i}}+\frac{1}{\sqrt{D^q_i}}\Big),
\end{align}
thereby completing the proof.

\section{Proof of Theorem \ref{performance}}
\label{proof:performance}

Let $\theta_{\epsilon}$ denote the $\epsilon$-FOSP obtained by Algorithm \ref{alg}, which satisfies that
\begin{align}
    \label{performance_eq2}
    &\Vert \sum_{i\in\mathcal{I}}w_i\nabla F_i(\theta_{\epsilon})+ \lambda \nabla D_h(\theta_{\epsilon},\theta_p)\Vert\le\epsilon,
\end{align}
for some $\epsilon>0$. Then, for the learned model parameter $\theta_{\epsilon}$, $\mathbb{E}\big\{\Vert \nabla F_m(\theta_{\epsilon})+\lambda\nabla D_h(\theta_{\epsilon},\theta_p)\Vert\big\}$ can be upper bounded by
\begin{align}
    \label{performance_eq4}
    \nonumber
    \mathbb{E}\big\{\Vert \nabla F_m(\theta_{\epsilon})+\lambda\nabla D_h(\theta_{\epsilon},\theta_p)\Vert\big\}=&\mathbb{E}\Big\{\big\Vert\sum_{i\in\mathcal{I}}w_i\nabla F_i(\theta_{\epsilon})+ \lambda \nabla D_h(\theta_{\epsilon},\theta_p)+\sum_{i\in\mathcal{I}}w_i\big(\nabla F_m(\theta_{\epsilon})-\nabla F_i(\theta_{\epsilon})\big)\big\Vert\Big\}\\
    \le&\epsilon+\mathbb{E}\big\{\underbrace{\Vert\nabla F_m(\theta_{\epsilon})-\sum_{i\in\mathcal{I}}w_i\nabla F_i(\theta_{\epsilon})\Vert}_{\text{(a)}}\big\}.
\end{align}
Due to Assumption \ref{similarity}, for $i\in\mathcal{I}\cup\{m\}$ and $\mathcal{D}_i$ with respect to $P_i$, we can write
\begin{align}
    \label{performance_9}
    \mathbb{E}\big\{\big\Vert \nabla L_i(\theta,\mathcal{D}_i)-\nabla L_i(\theta)\big\Vert\big\}\le\frac{\sigma^g_i}{\sqrt{D_i}}.
\end{align}
Based on (\ref{performance_9}), observe that
\begin{align}
    \label{performance_10}
    \nonumber
    &\mathbb{E}\big\{\Vert \nabla L_m(\theta,\mathcal{D}_m)-\nabla L_i(\theta,\mathcal{D}_i)\Vert\big\}\\
    \nonumber
    \le&\mathbb{E}\big\{\Vert \nabla L_m(\theta,\mathcal{D}_m)-\nabla L_m(\theta)\Vert\big\}+\mathbb{E}\big\{\Vert \nabla L_m(\theta)-\nabla L_i(\theta)\Vert\big\}+\mathbb{E}\big\{\Vert \nabla L_i(\theta,\mathcal{D}_i)-\nabla L_i(\theta)\Vert\big\}\\
    \le&\psi^g_i+\frac{\sigma^g_i}{\sqrt{D_m}}+\frac{\sigma^g_i}{\sqrt{D_i}}.
\end{align}
Similarly, we can show that
\begin{align}
    \label{performance_11}
    \mathbb{E}\big\{\Vert \nabla^2 L_m(\theta,\mathcal{D}_m)-\nabla^2 L_i(\theta,\mathcal{D}_i)\Vert\big\}\le\psi^h_i+\frac{\sigma^h_i}{\sqrt{D_m}}+\frac{\sigma^h_i}{\sqrt{D_i}}.
\end{align}
Thus, for (a), we obtain
\begin{align}
    \label{performance_eq5}
    \nonumber
    &\Big\Vert\nabla F_m(\theta)-\sum_{i\in\mathcal{I}}w_i\nabla F_i(\theta)\Big\Vert\\
    \nonumber
    =&\Big\Vert \nabla_\theta L_m\big(\theta-\alpha\nabla L_m(\theta,\mathcal{D}^{s}_m),\mathcal{D}^q_m\big)-\sum_{i\in\mathcal{I}}w_i\nabla_\theta L_i\big(\theta-\alpha\nabla L_i(\theta,\mathcal{D}^{s}_i),\mathcal{D}^q_i\big)\Big\Vert\\
    \nonumber
    \le&\underbrace{\Big\Vert\nabla L_m\big(\theta-\alpha\nabla L_m(\theta,\mathcal{D}^{s}_m),\mathcal{D}^q_m\big)-\sum_{i\in\mathcal{I}}w_i\nabla L_i\big(\theta-\alpha\nabla L_i(\theta,\mathcal{D}^{s}_i),\mathcal{D}^q_i\big)\Big\Vert}_\text{(b)}\\
    &+\underbrace{\alpha\Big\Vert\nabla^2 L_m(\theta,\mathcal{D}^{s}_m)\nabla L_m\big(\theta-\alpha\nabla L_m(\theta,\mathcal{D}^{s}_m),\mathcal{D}^q_m\big)-\sum_{i\in\mathcal{I}}w_i\nabla^2 L_i(\theta,\mathcal{D}^{s}_i)\nabla L_i\big(\theta-\alpha\nabla L_i(\theta,\mathcal{D}^{s}_i),\mathcal{D}^q_i\big)\Big\Vert}_\text{(c)}.
\end{align}
Based on Assumption \ref{Lsmooth} and \ref{similarity}, we have
\begin{align}
    \label{performance_eq6}
    \nonumber
    \mathbb{E}\{\text{(b)}\}\le&\mathbb{E}\Big\{\big\Vert\nabla L_m\big(\theta-\alpha\nabla L_m(\theta,\mathcal{D}^{s}_m),\mathcal{D}^q_m\big)-\sum_{i\in\mathcal{I}}w_i\nabla L_i\big(\theta-\alpha\nabla L_m(\theta,\mathcal{D}^{s}_m),\mathcal{D}^q_i\big)\big\Vert\\
    \nonumber
    &+\big\Vert\sum_{i\in\mathcal{I}}w_i\nabla L_i\big(\theta-\alpha\nabla L_m(\theta,\mathcal{D}^{s}_m),\mathcal{D}^q_i\big)-\sum_{i\in\mathcal{I}}w_i\nabla L_i\big(\theta-\alpha\nabla L_i(\theta,\mathcal{D}^{s}_i),\mathcal{D}^q_i\big)\big\Vert\Big\}\\
    \le& \underbrace{\sum_{i\in\mathcal{I}}w_i\Bigg(\psi^g_i+\frac{\sigma^g_i}{\sqrt{D^{q}_m}}+\frac{\sigma^g_i}{\sqrt{D^{q}_i}}\Bigg)+\alpha\mu\sum_{i\in\mathcal{I}}w_i\Bigg(\psi^g_i+\frac{\sigma^g_i}{\sqrt{D^{s}_m}}+\frac{\sigma^g_i}{\sqrt{D^{s}_i}}\Bigg)}_\text{(d)}.
\end{align}
Similarly, for (c), it follows that
\begin{align}
    \label{performance_eq7}
    \nonumber
    \mathbb{E}\{\text{(c)}\}
    \le&\alpha\mathbb{E}\Big\{\big\Vert\nabla^2 L_m(\theta,\mathcal{D}^{s}_m)\nabla L_m\big(\theta-\alpha\nabla L_m(\theta,\mathcal{D}^{s}_m),\mathcal{D}^q_m\big)\\
    \nonumber
    &-\sum_{i\in\mathcal{I}}w_i\nabla^2 L_i(\theta,\mathcal{D}^{s}_i)\nabla L_m\big(\theta-\alpha\nabla L_m(\theta,\mathcal{D}^{s}_m),\mathcal{D}^q_m\big)\big\Vert\Big\}\\
    \nonumber
    &+\alpha\mathbb{E}\Big\{\big\Vert\sum_{i\in\mathcal{I}}w_i\nabla^2 L_i(\theta,\mathcal{D}^{s}_i)\nabla L_m\big(\theta-\alpha\nabla L_m(\theta,\mathcal{D}^{s}_m),\mathcal{D}^q_m\big)\\
    \nonumber
    &-\sum_{i\in\mathcal{I}}w_i\nabla^2 L_i(\theta,\mathcal{D}^{s}_i)\nabla L_i\big(\theta-\alpha\nabla L_i(\theta,\mathcal{D}^{s}_i),\mathcal{D}^q_i\big)\big\Vert\Big\}\\
    \le& \alpha\beta_m\sum_{i\in\mathcal{I}}w_i\Bigg(\psi^h_i+\frac{\sigma^h_i}{\sqrt{D^{s}_m}}+\frac{\sigma^h_i}{\sqrt{D^{s}_i}}\Bigg)+\alpha\mu\cdot\text{(d)}.
\end{align}
Plugging (\ref{performance_eq6}) and (\ref{performance_eq7}) in (\ref{performance_eq5}) yields
\begin{align}
    \label{performance_eq8}
    \nonumber
    \mathbb{E}\{\text{(a)}\}\le&\alpha\beta_m\sum_{i\in\mathcal{I}}w_i\Bigg(\psi^h_i+\frac{\sigma^h_i}{\sqrt{D^{s}_m}}+\frac{\sigma^h_i}{\sqrt{D^{s}_i}}\Bigg)+(\alpha\mu+1)(\alpha\mu)\sum_{i\in\mathcal{I}}w_i\Bigg(\psi^g_i+\frac{\sigma^g_i}{\sqrt{D^{q}_m}}+\frac{\sigma^g_i}{\sqrt{D^{q}_i}}\Bigg)\\
    \nonumber
    &+(\alpha\mu+1)\sum_{i\in\mathcal{I}}w_i\Bigg(\psi^g_i+\frac{\sigma^g_i}{\sqrt{D^{s}_m}}+\frac{\sigma^g_i}{\sqrt{D^{s}_i}}\Bigg)\\
    \nonumber
    =&\alpha\beta_m\sum_{i\in\mathcal{I}}w_i\psi^h_i+(\alpha\mu+1)^2\sum_{i\in\mathcal{I}}w_i\psi^g_i+\alpha\beta_m\sum_{i\in\mathcal{I}}w_i\Bigg(\frac{\sigma^h_i}{\sqrt{D^{s}_m}}+\frac{\sigma^h_i}{\sqrt{D^{s}_i}}\Bigg)\\
    &+(\alpha\mu+1)(\alpha\mu)\sum_{i\in\mathcal{I}}w_i\Bigg(\frac{\sigma^g_i}{\sqrt{D^{q}_m}}+\frac{\sigma^g_i}{\sqrt{D^{q}_i}}\Bigg)+(\alpha\mu+1)\sigma^g_i\sum_{i\in\mathcal{I}}w_i\Bigg(\frac{\sigma^g_i}{\sqrt{D^{s}_m}}+\frac{\sigma^g_i}{\sqrt{D^{s}_i}}\Bigg).
\end{align}
Therefore, plugging (\ref{performance_eq8}) in (\ref{performance_eq4}), we obtain
\begin{align}
    \mathbb{E}\big\{\Vert \nabla F_m(\theta_{\epsilon})+\lambda\nabla D_h(\theta_{\epsilon},\theta_p)\Vert\big\}\le&\epsilon+\alpha\beta_m\sum_{i\in\mathcal{I}}w_i\psi^h_i+(\alpha\mu+1)^2\sum_{i\in\mathcal{I}}w_i\psi^g_i+\alpha\beta_m\sum_{i\in\mathcal{I}}w_i\Bigg(\frac{\sigma^h_i}{\sqrt{D^{s}_m}}+\frac{\sigma^h_i}{\sqrt{D^{s}_i}}\Bigg)\\
    \nonumber
    &+(\alpha\mu+1)(\alpha\mu)\sum_{i\in\mathcal{I}}w_i\Bigg(\frac{\sigma^g_i}{\sqrt{D^{q}_m}}+\frac{\sigma^g_i}{\sqrt{D^{q}_i}}\Bigg)+(\alpha\mu+1)\sum_{i\in\mathcal{I}}w_i\Bigg(\frac{\sigma^g_i}{\sqrt{D^{s}_m}}+\frac{\sigma^g_i}{\sqrt{D^{s}_i}}\Bigg),
\end{align}
thereby completing the proof. 

\section{Proof of Lemma \ref{lem:forgetting}}
\label{proof:forgetting}
From Corollary \ref{coro:data_impact}, the following holds
\begin{align}
    \nonumber
    &\mathbb{E}\Big\{\Big\Vert \sum_{i\in\mathcal{I}}w_i \nabla L_i\big(\theta_{\epsilon}-\alpha\nabla L_i(\theta_{\epsilon})\big)+\lambda \nabla D_h(\theta_{\epsilon},\theta_p)\Big\Vert\Big\}\le\epsilon+\sum_{i\in\mathcal{I}}w_i\sigma^g_i\Bigg(\frac{\alpha\mu_i}{\sqrt{D^{s}_i}}+\frac{1}{\sqrt{D^{q}_i}}\Bigg).
\end{align}
Thus, based on Assumption \ref{Lsmooth}, we can obtain
\begin{align}
    \label{eq:forgetting_l2_1}
     \lambda\Vert \nabla D_h(\theta_{\epsilon},\theta_p)\Vert=&\Big\Vert\sum_{i\in\mathcal{I}}w_i \nabla L_i\big(\theta_{\epsilon}-\alpha\nabla L_i(\theta_{\epsilon})\big)+\lambda \nabla D_h(\theta_{\epsilon},\theta_p)-\sum_{i\in\mathcal{I}}w_i \nabla L_i\big(\theta_{\epsilon}-\alpha\nabla L_i(\theta_{\epsilon})\big)\Big\Vert\nonumber\\
    \le&\Big\Vert\sum_{i\in\mathcal{I}}w_i \nabla L_i\big(\theta_{\epsilon}-\alpha\nabla L_i(\theta_{\epsilon})\big)+\lambda \nabla D_h(\theta_{\epsilon},\theta_p)\Big\Vert+\Big\Vert\sum_{i\in\mathcal{I}}w_i \nabla L_i\big(\theta_{\epsilon}-\alpha\nabla L_i(\theta_{\epsilon})\big)\Big\Vert\nonumber\\
    \le&\epsilon+\sum_{i\in\mathcal{I}}w_i\Bigg(\beta_i+\frac{\alpha\mu_i\sigma^g_i}{\sqrt{D^{s}_i}}+\frac{\sigma^g_i}{\sqrt{D^{q}_i}}\Bigg).
\end{align}
Due to the convexity of $D_h(\cdot,\theta_p)$ and $D_h(\theta_p,\theta_p)=0$, we have
\begin{align}
    \mathbb{E}\big\{D_h(\theta,\theta_p)\big\}\le\frac{1}{\lambda}\left(\epsilon+\sum_{i\in\mathcal{I}}w_i\left(\beta_i+\frac{\alpha\mu_i\sigma^g_i}{\sqrt{D^s_i}}+\frac{\sigma^g_i}{\sqrt{D^q_i}}\right)\right)\Vert\theta_{\epsilon}-\theta_p\Vert.
\end{align}
Equation \eqref{eq:forgetting_general_strong_ex} can be directly derived via \eqref{eq:strongly_convex_primal_ex}.

\end{document}